\theoremstyle{plain}
\newtheorem{theorem}{Theorem}[section]
\newtheorem{lemma}[theorem]{Lemma}
\theoremstyle{definition}
\newtheorem{definition}[theorem]{Definition}
\theoremstyle{remark}
\definecolor{c1}{HTML}{44cef6}
\def\Std{\text{Std}}
\icmltitlerunning{What is Essential for Unseen Goal Generalization of Offline Goal-conditioned RL?}
\begin{document}

\twocolumn[
\icmltitle{What is Essential for Unseen Goal Generalization of \\ Offline Goal-conditioned RL?}

% It is OKAY to include author information, even for blind
% submissions: the style file will automatically remove it for you
% unless you've provided the [accepted] option to the icml2023
% package.

% List of affiliations: The first argument should be a (short)
% identifier you will use later to specify author affiliations
% Academic affiliations should list Department, University, City, Region, Country
% Industry affiliations should list Company, City, Region, Country

% You can specify symbols, otherwise they are numbered in order.
% Ideally, you should not use this facility. Affiliations will be numbered
% in order of appearance and this is the preferred way.
\icmlsetsymbol{equal}{*}

\begin{icmlauthorlist}
\icmlauthor{Rui Yang}{yyy}
\icmlauthor{Yong Lin}{yyy}
\icmlauthor{Xiaoteng Ma}{ttt}
\icmlauthor{Hao Hu}{ttt}
\icmlauthor{Chongjie Zhang}{ttt}
\icmlauthor{Tong Zhang}{yyy}
% \icmlauthor{Firstname7 Lastname7}{comp}
% %\icmlauthor{}{sch}
% \icmlauthor{Firstname8 Lastname8}{sch}
% \icmlauthor{Firstname8 Lastname8}{yyy,comp}
%\icmlauthor{}{sch}
%\icmlauthor{}{sch}
\end{icmlauthorlist}

\icmlaffiliation{yyy}{The Hong Kong University of Science and Technology}
\icmlaffiliation{ttt}{Tsinghua University}

\icmlcorrespondingauthor{Tong Zhang}{tongzhang@tongzhang-ml.org}

% You may provide any keywords that you
% find helpful for describing your paper; these are used to populate
% the "keywords" metadata in the PDF but will not be shown in the document
\icmlkeywords{Machine Learning, ICML}
% \icmlkeywords{Preprint}

\vskip 0.3in
]

% this must go after the closing bracket ] following \twocolumn[ ...

% This command actually creates the footnote in the first column
% listing the affiliations and the copyright notice.
% The command takes one argument, which is text to display at the start of the footnote.
% The \icmlEqualContribution command is standard text for equal contribution.
% Remove it (just {}) if you do not need this facility.

%\printAffiliationsAndNotice{}  % leave blank if no need to mention equal contribution
\printAffiliationsAndNotice{} % \icmlEqualContribution otherwise use the standard text.

\begin{abstract}
Offline goal-conditioned RL (GCRL) offers a way to train general-purpose agents from fully offline datasets. In addition to being conservative within the dataset, the generalization ability to achieve unseen goals is another fundamental challenge for offline GCRL. However, to the best of our knowledge, this problem has not been well studied yet. In this paper, we study out-of-distribution (OOD) generalization of offline GCRL both theoretically and empirically to identify factors that are important. In a number of experiments, we observe that weighted imitation learning enjoys better generalization than pessimism-based offline RL method. Based on this insight, we derive a theory for OOD generalization, which characterizes several important design choices. We then propose a new offline GCRL method, \textbf{G}eneralizable \textbf{O}ffline go\textbf{A}l-condi\textbf{T}ioned RL (\textbf{GOAT}), by combining the findings from our theoretical and empirical studies. On a new benchmark containing 9 independent identically distributed (IID) tasks and 17 OOD tasks, GOAT outperforms current state-of-the-art methods by a large margin. 
\end{abstract}

\begin{figure*}[h]
    \centering
\includegraphics[width=1\linewidth]{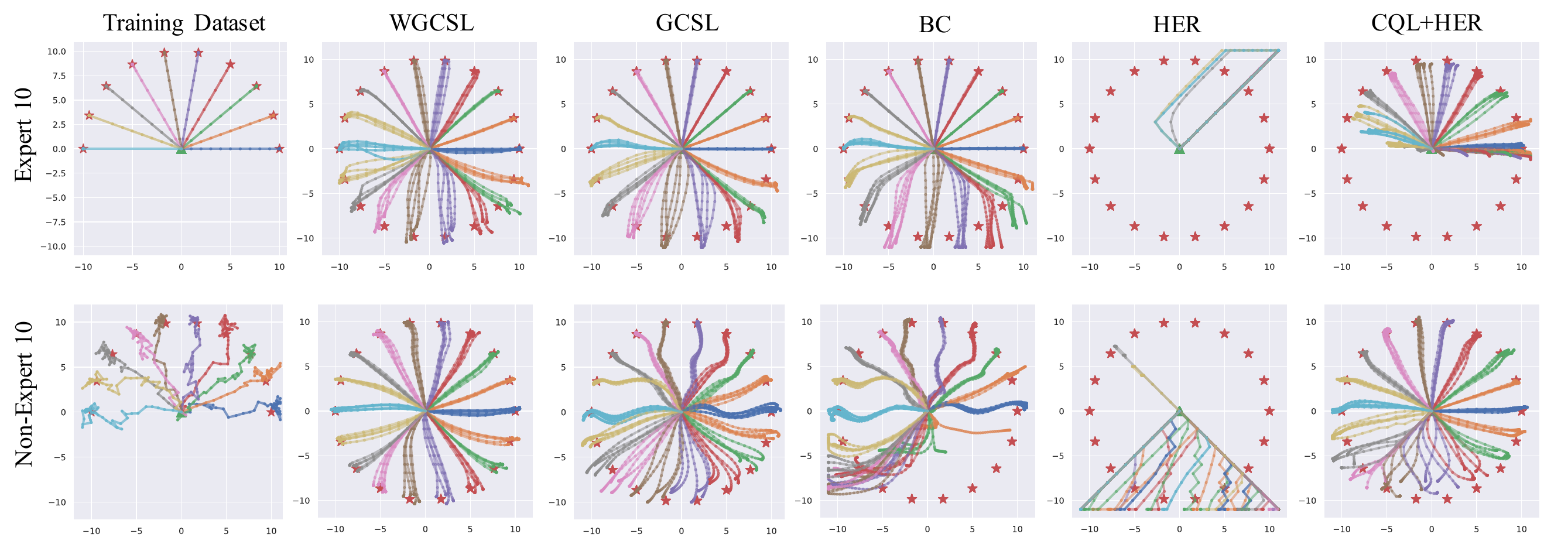}
% \vspace{-5pt}
    \caption{Training datasets and trajectories generated by different agents trained on ``Expert 10" and ``Non-Expert 10" datasets.}
    \label{fig:visual_trajs_example}
\end{figure*}

\section{Introduction}
Deep reinforcement learning (DRL) makes it possible for a learning agent to achieve superhuman performance on a range of challenging tasks \cite{silver2016mastering, silver2018general,vinyals2019grandmaster,li2020suphx}. However, recent studies have found that DRL is prone to overfitting the training tasks and is sensitive to environmental changes \cite{cobbe2019quantifying, wang2020improving,han2021learning,kirk2023survey}. Goal-conditioned reinforcement learning (GCRL) is gaining increasing attention because it enables learning general-purpose decision-making rather than overfitting to a single task \cite{andrychowicz2017hindsight,ghosh2019learning,li2020generalized}. Particularly, offline GCRL \cite{chebotar2021actionable,yang2022rethinking}, which learns as many skills as possible from previously collected datasets without any exploration in the environment, is promising for large-scale and general-purpose pre-training. 
Nevertheless, prior works \cite{chebotar2021actionable,yang2022rethinking,ma2022far} have largely focused on reaching goals in the dataset, without systematically studying the problem of out-of-distribution (OOD) goal generalization. There are a number of questions: what is the OOD generalization performance of current offline GCRL algorithms? And more importantly, \textbf{what is essential for OOD generalization of offline GCRL?} 

To answer these questions, we first design a 2D goal-reaching task with different types of offline data. We find that (1) pessimism-based offline RL is restrained from generalizing to OOD goals and (2) imitation learning overfits the data noise and fails to generalize when given non-expert data. On the contrary, (3) weighted imitation learning is a strong baseline for OOD generalization across different types of training data. The observation motivates us to derive a generalization theory from the perspective of domain generalization \cite{muandet2013domain,zhang2012generalization,zhou2021domain}. Through analyzing our theory, we find several techniques that are essential to minimize the generalization bound, including advantage re-weighting, data selection, density re-weighting, and goal-relabeling. Particularly, we find re-weighting the training state-goal distribution with the reciprocal of its density can minimize the worst-case distribution shift. Based on these results, we propose, \textbf{G}eneralizable \textbf{O}ffline go\textbf{A}l-condi\textbf{T}ioned RL (\textbf{GOAT}), by integrating these techniques into a general weighted imitation learning framework, which encourages optimistic goal sampling while still maintaining pessimism on action selection.

Due to the lack of benchmarks for evaluating the OOD generalization performance of offline GCRL, we develop a challenging robot manipulation benchmark based on a robotic arm or an anthropomorphic hand. The benchmark comprises nine offline datasets and 26 evaluation tasks, 9 of which contain independent and identically distributed (IID) goals, while the rest 17 tasks involve various types of OOD goals. In our experiments\footnote{Code is available at \href{https://github.com/YangRui2015/GOAT}{https://github.com/YangRui2015/GOAT}}, we demonstrate that GOAT considerably improves the OOD generalization performance of existing offline GCRL methods, as well as enhances efficiency in online fine-tuning for unseen goals. Furthermore, we conduct in-depth ablation studies to validate the effectiveness of each component used in GOAT, which may benefit future research on OOD generalization for offline RL.

\section{Preliminaries}

\subsection{Goal-conditioned RL} 
Goal-conditioned RL (GCRL) considers a goal-augmented Markov Decision Process (GMDP), denoted by a tuple $(\mathcal{S},\mathcal{A}, \mathcal{G}, \mathcal{P},r,\gamma)$. $\mathcal{S}$, $\mathcal{G}$ $\mathcal{A}$ refer to state, goal, and action spaces, respectively. $\gamma$ is the discount factor, and $r: \mathcal{S} \times \mathcal{G} \times \mathcal{A} \to \mathbb{R}$ is the goal-conditioned reward function. Generally, we consider a sparse and binary reward function $r(s,a,g)=1[\|\phi(s)-g\|_2^2 \leq \delta]$, where $\delta$ is a threshold and $\phi$ is a known state-to-goal mapping \cite{andrychowicz2017hindsight}. A policy $\pi:\mathcal{S} \times \mathcal{G} \to \mathcal{A}$ aims to maximize the expected return:
\begin{equation*}
\begin{aligned}
     J(\pi)=\mathbb{E}_{g\sim p(g),s_0 \sim \mu(s_0),\atop a_t \sim \pi(\cdot |s_t,g),
     s_{t+1}\sim \mathcal{P}(\cdot|s_t,a_t)} \big[ 
     \sum^{\infty}_{t=0} \gamma^{t} r(s_t,a_t, g) \big],
\end{aligned}
\end{equation*}
where $\mu(s_0)$ is the distribution of initial states. The value function is defined as 
$V^{\pi}(s,g) = \mathbb{E}_{a_t \sim \pi(\cdot|s_t,g), s_{t+1}\sim \mathcal{P}(\cdot|s_t,a_t)}\big[\sum^{\infty}_{t=0} \gamma^{t} r(s_t,a_t, g) |s_0=s\big]$. 
For offline GCRL, the agent cannot interact with the environment during training, and the training data is sampled from a static dataset $D=\{(s_t,a_t,g,r_t,s_{t+1})\}$.

\subsection{Domain Generalization}
Domain Generalization (DG) was first studied in the supervised learning setting \cite{blanchard2011generalizing}. A domain is defined as a joint distribution $P_{XY}$ on $\mathcal{X} \times \mathcal{Y}$, where $\mathcal{X}$ is the input space and $\mathcal{Y}$ is the label space. DG learns a model from $K$ different training domains $\mathcal{S}=\{(x^{(k)}, y^{(k)})\}_{k=1}^{K}$ that aims to generalize on unseen testing domains $\mathcal{T}=\{x^{\mathcal{T}}\}, P_{XY}^{\mathcal{T}} \neq P_{XY}^{k}, k\in\{1,\cdots,K\}$. DG mainly handles covariate shift \cite{zhou2021domain}, assuming that the labeling function $P_{Y|X}$ is stable across domains \cite{muandet2013domain} and only the marginal distribution changes $P_X^{\mathcal{T}} \neq  P_X^k, k\in \{1,\cdots, K\}$.

\section{OOD Generalization for Offline GCRL}
In this section, we first compare different GCRL algorithms in a 2D goal-reaching environment, showing that weighted imitation learning method is preferable to other methods across different data settings. Based on the observations, we formulate the OOD generalization problem as domain generalization, and then derive a theoretical framework to analyze the essential techniques for OOD generalization.

\begin{figure}[t]
\centering
\subfigure[]{\includegraphics[width=1\linewidth]{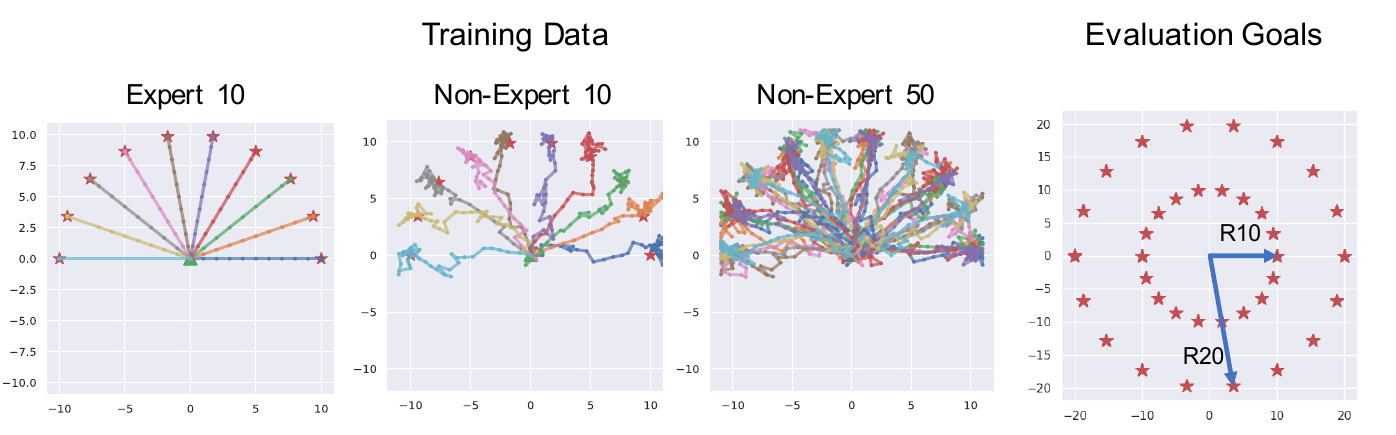}\label{fig:point_datas}}

\subfigure[]{\includegraphics[width=1\linewidth]{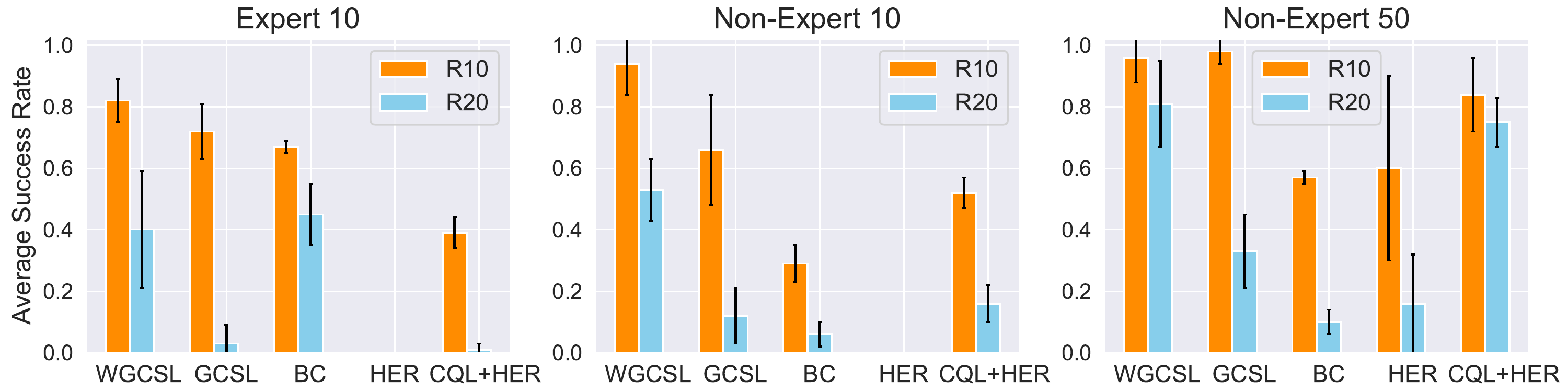}\label{fig:bars1}}
\caption{(a) Visualization of three 2D goal-reaching datasets and two groups of evaluation goals. ``R10" and ``R20" refer to the radius (10 or 20) of the desired goals for evaluation. (b) Average success rates of different agents over 5 random seeds. }
% \vspace{-1em}
\label{fig:point_bars}
\end{figure}

\subsection{Didactic Example}
\label{sec:didactic_example}
We design a 2D point environment as shown in Figure \ref{fig:point_datas} to characterize the generalization ability of different offline GCRL algorithms, including BC, GCSL \cite{ghosh2019learning}, WGCSL \cite{yang2022rethinking}, DDPG+HER \cite{andrychowicz2017hindsight}, and CQL+HER \cite{chebotar2021actionable}. There are three types of training data, namely ``Expert $N$" and ``Non-Expert $N$", where $N$ refers to the number of trajectories in the dataset. In the training datasets, trajectories and goals are mainly distributed on the top semicircle with a radius of 10. Unlike the training data, the evaluation goals are on the full circles of radius 10 and 20. Both states and goals in this environment are represented as 2D coordinates indicating their positions, while actions are 2D vectors of the displacement. In this example, the optimal policy is $\pi(s,g)=\text{clip}(g-s,0,1)$, where the maximum movement in one dimension is $1$. If the agent learns the optimal policy, it can successfully generalize to any unseen goal.

From the results in Figure \ref{fig:visual_trajs_example} and Figure \ref{fig:bars1}, we can draw the following conclusions:
\begin{itemize}[itemsep=6pt,topsep=0pt,parsep=0pt]
    \item Given a clean expert dataset, BC generalizes well for OOD goals. However, in the case of training with non-expert and noisy data, it can overfit the noise and thus fail to generalize.
    \item DDPG+HER (short for ``HER") suffers from overestimating values of OOD actions. As a result, it avoids in-dataset actions and produces odd trajectories.
  \item For the pessimism-based approach CQL+HER, its trajectories are restricted to the upper semicircle and fail to generalize to the lower part when given clean expert data. It can only generalize relatively well when the data size and coverage are sufficiently large.
  \item WGCSL significantly improves the OOD generalization ability over GCSL by re-weighting samples and performs consistently well across different datasets.
\end{itemize}

The designed task is simple but representative for characterizing the characteristics of different algorithms. More results can be found in Appendix \ref{ap:2d_reach}. As suggested by the empirical results, the weighted imitation-based method enjoys better OOD generalization than pessimism-based method. Moreover, pessimism-based offline RL methods are inhibited from reaching OOD area in theory \cite{jin2021pessimism,kumar2021should}. In contrast, weighted imitation learning method has theoretical guarantees for OOD generalization, which we will show in Section \ref{sec:domain_generalization_theory}.

\subsection{Problem Formulation}
We define $\mathcal{X}=\mathcal{S} \times \mathcal{G}$ as the input space, $\mathcal{Y}=\mathcal{A}$ as the action space. The offline data $D=\{(s_t,a_t,g,r_t,s_{t+1})\}$ is collected by any behavior policy $\pi_b$, where $(s_t,g)\sim P_{X}^{\mathcal{S}}$. In the testing phase, initial states and desired goals can be sampled from any unknown distribution $P_{X}^{\mathcal{T}}, P_{X}^{\mathcal{T}}\neq P_{X}^{\mathcal{S}}$, which is named ``OOD distribution" in this paper. 
We assume the expert policy $\pi_E(a|s,g)$ (or $P_{Y|X}$) is stable with $P_{X}$ and generalizes well across different state-goal pairs, which is reasonable because OOD generalization is meaningless when $\pi_E$ cannot generalize. The objective is to minimize the suboptimality on the testing domain $P_{X}^{\mathcal{T}}$: 
\begin{equation}
\label{eq:suboptimal}
    \text{SubOpt}(\pi_E, \pi) =\mathbb{E}_{(s_0,g) \sim P_{X}^{\mathcal{T}}} [V^{\pi_E}(s_0,g) - V^{\pi}(s_0,g)] 
\end{equation}

\subsection{A Domain Generalization View}
\label{sec:domain_generalization_theory}
By establishing a link between weighted imitation learning and supervised learning, we can analyze the OOD generalization performance according to the domain generalization bound \cite{ben2010theory,zhang2012generalization, mansour2009domain}. 

Our following analysis is based on the Total Variation Distance $D_{\rm TV}$ between any two policies $\pi_1$ and $\pi_2$:
\begin{equation*}
    \begin{aligned}
        D_{\rm TV} &(\pi_1(\cdot|s,g) , \pi_2(\cdot|s,g)) = \\
        &\sup_{ B\subset \mathcal{A}} | \int_{a\in B} (\pi_1(a|s,g) - \pi_2(a|s,g))| ,
    \end{aligned}
\end{equation*}
where $B$ is any measurable subset of the action space $\mathcal{A}$.
Denote the discounted occupancy of state as 
$
    d_{\pi}(s|s_0, g) = (1-\gamma)\sum\nolimits_{t=0}^{\infty} \gamma^{t} \Pr(s_t=s|\pi,s_0, g) 
$. We define the policy discrepancy on any state-goal distribution $\rho$ as:
\begin{equation*}
% \vspace{-6pt}
    \varepsilon^{\rho}(\pi_{1}, \pi_2)=\mathbb{E}_{(s_0,g) \sim P_{X}^{\rho} \atop s \sim d_{\pi_{E}}(s|s_0,g)}\left[ D_{\mathrm{TV}}\big(\pi_1(\cdot|s,g), \pi_2(\cdot|s,g) \big)\right]
% \vspace{-6pt}
\end{equation*}
Generally, we do not have access to the true expert policy $\pi_E$, but we can imitate a surrogate policy $\hat \pi_E$ instead. Then, we provide the following OOD generalization theorem.
\begin{theorem}
\label{tm:performance_gap_bound_finite}
    Consider finite hypothesis space $\Pi$ and we minimize the empirical loss function $\hat \varepsilon^{\mathcal{S}}$ with $m$ samples. For a policy $\pi$ and a surrogate expert policy $\hat \pi_E$, with probability at least $1-\delta$, the following bound holds:
    \begin{equation*}
    \begin{aligned}
        &\mathrm{SubOpt}(\pi_E, \pi) \leq  \frac{2R_{max}}{(1-\gamma)^2} \bigg[ \underbrace{ \hat \varepsilon^{\mathcal{S}}(\hat \pi_{E}, \pi)}_{\text{empirical imitation loss}}\\
        &+ \underbrace{\varepsilon^{\mathcal{S}}(\hat \pi_{E}, \pi_E) }_{\text{expert estimation gap}}
        + \underbrace{d_1(\mathcal{T}, \mathcal{S})}_{\text{distribution shift}} + \sqrt{\frac{\log 2|\Pi|+\log \frac{1}{\delta}}{2m}}  \bigg]
    \end{aligned}
    \end{equation*}
\end{theorem}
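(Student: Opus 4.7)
The plan is to chain three classical ingredients: a performance difference (simulation) lemma that converts the suboptimality into an expected policy discrepancy, a domain adaptation style decomposition that introduces the surrogate $\hat\pi_E$ and absorbs the covariate shift via $d_1(\mathcal T,\mathcal S)$, and a uniform-convergence step over the finite hypothesis class $\Pi$ that replaces the population loss by the empirical loss.

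\textbf{Step 1 (simulation lemma).} First I would show $\mathrm{SubOpt}(\pi_E,\pi)\le \tfrac{2R_{max}}{(1-\gamma)^2}\,\varepsilon^{\mathcal T}(\pi_E,\pi)$. The cleanest route is the telescoping identity
\begin{equation*}
V^{\pi_E}(s_0,g)-V^{\pi}(s_0,g)=\tfrac{1}{1-\gamma}\,\mathbb E_{s\sim d_{\pi_E}(\cdot|s_0,g)}\!\left[\mathbb E_{\pi_E}\,Q^{\pi}(s,a,g)-\mathbb E_{\pi}\,Q^{\pi}(s,a,g)\right],
\end{equation*}
then bounding the inner action gap by $2\|Q^{\pi}\|_\infty\,D_{\mathrm{TV}}(\pi_E(\cdot|s,g),\pi(\cdot|s,g))$ and using $\|Q^{\pi}\|_\infty\le R_{max}/(1-\gamma)$. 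Taking the outer expectation over $(s_0,g)\sim P_X^{\mathcal T}$ produces $\varepsilon^{\mathcal T}(\pi_E,\pi)$ exactly as defined in the paper, with the prefactor $2R_{max}/(1-\gamma)^2$.

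\textbf{Step 2 (triangle inequality and change of measure).} Next I would insert $\hat\pi_E$ by the triangle inequality for total variation, $\varepsilon^{\mathcal T}(\pi_E,\pi)\le \varepsilon^{\mathcal T}(\hat\pi_E,\pi)+\varepsilon^{\mathcal T}(\hat\pi_E,\pi_E)$, and then switch the outer expectation from $\mathcal T$ to $\mathcal S$. Defining $d_1(\mathcal T,\mathcal S)$ as the discrepancy $\sup_{\pi_1,\pi_2\in\Pi}\,|\varepsilon^{\mathcal T}(\pi_1,\pi_2)-\varepsilon^{\mathcal S}(\pi_1,\pi_2)|$ (which is dominated by the $L^1$/TV distance between $P_X^{\mathcal T}$ and $P_X^{\mathcal S}$ because $D_{\mathrm{TV}}\in[0,1]$), each of the two $\mathcal T$-indexed terms is upper bounded by its $\mathcal S$-counterpart plus $d_1(\mathcal T,\mathcal S)$. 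The stability of $\pi_E$ across domains (the $P_{Y|X}$ assumption from the DG preliminaries) is what makes the second term a genuine \emph{expert estimation gap} $\varepsilon^{\mathcal S}(\hat\pi_E,\pi_E)$ rather than a joint-distribution artifact, so only one copy of $d_1$ survives in the final bound.

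\textbf{Step 3 (concentration).} Finally I would replace $\varepsilon^{\mathcal S}(\hat\pi_E,\pi)$ by its empirical version $\hat\varepsilon^{\mathcal S}(\hat\pi_E,\pi)$. Since each per-sample quantity $D_{\mathrm{TV}}(\hat\pi_E(\cdot|s,g),\pi(\cdot|s,g))$ is bounded in $[0,1]$, Hoeffding's inequality gives a concentration rate $\sqrt{\log(2/\delta')/(2m)}$ for a fixed $\pi$; a union bound over the $|\Pi|$ candidates in the finite hypothesis class (times $2$ for the two-sided deviation, yielding $\log 2|\Pi|$) delivers the last term. Assembling Steps 1--3 recovers the stated inequality.

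\textbf{Main obstacle.} The delicate point is Step 1: one must measure the policy TV under the expert occupancy $d_{\pi_E}$ (not $d_{\pi}$) and obtain only $1/(1-\gamma)^2$, not $1/(1-\gamma)^3$, in the prefactor. This is why I prefer the telescoping form above to the symmetric advantage-based performance difference lemma — the latter gives the wrong occupancy and typically forces an extra $1/(1-\gamma)$. A secondary subtlety is choosing the right definition of $d_1$: taking the discrepancy over the hypothesis class is the weakest distance that still closes the bound, and it is strictly smaller than the naive TV between marginals, which matters for the design choices (advantage re-weighting, density re-weighting, relabeling) that GOAT is built to minimize.
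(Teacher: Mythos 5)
Your Steps 1 and 3 are sound. Step 1 is a genuinely different route from the paper's: the paper first bounds $V^{\pi_E}(s_0,g)-V^{\pi}(s_0,g)$ by $\tfrac{2R_{max}}{1-\gamma}D_{\mathrm{TV}}\bigl(\rho_{\pi_E}(\cdot|s_0,g),\rho_{\pi}(\cdot|s_0,g)\bigr)$ and then invokes an external lemma (Lemma 5 of Xu et al.) to convert occupancy-measure TV into $\tfrac{1}{1-\gamma}\mathbb{E}_{s\sim d_{\pi_E}}[D_{\mathrm{TV}}(\pi,\pi_E)]$, whereas you use the performance-difference identity directly with $\lVert Q^{\pi}\rVert_\infty\le R_{max}/(1-\gamma)$; both give the same $\tfrac{2R_{max}}{(1-\gamma)^2}\varepsilon^{\mathcal T}(\pi_E,\pi)$, and yours is more self-contained and correctly keeps the expert occupancy. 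Step 3 is exactly the paper's finite-class Hoeffding-plus-union-bound argument.

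The genuine gap is in Step 2. You apply the triangle inequality on $\mathcal T$ first and then move \emph{both} resulting terms from $\mathcal T$ to $\mathcal S$, which costs two change-of-measure penalties, and the reason you give for keeping only one — the stability of $\pi_E$ across domains — does not do that work: that assumption only says the conditional $\pi_E(\cdot|s,g)$ is domain-independent, but $\varepsilon^{\mathcal T}(\hat\pi_E,\pi_E)$ and $\varepsilon^{\mathcal S}(\hat\pi_E,\pi_E)$ still differ because the outer $(s_0,g)$-marginal differs, so switching that term is not free. Moreover, your $d_1$ is defined as a discrepancy with the supremum taken over $\pi_1,\pi_2\in\Pi$, but neither $\hat\pi_E$ nor $\pi_E$ need lie in $\Pi$, so the second switch is not even covered by that supremum. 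Two clean repairs: (a) do what the paper does — first change measure on the pair $(\pi_E,\pi)$, i.e.\ $\varepsilon^{\mathcal T}(\pi_E,\pi)\le\varepsilon^{\mathcal S}(\pi_E,\pi)+d_1(\mathcal T,\mathcal S)$, and only then insert $\hat\pi_E$ by the triangle inequality \emph{on the source domain}, so a single penalty is paid; or (b) keep both penalties but note that changing the measure of a $[0,1]$-valued loss costs at most $\sup_{0\le f\le 1}\lvert\mathbb{E}_{\mathcal T}f-\mathbb{E}_{\mathcal S}f\rvert=\tfrac12\lVert P_X^{\mathcal T}-P_X^{\mathcal S}\rVert_1$, and two such costs sum to the paper's $d_1(\mathcal T,\mathcal S)$ (which equals the full $L^1$ distance), so the stated bound still follows — but then the justification is constant-factor bookkeeping, not the $P_{Y|X}$-stability assumption you invoke.
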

where  $d_1(\cdot, \cdot)$ is the variation divergence defined as follows:
\begin{align*}
    d_1(S_1, S_2) = 2\sup_{\mathcal{J} \subset \mathcal{X}} \left|\int_{x \in \mathcal{J}} \left( P_{S_1}(x) -  P_{S_2}(x) \right) dx\right|,
\end{align*}
here $\mathcal{J}$ is any measurable subset of $\mathcal{X}$.

The proof is deferred to Appendix \ref{ap:proof_main}. Theorem \ref{tm:performance_gap_bound_finite} suggests that the overall OOD generalization suboptimality can be controlled by minimizing the empirical imitation learning loss, the distance between $\pi_E$ and $\hat \pi_E$, and controlling the distribution shift between training and testing domains. We now analyze how to minimize each term in this bound.

\paragraph{Empirical Imitation Loss} We can use a weighted behavior policy as the surrogate policy: $\hat \pi_E(a|s,g) \propto w(s,a,g) \pi_b (a|s,g)$. According to Pinsker’s inequality \cite{csiszar2011information}, this loss can be bounded by $\mathrm{KL}$-divergence. Thus, we have
\begin{equation*}
    \begin{aligned}
        &\min_{\theta} \mathbb{E}_{(s_0,g) \sim P_{X}^{\mathcal{S}}, s \sim d_{\pi_{E}}(s|s_0,g)}\left[ D_{\mathrm{KL}}\big(\hat \pi_E(\cdot|s,g),\pi_{\theta}(\cdot|s,g)  \big)\right] \\
        &\iff \max_{\theta} \mathbb{E}_{(s_0,g) \sim P_{X}^{\mathcal{S}}, s \sim d_{\pi_{E}}(s|s_0,g), a\sim \hat \pi_E} \left[\log \pi_{\theta}(a|s,g) \right] \\
        &\iff \max_{\theta} \mathbb{E}_{(s_0,g) \sim P_{X}^{\mathcal{S}} \atop s \sim d_{\pi_{E}}(s|s_0,g), a\sim  \pi_b} \left[\log \pi_{\theta}(a|s,g) \cdot w(s,a,g) \right]
    \end{aligned}
\end{equation*}
Empirically, following \cite{wang2018exponentially,nair2020awac} we omit the difference in $d_{\pi_E}$ and conduct weighted imitation learning on the offline data to minimize this loss.

\paragraph{Expert Estimation Gap} Although we do not have access to $\pi_E$, we know $\pi_E$ has the highest expected value. Instead of minimizing the TV distance to $\pi_E$, this problem can be reformulated as maximizing the expected value of the surrogate policy $\hat \pi_E$. Following \cite{wang2018exponentially,peng2019advantage}, advantage re-weighting $\hat \pi_E(a|s,g) \propto \pi_b (a|s,g) \exp(\beta \cdot A(s,a,g))$ brings improved expected value over $\pi_b$. However, when the behavior policy is multi-modal and the expert policy is deterministic, as often encountered in multi-goal RL, there is a risk of interpolating between modalities, leading to a widened expert estimation gap. A viable solution to this issue is to eliminate samples from inferior modalities, $\hat \pi_{E}(a|s,g)=\pi_{b}(a|s,g)  \exp (A(s,a,g))\cdot 1[A(s,a,g) \geq c]$, which is the Best Advantage Weight introduce by \cite{yang2022rethinking}. Ideally, we can eliminate all data from other modalities to obtain a minimum expert estimation gap, but the size of the training data decreases as $c$ grows. There is a trade-off of balancing data quality versus quantity when setting $c$. Note that our analysis considers an oracle advantage function, but in practice, an imprecise estimation of the advantage function can exacerbate the expert estimation gap. Therefore, an improved method for estimating the advantage function is also crucial.

\paragraph{Distribution Shift} The distribution shift term is hard to minimize without any information about the testing distribution $\mathcal{T}$. Instead, we consider minimizing the worst-case of this term by re-weighting the training distribution $\mathcal{S}$.

Define a family of possible testing distributions as $
    \mathcal{Z}:=\left\{Z\bigm|\int_x P_Z(x) =1; 0 \leq P_Z(x) \leq C, \forall x \in \mathcal{X}\right\}$.
    Here $C > 1/ |\mathcal{X}|$ is a universal positive constant.  
Our goal is to re-weight the training distribution $S$ that can minimize the worst-case distribution shift, i.e., $\sup_{Z \in \mathcal{Z}} d_1(Z, S)$.

 Let $\mathcal{S}$ denote the family of distributions that are generated by re-weighting $S$, i.e., $\mathcal{S} := \{S'|   P_{S'}(x) = h(x) P_S(x); h(x) > 0, \forall x \in \mathcal{X}; \int_x P_{S'}(x) = 1\}$.

Let $\bar S$ denote the uniform distribution, i.e., $
    P_{\bar S} (x) = 1/|\mathcal{X}|, \forall x \in \mathcal{X}, a.s.$.
We denote the subset of $\mathcal{S}$ that contains all ``non-uniform" distributions as  $\mathcal{S}^{-}$, i.e., 
\begin{align}
   & \mathcal{S}^{-} := \{S'|   P_{S'}(x) = h(x) P_S(x);  h(x) > 0, \forall x \in \mathcal{X}; \nonumber \\
    & \exists \mathcal{J} \subset \mathcal{X}, \int_{x \in \mathcal{J}} P_{S'}(x) dx <  |\mathcal{J}| / |\mathcal{X}|; 
 \nonumber \int_x P_{S'}(x) = 1\}
\end{align}

\begin{theorem} 
\label{tm:distribution_shift}
For all $\forall S \in \mathcal{S}^{-}$, we have
\begin{align*}
    \sup_{Z \in \mathcal{Z}} d_1 (Z, S) > \sup_{Z \in \mathcal{Z}} d_1 (Z, \bar S)
\end{align*}
\end{theorem}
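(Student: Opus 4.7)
The plan is to first compute $\sup_{Z \in \mathcal{Z}} d_1(Z, \bar S)$ in closed form, and then, given any non-uniform $S \in \mathcal{S}^{-}$, exhibit a single $Z^\star \in \mathcal{Z}$ whose variation distance to $S$ strictly exceeds that value.

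For the uniform computation, observe that swapping $\mathcal{J}$ with its complement flips the sign of $\int_{\mathcal{J}}(P_Z - P_{\bar S})\,dx$, so the absolute value in $d_1$ can be dropped and it suffices to maximize $\int_{\mathcal{J}} P_Z\,dx - |\mathcal{J}|/|\mathcal{X}|$ jointly over admissible $Z$ and measurable $\mathcal{J}$. For fixed $\mathcal{J}$, the constraints $0\leq P_Z\leq C$ and $\int P_Z = 1$ give $\sup_Z \int_{\mathcal{J}} P_Z\,dx = \min(C|\mathcal{J}|,\,1)$, so a one-variable optimization in $|\mathcal{J}|$ places the maximum at $|\mathcal{J}| = 1/C$ and yields $\sup_Z d_1(Z, \bar S) = 2\bigl(1 - 1/(C|\mathcal{X}|)\bigr)$.

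The crux is then to produce, for any $S \in \mathcal{S}^{-}$, a measurable $\mathcal{J}^\star$ with $|\mathcal{J}^\star| = 1/C$ and $\int_{\mathcal{J}^\star} P_S\,dx < 1/(C|\mathcal{X}|)$. Let $P_S^{\uparrow}$ denote the non-decreasing rearrangement of $P_S$ on $[0, |\mathcal{X}|]$ and set $f(r) := \int_0^r P_S^{\uparrow}(t)\,dt$; this function equals $\inf\{\int_{\mathcal{J}} P_S\,dx : |\mathcal{J}| = r\}$, is convex (as the primitive of a non-decreasing function), and satisfies $f(0) = 0$ together with $f(|\mathcal{X}|) = 1$. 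Convexity immediately gives $f(r) \leq r/|\mathcal{X}|$ on $[0,|\mathcal{X}|]$, and if equality held at some interior $r_0$, then applying convexity to the triples $\{0, r_1, r_0\}$ and $\{r_1, r_0, |\mathcal{X}|\}$ for arbitrary $r_1 \in (0,|\mathcal{X}|)\setminus\{r_0\}$ pins $f(r_1) = r_1/|\mathcal{X}|$, forcing $P_S^{\uparrow}$ to be constant and $S$ uniform, which contradicts the non-uniformity witness in the definition of $\mathcal{S}^{-}$. Since $C > 1/|\mathcal{X}|$ gives $1/C \in (0,|\mathcal{X}|)$, this yields $f(1/C) < 1/(C|\mathcal{X}|)$, and a sublevel set of $P_S$ (trimmed to exact measure $1/C$ on an atom if necessary) provides the desired $\mathcal{J}^\star$.

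Setting $P_{Z^\star}(x) = C$ on $\mathcal{J}^\star$ and $0$ elsewhere defines a valid $Z^\star \in \mathcal{Z}$ since $C \cdot (1/C) = 1$, and
\begin{equation*}
    d_1(Z^\star, S) \geq 2\Bigl(\int_{\mathcal{J}^\star} P_{Z^\star}\,dx - \int_{\mathcal{J}^\star} P_S\,dx\Bigr) = 2\Bigl(1 - \int_{\mathcal{J}^\star} P_S\,dx\Bigr) > 2\bigl(1 - 1/(C|\mathcal{X}|)\bigr),
\end{equation*}
which is exactly $\sup_Z d_1(Z, \bar S)$, completing the proof. The main obstacle is the strict inequality $f(1/C) < 1/(C|\mathcal{X}|)$: one must leverage both the total-mass constraint $\int P_S = 1$ and the non-uniformity witness in $\mathcal{S}^{-}$ to rule out a linear cumulative $f$, via a careful convex-analysis argument on the rearranged density.
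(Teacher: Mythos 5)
Your proof is correct, and it shares the paper's overall skeleton — Step 1 pins down $\sup_{Z\in\mathcal{Z}} d_1(Z,\bar S)=2\bigl(1-\tfrac{1}{C|\mathcal{X}|}\bigr)$, Step 2 beats it for non-uniform $S$ by a $Z$ of density $C$ on a measure-$1/C$ set carrying strictly less than $\tfrac{1}{C|\mathcal{X}|}$ of $S$'s mass — but the crux of Step 2 is argued by a genuinely different route. The paper starts from the witness set $\mathcal{J}$ in the definition of $\mathcal{S}^{-}$, augments it with the least-mass set $\mathcal{J}_M\subset\mathcal{X}\setminus\mathcal{J}$ of measure $1/C-|\mathcal{J}|$, and shows $\int_{\mathcal{J}\cup\mathcal{J}_M}P_S<\tfrac{1}{C|\mathcal{X}|}$ by an averaging/total-mass contradiction; you instead pass to the non-decreasing rearrangement, note that $f(r)=\inf\{\int_{\mathcal{J}}P_S : |\mathcal{J}|=r\}$ is convex with $f(0)=0$, $f(|\mathcal{X}|)=1$, and show that equality with the chord at any interior point forces $P_S$ to be a.e.\ uniform, contradicting the witness, hence $f(1/C)<\tfrac{1}{C|\mathcal{X}|}$. (Your Step 1 is likewise a direct optimization over $|\mathcal{J}|$ rather than the paper's contradiction argument, but yields the same value.) What each buys: the paper's argument is more elementary, using only pointwise bounds and mass counting; yours requires the rearrangement/Hardy--Littlewood fact but is cleaner and more robust — it works uniformly regardless of the measure of the witness set, whereas the paper's construction of $\mathcal{J}_M$ implicitly needs $|\mathcal{J}|\le 1/C$ (and attainment of the minimizing $\mathcal{M}$), a point the paper leaves unaddressed — and it delivers the stronger intermediate statement that the minimal mass at \emph{every} interior measure level is strictly below the uniform value. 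The only technical care points in your write-up (attainment of the infimum at measure exactly $1/C$, or alternatively approximating within half the gap, and the sign/absolute-value reduction in Step 1) are all handled or easily handled as you indicate.
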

The proof can be found in Appendix \ref{ap:proof_distribution}. Theorem \ref{tm:distribution_shift} suggests that we can re-weight the training distribution $\mathcal{S}$ to a uniform distribution to obtain a smaller worst-case distribution shift. To achieve this, we can approximate the reciprocal of density or uncertainty via the kernel density estimator \cite{zhao2019maximum,pitis2020maximum} or ensemble \cite{pathak2019self,bai2022pessimistic}.

\paragraph{The Last Term} Note that the last term in the above bound is dependent on the dataset size $m$. Therefore, increasing the size of the dataset through augmentation techniques can lead to a more tighter upper bound. This gives justification to use goal relabeling \cite{andrychowicz2017hindsight,li2020generalized} for offline GCRL. Relabeling goals with achieved goals expands the size of the offline dataset, which enables training agents on more diverse state-goal pairs, subsequently improving an agent's ability to achieve goals in unknown testing distributions.

\subsection{A Brief Summary}
In this section, we have discussed several useful techniques for OOD generalization from the generalization theory. These techniques include: (1) weighted imitation learning, which minimizes the empirical imitation loss; (2) advantage re-weighting and data selection, which narrow the expert estimation gap; (3) re-weighting with the reciprocal of density, which minimizes the worst-case distribution shift; and (4) goal relabeling, which minimizes the last term related to the dataset size. Based on our analysis, a weighted imitation learning framework that integrates all of these techniques is highly desirable for OOD goal generalization.

\begin{figure*}[ht]
    \centering
    \subfigure[]{\includegraphics[width=0.17\linewidth]{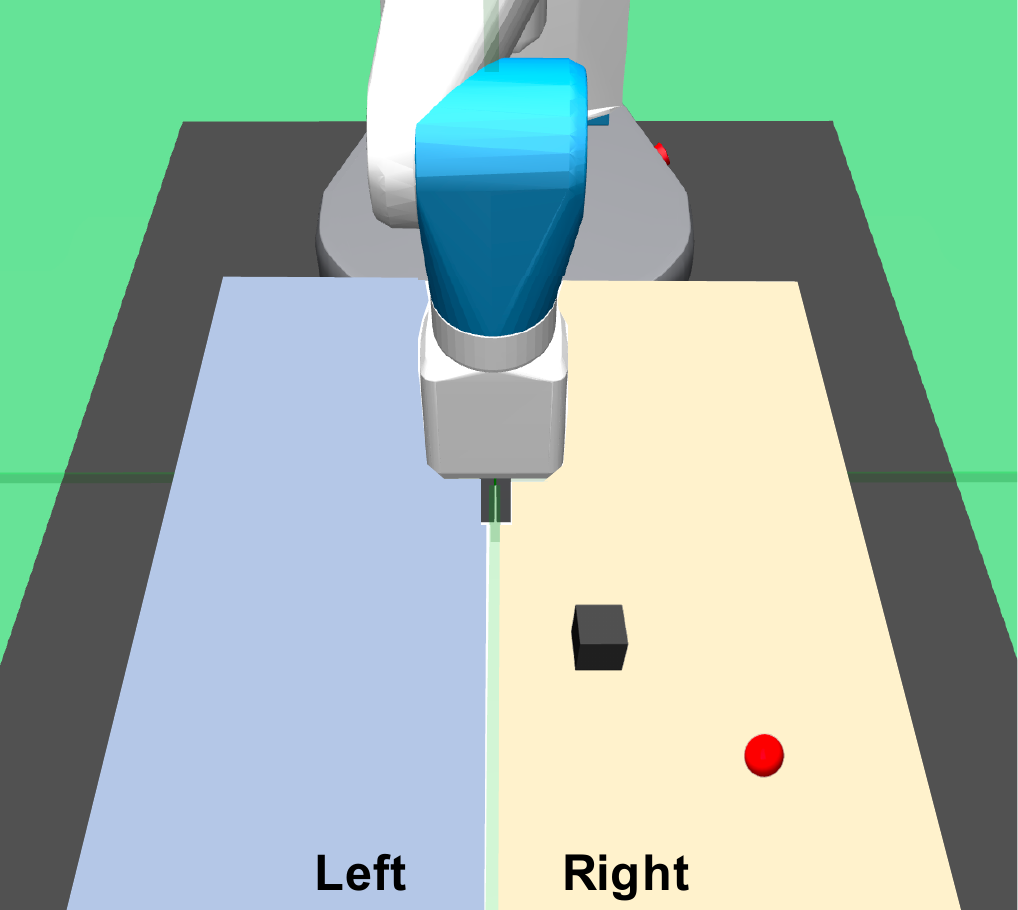}\label{fig:env_push}}
    \hspace{.14in}
    \subfigure[]{\includegraphics[width=0.17\linewidth]{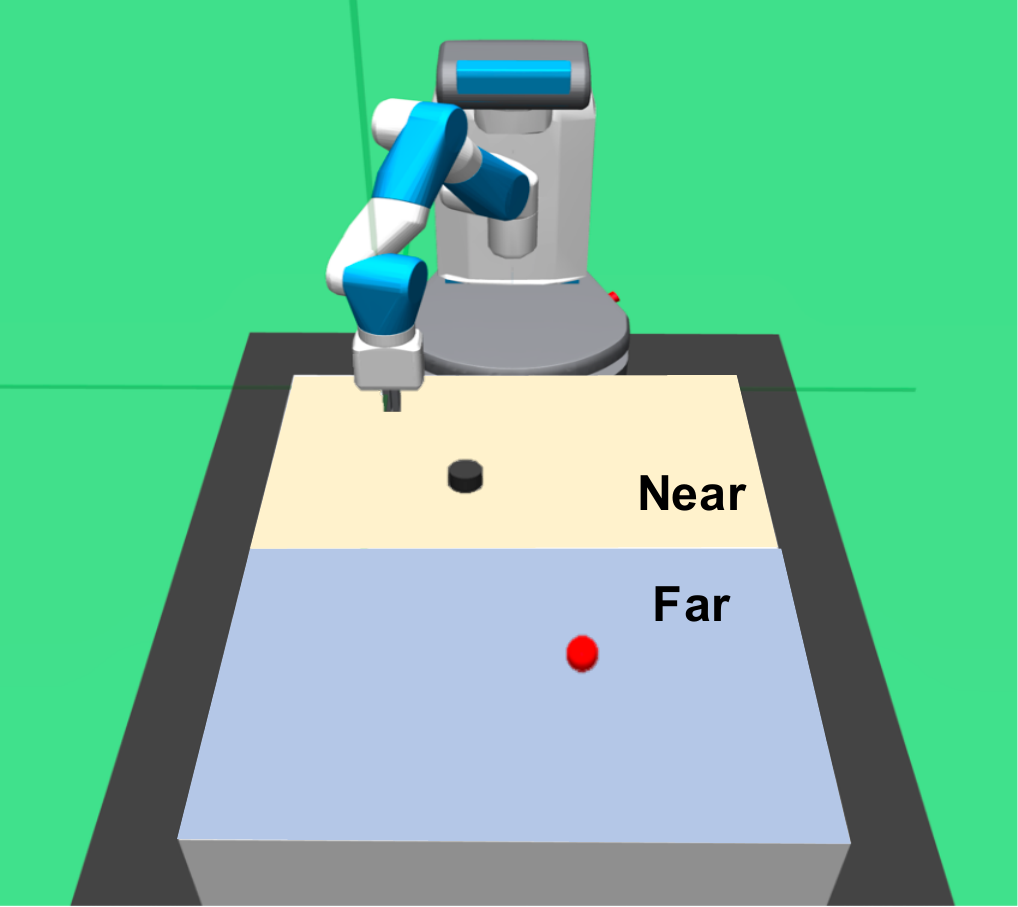}\label{fig:env_slide}}
    \hspace{.14in}
    \subfigure[]{\includegraphics[width=0.17\linewidth]{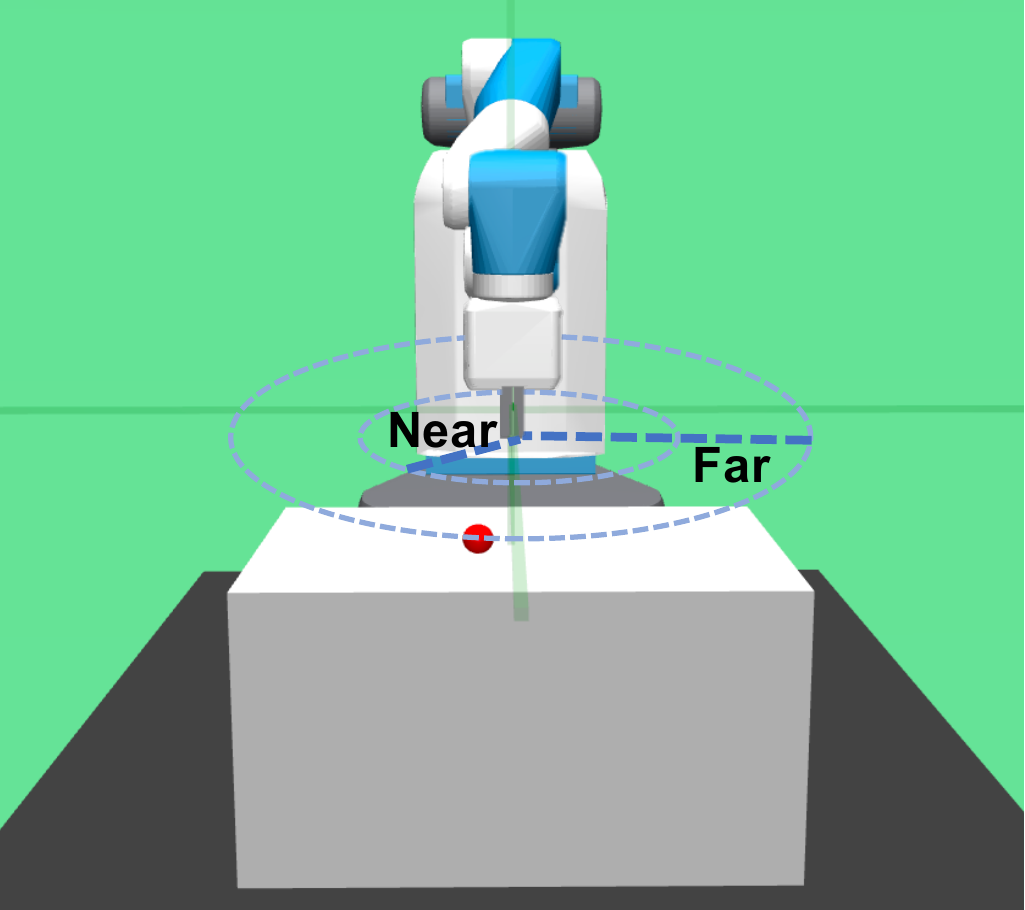}\label{fig:env_reach}}
    \hspace{.14in}
    \subfigure[]{\includegraphics[width=0.17\linewidth]{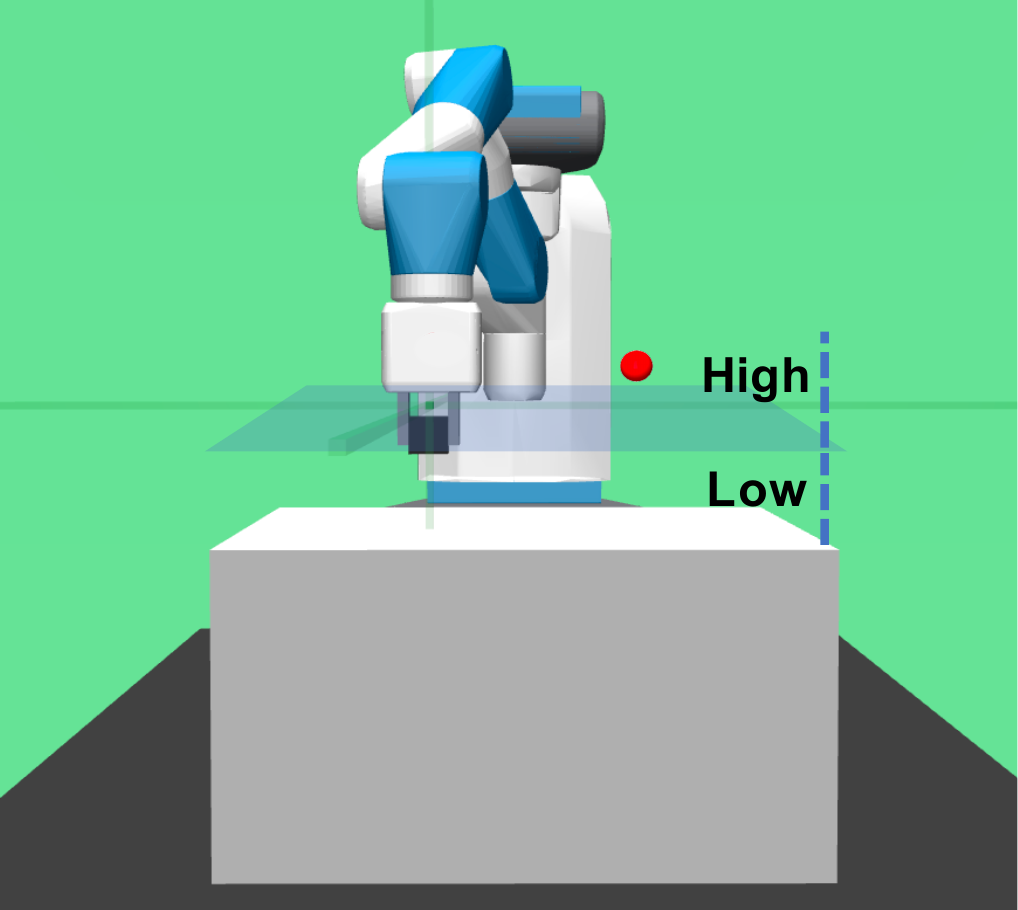}\label{fig:env_pick}}
    \caption{Examples of designed benchmark tasks. (a) Push Left-Right, (b) Slide Near-Far, (c) Reach Near-Far, and (d) Pick Low-High.}
    \label{fig:env_merge}
\end{figure*}

\section{Algorithm}
Motivated by our theoretical insights, we present the GOAT algorithm, which builds upon the weighted imitation learning framework of WGCSL \cite{yang2022rethinking}. The existing framework already incorporates several techniques beneficial for the generalization bound, including goal relabeling, advantage re-weighting, and data selection. To further minimize the generalization bound, GOAT improves the surrogate expert policy through better value function estimation and minimizes the worst-case distribution shift by re-weighting samples with uncertainty, where the uncertainty is introduced as an alternative to the reciprocal of density.

We denote a trajectory of horizon $T$ in the offline dataset as $D=\{(s_t,a_t,r_t, s_{t+1}, g)\}, t\in[1,T]$. As suggested by our theory, we perform hindsight relabeling \cite{andrychowicz2017hindsight} to augment the dataset and obtain the relabeled data $D_{relabel} = \{(s_t,a_t,r_t', s_{t+1}, g')\}, t\in[1,T]$, where $g'=\phi(s_i), r'_t=r(s_t,a_t,\phi(s_i)),i\geq t$. We then perform weighted imitation learning based on $D_{relabel}$.

\paragraph{Weighted Supervised Policy Learning} The overall weighted imitation learning framework is as follows:
\begin{equation}
	J(\pi_{\theta}) = 
	\mathbb{E}_{(s,a,g')\sim D_{relabel}} \big[
	w(s,a,g')
	\log\pi_{\theta}(a|s,g') \big],
    \label{eq:J_wgcsl}
\end{equation}
where the weight $w$ contains three parts, i.e., the uncertainty weight(UW), the exponential advantage weight (EAW) and the data selection weight (DSW). Formally, we define
\begin{equation*}
\begin{aligned}
    w(s,a,g') = u(s,g') \cdot  \exp(\beta A(s,a,g')) \cdot \epsilon(A(s, a, g')) ,
\end{aligned}
\label{eq:weights}
\end{equation*}
where $u$ is the uncertainty weight to replace the density, $A(s, a, g')$ is the advantage function, and $\epsilon(A(s, a, g'))= 1[A(s,a,g') \geq c]$ is the DSW. In DSW, the constant $c$ may be established as the $\alpha$ quantile of advantage values, in recognition of the fact that the best value for $\alpha$ is more consistently applicable across different environments. Additionally, we also discuss an adaptive variant of DSW and we refer the readers to Appendix \ref{ap:adap_DSW}. In the subsequent section, we mainly focus on how to estimate the advantage function and the uncertainty weight.

\paragraph{Ensemble Value Functions} To better estimate the advantage value for both EAW and DSW, we train $N$ randomly initialized value functions. Each of the value function $Q_i(s,a,g), 1 \leq i \leq N $ minimizes the TD loss:
\begin{equation}
\label{eq:td_loss}
\begin{aligned}
    \mathcal{L}_{TD} = &\mathbb{E}_{(s_t,a_t,r'_t,s_{t+1},g')\sim D_{relabel}} [L_2(r'_t +  \\
    &\gamma \hat Q_i(s_{t+1},\pi_{\theta}(s_{t+1},g'),g') - Q_i(s_t,a_t,g'))].
\end{aligned}
\end{equation}
In Eq \eqref{eq:td_loss}, $L_2(u)=u^2$ and $\hat Q_i$ refers to the target network of $Q_i$. Although $\pi_{\theta}$ is regularized to be near the dataset policy, it can still produce OOD actions to affect the value estimation during training. To mitigate this problem, we can replace $L_2$ with the expectile regression (ER): $L_2^\tau(u)=|\tau - 1(u<0)|u^2$, where $\tau \in(0,1)$. 

% While ER may compromise the accurate estimation of the absolute $Q$ values, it can enhance the estimation of the relative relationship, ultimately, leading to better policy learning based on EAW  and DSW.

The group of value function is then leveraged to estimate the advantage value and the uncertainty weight. Specifically, we utilize the mean of the $Q$ functions to estimate $V(s,g')$:
\begin{equation*}
\label{eq:estimate_V}
    V(s,g')=\frac{1}{N} \sum_{i=1}^N Q_i(s,\pi_{\theta}(s,g'),g')
\end{equation*}
Then, the advantage value can be estimated by $A(s_t,a_t,g') = r(s_t,a_t,g') + \gamma V(s_{t+1},g') - V(s_t, g')$.

\paragraph{Uncertainty Estimation}
Estimating the density of high-dimensional state-goal space is generally challenging. In this work, we utilize uncertainty to replace density as a fact that the bootstrapped uncertainty is approximately proportional to the reciprocal of density in tabular MDP \cite{bai2022pessimistic}. The uncertainty is calculated as the standard deviation of value functions:
\begin{equation*}
    \label{eq:uncertainty_std}
    \Std(s,g')=\sqrt{\frac{\sum\nolimits_{i=1}^N \big(Q_i( s, \pi_{\theta}(s,g'),g')- V(s,g'))^2}{N}}
\end{equation*}

However, the range of $\Std(s,g')$ varies for different environments. To make the uncertainty weight stable, we normalize the standard deviation to $[0,1]$: $$\Std_{norm}(s,g')=\frac{\Std(s,g') - \Std_{min}}{\Std_{max}-\Std_{min}},$$ 
where $\Std_{max}, \Std_{min}$ are the maximum and minimum values of $\Std(s,g')$ stored in a First In First Out (FIFO) queue. Finally, we transform $\Std_{norm}(s,g')$ to reduce more weight for data with lower variance and define the uncertainty weight $u(s,g')$ as:
\begin{equation}
    \label{eq:final_UW}
    u(s,g')= \text{clip}(\text{tanh}(\Std_{norm}(s,g') \times w)  + w_{min}, 0, 1)
\end{equation}
where $w_{min}$ is set to $0.5$. Intuitively, $w$ is the hyperparameter to adjust the proportion of ranked samples to downweight, i.e., the smaller $w$ is, the more data will be downweighted, and vice versa.

\begin{table*}[h]
\caption{Average success rates ($\%$) with standard deviation over 5 random seeds. Blue lines and purple lines refer to IID and OOD tasks, respectively. Top two success rates for each task are highlighted.}
\label{tab:success_rate_small}
% \vskip 0.15in
\begin{center}
\begin{small}
 \begin{adjustbox}{max width=0.95\linewidth}
\begin{tabular}{llccccccccccc}
\toprule
Task Group & Task &  GOAT($\tau$)  & GOAT & WGCSL &  GCSL & BC & GoFAR & DDPG+HER & CQL+HER & MSG+HER  \\
\midrule
\rowcolor{c1!10} \cellcolor{white}  &  Right  & 100.0$\pm$0.0  &  100.0$\pm$0.0  &  100.0$\pm$0.0  &  93.6$\pm$4.3  &  92.0$\pm$3.0  &   100.0$\pm$0.0  &   99.6$\pm$0.6  &  100.0$\pm$0.0  &  99.4$\pm$0.6  \\
\rowcolor{blue!10} \cellcolor{white} Reach Left-Right &  Left &  99.9$\pm$0.2  & 99.0$\pm$2.0  &  97.8$\pm$4.4  &  36.3$\pm$10.9  &  30.4$\pm$15.2  &  54.2$\pm$9.3   &    73.8$\pm$27.6  &  94.5$\pm$6.3 &  85.6$\pm$15.7 \\
  &  Average  & \textbf{99.9} & \textbf{99.5}  &  98.9  &  65.0  &  61.2  &  77.1   &  86.7  &  97.2 &  92.5 \\

\midrule
\rowcolor{c1!10} \cellcolor{white}  &  Near & 100.0$\pm$0.0    &  100.0$\pm$0.0  &  100.0$\pm$0.0  &  79.7$\pm$3.0  &  85.3$\pm$4.3  &  100.0$\pm$0.0  &  95.9$\pm$2.0  &  100.0$\pm$0.0 &  98.6$\pm$2.8  \\
\rowcolor{blue!10} \cellcolor{white} Reach Near-Far  & Far  & 90.9$\pm$1.5   & 97.6$\pm$1.1  &  89.0$\pm$2.1  &  33.5$\pm$5.5  &  37.9$\pm$9.7  &  85.0$\pm$1.9  &    66.8$\pm$6.9  &  88.0$\pm$2.1 & 77.8$\pm$9.7  \\
  &  Average  &  \textbf{95.4}  & \textbf{98.8}  &  94.5  &  56.6  &  61.6  &  92.5    &  81.4  &  94.0 &  88.2  \\

\midrule
\rowcolor{c1!10} \cellcolor{white}  & Right2Right &  96.2$\pm$1.2   &  95.9$\pm$1.2  &  93.2$\pm$0.9  &  82.1$\pm$3.7  &  78.9$\pm$3.8  &   95.9$\pm$1.4 &    60.1$\pm$6.0  &  83.3$\pm$2.7  &  92.8$\pm$0.9 \\
\rowcolor{blue!10} \cellcolor{white} &  Right2Left  & 75.6$\pm$3.6 & 69.3$\pm$6.6  &  63.3$\pm$8.9  &  40.1$\pm$6.0  &  25.6$\pm$2.7  &  43.8$\pm$4.7  & 28.5$\pm$4.3  &  46.2$\pm$7.1 &  52.9$\pm$6.5  \\
\rowcolor{blue!10} \cellcolor{white} Push Left-Right &  Left2Right & 78.8$\pm$6.8   &  76.0$\pm$7.4  &  67.6$\pm$7.1  &  38.8$\pm$6.8  &  33.5$\pm$8.1  &  59.7$\pm$4.3   &   20.6$\pm$11.5  &  40.4$\pm$12.1 &  59.3$\pm$7.7  \\
\rowcolor{blue!10} \cellcolor{white} &  Left2Left   & 75.6$\pm$12.1 &  61.1$\pm$7.6  &  47.7$\pm$7.4  &  35.4$\pm$6.6  &  20.9$\pm$3.2  &  32.5$\pm$5.8  &   27.0$\pm$3.8  &  34.9$\pm$5.9  &  38.8$\pm$7.9 \\
& Average  &   \textbf{81.5} &  \textbf{75.6}  &  68.0  &  49.1  &  39.7  &  58.0 &  34.1  &  51.2 & 61.0  \\

\midrule
\rowcolor{c1!10} \cellcolor{white} & Near2Near  &  97.2$\pm$0.7 &  92.0$\pm$2.6  &   93.5$\pm$1.0  &  77.6$\pm$4.7  &  67.5$\pm$3.6  & 92.6$\pm$2.2 &   39.3$\pm$22.4  &  77.7$\pm$3.9 &  84.7$\pm$6.1 \\
\rowcolor{blue!10} \cellcolor{white} & Near2Far & 78.4$\pm$3.5  &  70.3$\pm$5.7   &  67.0$\pm$5.4  &  43.1$\pm$7.2  &  24.9$\pm$5.9  & 60.9$\pm$3.8  &   30.5$\pm$12.1  &  60.0$\pm$6.2  &  58.4$\pm$2.1 \\
\rowcolor{blue!10} \cellcolor{white} Push Near-Far & Far2Near  &  70.5$\pm$2.4 & 69.5$\pm$3.6  &  68.0$\pm$2.4  &  47.4$\pm$3.5  &  40.2$\pm$7.5  & 65.0$\pm$4.8 &   25.0$\pm$12.8  &  61.1$\pm$4.3  & 56.5$\pm$6.0  \\
\rowcolor{blue!10} \cellcolor{white} & Far2Far  & 55.1$\pm$2.4 &  50.8$\pm$1.8   &  51.1$\pm$4.7  &  27.9$\pm$4.1  &  15.3$\pm$2.7  & 41.3$\pm$3.1  & 18.0$\pm$7.0  &  47.1$\pm$2.4 & 41.7$\pm$5.4 \\
& Average  & \textbf{75.3} & \textbf{70.6}  &  69.9  &  49.0  &  37.0  &   65.0  &  28.2  &  61.5 &  60.3 \\

\midrule
\rowcolor{c1!10} \cellcolor{white} & Right2Right  & 96.5$\pm$1.1   &  97.3$\pm$1.2  &  93.8$\pm$5.3  &  53.4$\pm$14.1  &  52.9$\pm$7.5  &   56.9$\pm$4.3  &    40.4$\pm$13.1  &  91.9$\pm$6.8 &  94.9$\pm$2.2 \\
\rowcolor{blue!10} \cellcolor{white} & Right2Left  &  87.9$\pm$5.1   &  88.6$\pm$1.1  &  89.4$\pm$3.9  &  20.7$\pm$6.9  &  5.6$\pm$2.1  &  9.3$\pm$1.8  &  52.7$\pm$14.9  &  82.4$\pm$12.6  & 89.3$\pm$6.8  \\
\rowcolor{blue!10} \cellcolor{white} Pick Left-Right & Left2Right  & 91.4$\pm$2.3    &  93.9$\pm$1.9  &  90.0$\pm$4.1  &  47.0$\pm$10.9  &  37.2$\pm$6.4  &  51.1$\pm$6.5  &   9.8$\pm$5.7  &  86.4$\pm$8.6 &  60.8$\pm$16.5 \\
\rowcolor{blue!10} \cellcolor{white} & Left2Left & 87.6$\pm$5.7   &  88.3$\pm$3.7  &  87.0$\pm$5.1  &  24.7$\pm$7.8  &  3.3$\pm$1.4  &  6.0$\pm$2.0   & 26.4$\pm$10.9  &  83.5$\pm$9.1 & 66.9$\pm$7.0 \\
& Average  & \textbf{90.8}   &  \textbf{92.0}  &  90.0  &  36.4  &  24.8  &  30.8  &   32.3  &  86.1 & 78.0  \\

\midrule
\rowcolor{c1!10} \cellcolor{white}  & Low  & 99.3$\pm$0.5     &  99.8$\pm$0.2  &  98.6$\pm$1.3  &  84.4$\pm$3.6  &  72.4$\pm$5.4  &  95.2$\pm$1.6  &  50.4$\pm$23.9  &  100.0$\pm$0.0 &  97.3$\pm$2.2 \\
\rowcolor{blue!10} \cellcolor{white}  Pick Low-High & High  & 78.3$\pm$6.3     &  71.9$\pm$6.4  &  66.6$\pm$6.6  &  28.4$\pm$6.9  &  3.0$\pm$1.6  &  7.6$\pm$3.1   & 17.0$\pm$10.2  &  44.6$\pm$9.2 & 23.3$\pm$7.8 \\
& Average  & \textbf{88.8} & \textbf{85.8}  &  82.6  &  56.4  &  37.7  & 51.4  &    33.7  &  72.3 & 60.3  \\

\midrule
\rowcolor{c1!10} \cellcolor{white}  & Right2Right & 82.0$\pm$3.2  &  79.0$\pm$5.8  &  70.8$\pm$13.5  &  62.2$\pm$7.0  &  60.3$\pm$4.7  &   62.6$\pm$8.7  &   4.7$\pm$1.5  &  20.3$\pm$2.5  & 20.8$\pm$5.0 \\
\rowcolor{blue!10} \cellcolor{white}  & Right2Left &  45.1$\pm$8.8     &  41.3$\pm$7.1  &  36.2$\pm$8.6  &  11.5$\pm$2.0  &  15.7$\pm$6.0  &  31.6$\pm$3.9   & 0.3$\pm$0.4  &  8.6$\pm$3.0 & 7.3$\pm$4.9 \\
\rowcolor{blue!10} \cellcolor{white}  Slide Left-Right & Left2Right &   79.6$\pm$2.7    &  59.0$\pm$7.6  &  50.7$\pm$12.7  &  29.1$\pm$4.8  &  41.8$\pm$7.2  &  51.0$\pm$10.5  &   0.2$\pm$0.2  &  1.7$\pm$0.7 & 3.6$\pm$4.3 \\
\rowcolor{blue!10} \cellcolor{white}  & Left2Left  & 52.5$\pm$8.3    &  50.1$\pm$9.5  &  35.3$\pm$11.3  &  25.5$\pm$5.4  &  33.7$\pm$10.6  &  28.2$\pm$2.6   &   2.1$\pm$1.1  &  4.3$\pm$2.5  &  7.1$\pm$3.3 \\
& Average  &  \textbf{64.8}  & \textbf{57.4}  &  48.3  &  32.1  &  37.9  &  43.4  & 1.8  &  8.7 &  9.7 \\

\midrule
\rowcolor{c1!10} \cellcolor{white}  & Near   &  77.4$\pm$4.5 & 76.9$\pm$3.3  &  73.1$\pm$5.8  &  28.0$\pm$7.1  &  26.6$\pm$8.3  &  69.3$\pm$2.8  &   11.3$\pm$4.5  &  43.5$\pm$3.3 &  28.3$\pm$9.5  \\
\rowcolor{blue!10} \cellcolor{white}  Slide Near-Far & Far     &  25.1$\pm$3.9  & 29.0$\pm$4.5  &  17.4$\pm$3.2  &  0.0$\pm$0.0  &  0.0$\pm$0.0  & 24.1$\pm$2.9   & 4.4$\pm$3.7  &  7.4$\pm$3.8 & 2.6$\pm$1.4  \\
& Average  &  \textbf{51.2} & \textbf{53.0}  &  45.2  &  14.0  &  13.3  &  46.7   & 7.8  &  25.5 &  15.4 \\

\midrule
\rowcolor{c1!10} \cellcolor{white}  & Near   & 72.6$\pm$5.3&  71.9$\pm$3.2  &  70.0$\pm$3.6  &  0.0$\pm$0.0  &  0.0$\pm$0.0  &  77.4$\pm$1.7  &   0.0$\pm$0.0  &  1.8$\pm$3.6  & 0.0$\pm$0.0  \\
\rowcolor{blue!10} \cellcolor{white} HandReach Near-Far & Far &  33.1$\pm$4.5&  38.4$\pm$4.1  &  31.8$\pm$3.8  &  0.1$\pm$0.2  &  0.0$\pm$0.0  &  36.9$\pm$3.1   & 0.0$\pm$0.0  &  0.0$\pm$0.0 & 0.0$\pm$0.0  \\
& Average  &  52.8 &  \textbf{55.2}  &  50.9  &  0.0  &  0.0  &    \textbf{57.1}   &  0.0  &  0.9 & 0.0  \\

\midrule
\multirow{2}{*}{Average} &  IID Tasks & \textbf{91.2} & \textbf{90.3} &  88.1 & 62.3 & 59.5  & 83.3 &  44.6 & 68.7 & 68.5 \\
  & OOD Tasks  & \textbf{70.9} & \textbf{67.9}  & 62.1  &  28.8  & 21.7  & 40.5  &   23.7 & 46.5 & 43.1 \\
\bottomrule
\end{tabular}
\end{adjustbox}
\end{small}
\end{center}
% \vskip -0.1in
\end{table*}

\section{Experiments}
In this section, we introduce a new benchmark consisting of 9 task groups and 26 tasks to evaluate the OOD generalization performance of offline GCRL algorithms.

\subsection{Environments and Experimental Setup}
\paragraph{Environments} The introduced benchmark is modified from MuJoCo robotic manipulation environments \cite{plappert2018multi}. Agents aim to move a box, a robot arm, or a bionic hand to reach desired positions. The reward for each environment is sparse and binary, i.e., 1 for reaching the desired goal and 0 otherwise. As listed in Table \ref{tab:success_rate_small}, there are 9 task groups with a total of 26 tasks, 17 of which are OOD tasks whose goals are not in the training data. For example, as shown in Figure \ref{fig:env_push}, the dataset of Push Left-Right contains trajectories where both the initial object and achieved goals are on the right side of the table. Then the IID task is evaluating agents with object and goals on the right side (i.e., Right2Right). The OOD tasks can be generated by changing the side of the initial object or desired goals. Following \cite{yang2022rethinking}, we collect datasets with the online DDPG+HER agent. More information about the task design and offline datasets can be found in Appendix \ref{ap:data_implementation}.

\paragraph{Experimental Setup} We compare GOAT with current SOTA offline GCRL methods, including WGCSL \cite{yang2022rethinking}, GoFAR \cite{ma2022far}, CQL+HER \cite{chebotar2021actionable}, GCSL \cite{ghosh2019learning}, and DDPG+HER \cite{andrychowicz2017hindsight}. Besides, we also include a SOTA ensemble-based offline RL methods, MSG \cite{ghasemipour2022so}, namely ``MSG+HER". 
To evaluate performance, we assess agents across 200 randomly generated goals for each task and benchmark their average success rates. More details and additional experiments are provided in Appendix \ref{ap:data_implementation} and Appendix \ref{ap:addtional_exp}.

\begin{figure}[htb]
    \centering
    \subfigure[]{\includegraphics[width=0.485\linewidth]{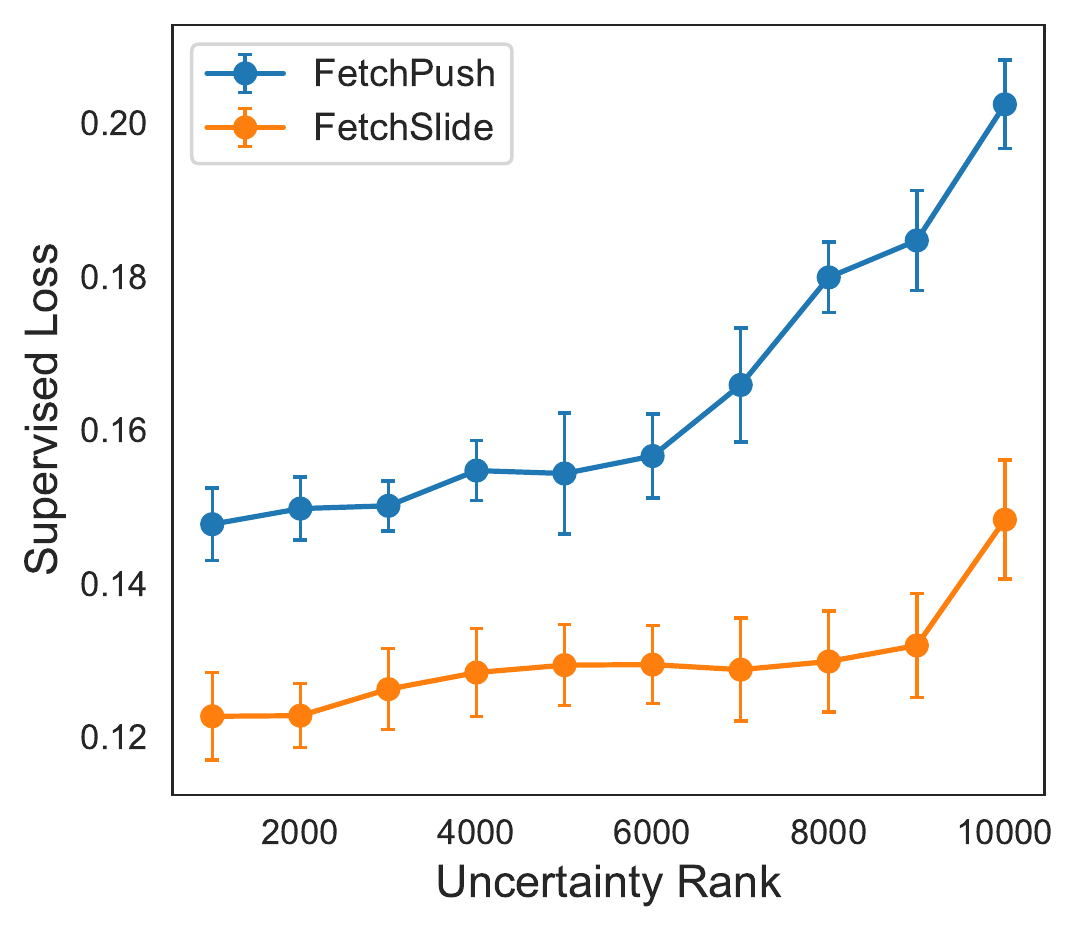}\label{fig:loss_vars}}
    \subfigure[]{\includegraphics[width=0.485\linewidth]{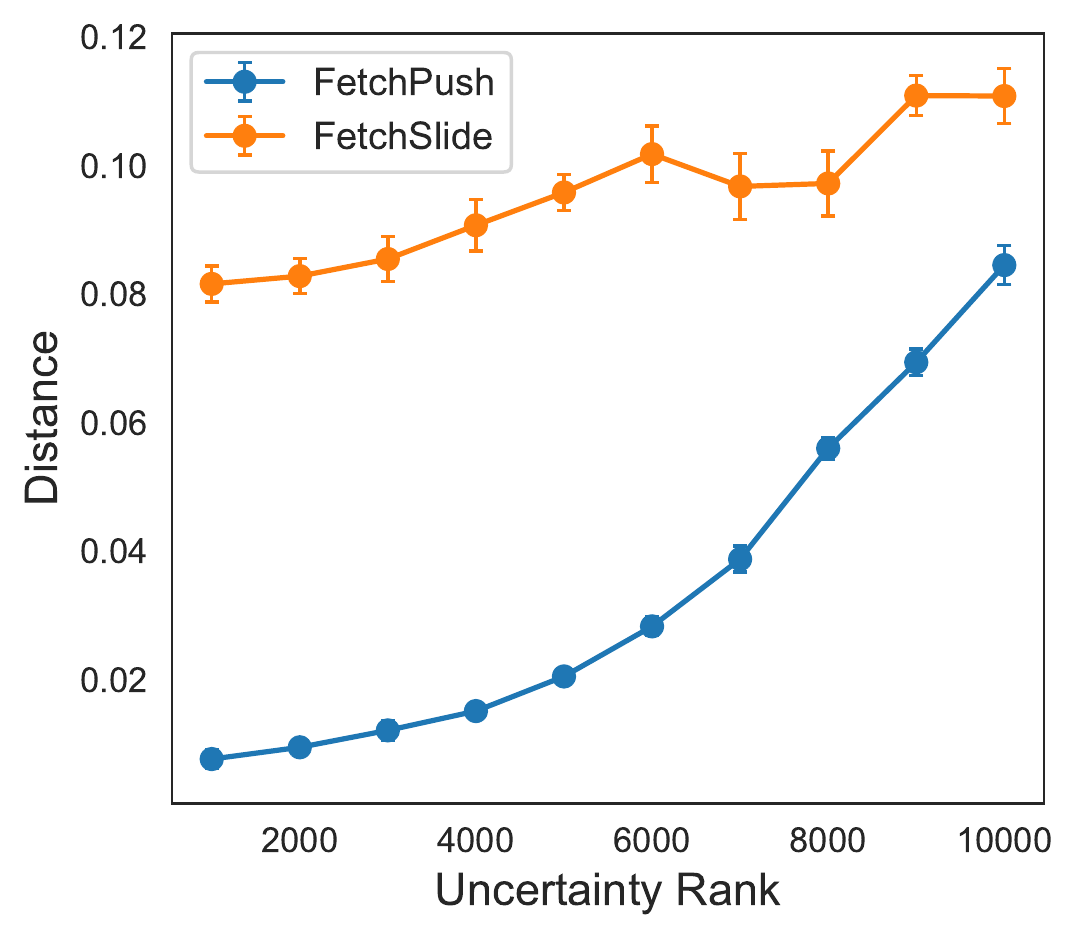}\label{fig:distance_vars}}
    \caption{Correlation between (a) supervised loss, (b) state-goal distance and the uncertainty rank.}
    \label{fig:vars_correlation}
    \vspace{-0.6cm}
\end{figure}

\subsection{Understanding the Uncertainty Weight}
\label{sec:understand_UW}
In our theoretical analysis, the uncertainty weight (UW) has the effect of reducing the worst-case distance between the training and unknown testing distributions. To make it more clear, we collect 10000 relabeled samples $(s,a,g')$ and rank these samples according to the UW in Eq \eqref{eq:final_UW}. For a sample $(s,a,g')$, we record two values, the supervised loss (i.e., $\|a - \pi_{\theta}(s,g')\|_2^2$), and the distance between the desired goal and the achieved goal (i.e., $\|(g'-\phi(s))\|_2^2$, short for ``state-goal distance"). Then, we average their values for every 1000 ranked samples. The results are shown in Figure \ref{fig:vars_correlation}. Interestingly, UW assigns more weights to samples with larger supervised loss, which may also be related to Distributionally Robust Optimization \cite{rahimian2019distributionally,goh2010distributionally}, thereby improving performance on worst-case scenarios. Moreover, UW prefers samples with larger state-goal distance. Since every state-goal pair $(s,g')$ defines a task from $s$ to $g'$, UW enhances harder tasks with larger state-goal distance. In general, OOD goals are relatively further away than IID goals, which also interprets why UW works for OOD generalization.

\begin{figure}[h!]
    \centering
    \includegraphics[width=1\linewidth]{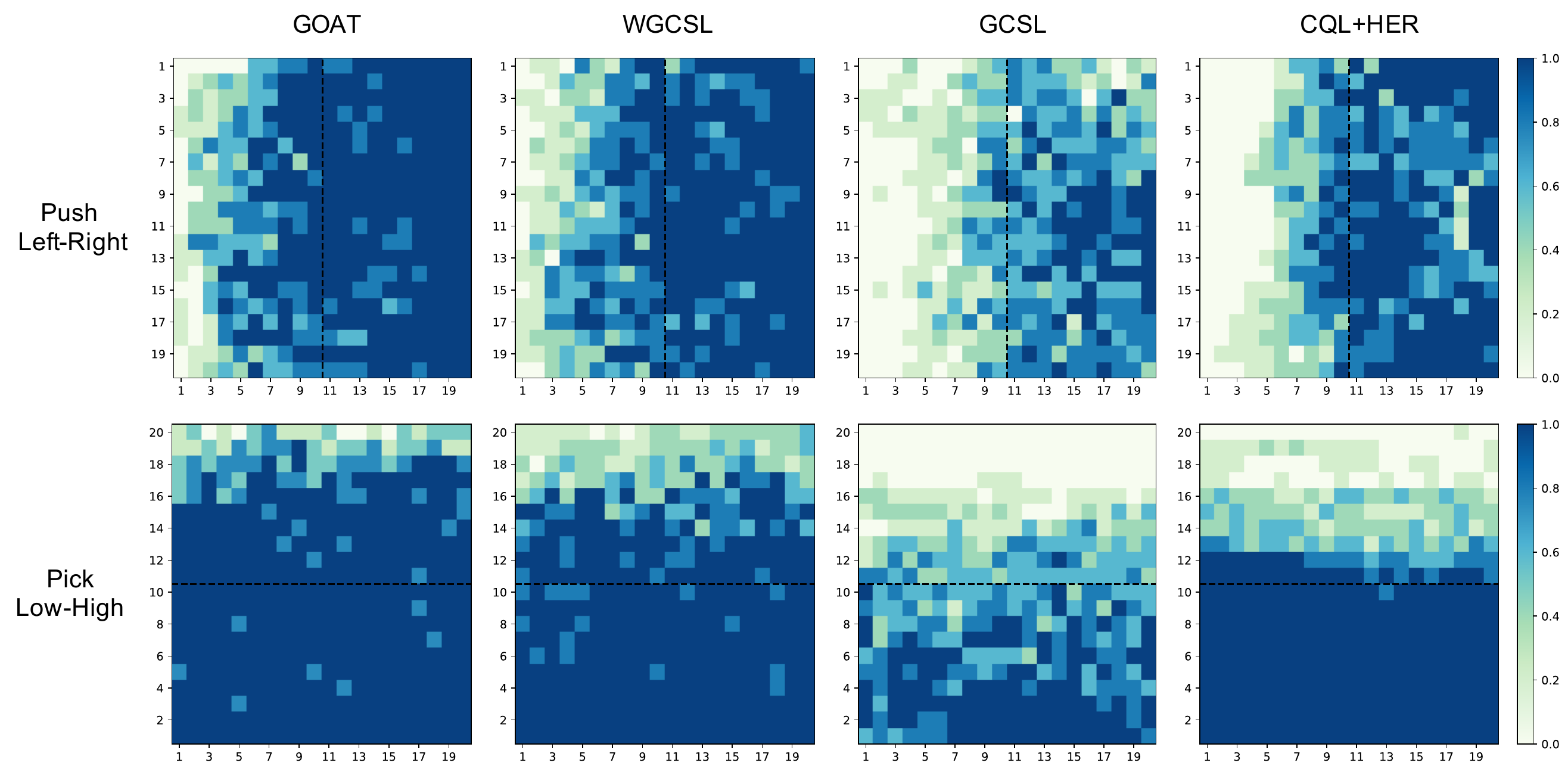}
    % \vspace{-5pt}
    \caption{The coverage of successful goals. The darkness of color represents the success rate of each goal for 5 random seeds. The black dotted line is the dividing line between IID and OOD goals. The IID areas are the right half (top row) and the lower half (bottom row) rectangles for the two tasks.}
    \label{fig:success_coverage}
\end{figure}

\subsection{Generalizing to OOD Goals}
\label{sec:benchmark_results}

Table \ref{tab:success_rate_small} reports the average success rates of GOAT and other baselines on the introduced benchmark. We denote GOAT with expectile regression as GOAT($\tau$), where $\tau<0.5$. From the results, we can conclude that OOD generalization is more challenging than IID tasks. For example, the performance of GoFAR, GCSL, and BC drops by more than half on OOD tasks. On the contrary, GOAT and GOAT($\tau$) achieve the highest OOD success rates over 16 out of 17 tasks. Compared with WGCSL, GOAT improves the IID performance slightly but considerably enhances the OOD performance. 

While CQL+HER and MSG+HER exhibit better performance than GCSL and BC, they are worse than weighted imitation learning methods WGCSL and GOAT, possibly due to pessimism restraining generalization. Besides, they fail on hard tasks such as Slide and HandReach. Another observation is that although GOAT, WGCSL, GoFAR are all weighted imitation learning methods, their OOD performance varies significantly, indicating components of weighted imitation learning also matter. To better understand these components, we will present an in-depth ablation analysis in Section \ref{sec:ablations}.

In Figure \ref{fig:success_coverage}, we visualize the coverage of successful goals in Push Left-Right and Pick Low-High tasks, given fixed initial states at the right center and bottom center, respectively. Each small square represents a goal in the goal space, and their darkness represents the average success rate for 5 random seeds. The results demonstrate that GOAT has the largest coverage of successful goals among the baselines, including the strong baseline WGCSL. Notably, both CQL+HER and GCSL exhibit limitations in their capacity to generalize to unseen goals. Specifically, CQL+HER is restricted to the training distribution, whereas GCSL displays inadequate coverage for even IID goals due to overfitting to noise. The observed results are also in alignment with our didactic example in Section \ref{sec:didactic_example}.

\begin{table}[t]
% \vspace{-0.3cm}
    \centering
    \caption{Ablations of each component of GOAT.}
     \begin{adjustbox}{max width=\linewidth}
    \begin{tabular}{l|ccccccc}
    \toprule
       Success Rate ($\%$)  &  BC  & +HER & +EAW & +DSW & +Ens & + UW & + ER \\
       \midrule
        OOD Tasks &  21.7 & 28.8 & 53.1 & 62.1 & 63.4 & 67.9 & 70.9  \\
       Increment & $+ 0$ & $+ 7.1$ & $+ 24.3$ &  $+ 9.0$ & $+ 1.3$ & $+ 4.5$ & $+ 3.0$ \\
       \midrule
       All Tasks &  34.8 & 50.7 & 65.4 & 71.1 & 72.2 & 75.7 & 77.9  \\
        Increment & $+ 0$ & $+ 15.9$ & $+ 14.7$ &  $+ 5.7$ & $+ 1.1$ & $+ 3.5$ & $+ 2.2$ \\
       \bottomrule
    \end{tabular}
    \end{adjustbox}
    \label{tab:ablations}
\end{table}

\subsection{Ablations}
\label{sec:ablations}
To measure the contribution of each component of GOAT, we gradually add one component from BC to GOAT and record the performance increment caused by each component. As shown in Table \ref{tab:ablations}, the recorded results are average success rates of 17 OOD tasks and all 26 tasks. On average, each component brings improvement for OOD generalization of offline GCRL. For OOD tasks, EAW and DSW contribute the most by improving the surrogate expert policy for imitating. Besides, HER and UW also bring considerable improvement through data augmentation and uncertainty re-weighting. In addition, ensemble technique (Ens) improves the estimation of value functions but has the least effect on the overall performance. Expectile regression (ER) improves the average performance, but slightly reduces OOD performance on hard tasks such as Slide Near-Far and HandReach as shown in Table \ref{tab:success_rate_small}. Furthermore, we also compare variants of GOAT with V functions and $\chi^2$-divergence in Appendix \ref{ap:additional_ablations}.

\begin{figure}[ht]
    \centering
    \includegraphics[width=1\linewidth]{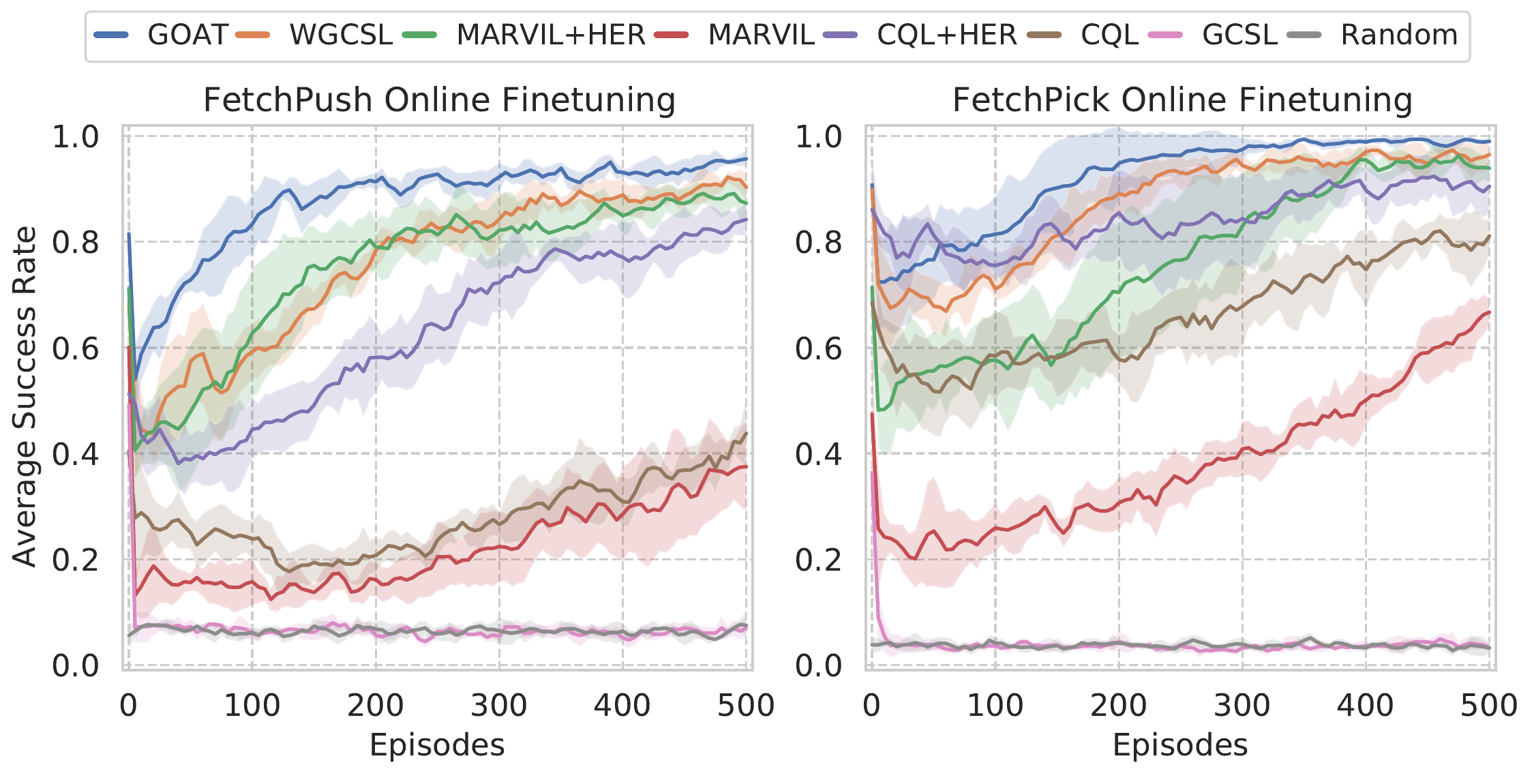}
    \vspace{-15pt}
    \caption{Online fine-tuning using DDPG+HER for different pre-trained agents on FetchPush and FetchPick tasks.}
    \label{fig:online_finetune}
\end{figure}

\subsection{Online Fine-tuning to Unseen Goals}
 We design an experiment to fine-tune pre-trained agents with online samples to verify whether the generalization ability of pre-trained agents is beneficial for online learning. The pre-trained agents are trained on offline datasets with partial coverage (Right2Right) and fine-tuned to full coverage (Right2Right, Right2Left, Left2Right, Left2Left). We apply DDPG+HER to fine-tune the policies and value functions after each episode collection. Additional Gaussian noise and random actions are applied for exploration. More detailed description can be found in Appendix \ref{ap:online_finetune}.

The experimental results are show in Figure \ref{fig:online_finetune}, which demonstrate that (1) most pre-trained agents learn faster than the randomly initialized agent (namely ``random") and (2) different initializations for goal-conditioned agents perform significantly different during fine-tuning. Specifically, GOAT outperforms other methods on the efficiency of online fine-tuning, while CQL, MARVIL \cite{wang2018exponentially} and GCSL result in slow-growing curves. We observe that the performance of GCSL initialization is similar to that of random initialization. It is likely that value networks contain valuable information for DDPG+HER agents to transfer from offline to online. This also explains why GOAT brings improvement, as it enhances value function learning via ensemble and expectile regression.

\section{Related Work}
\paragraph{Goal-conditioned RL} GCRL is a branch of reinforcement learning where agents need to achieve multiple goals sharing the same environmental dynamics~\cite{schaul2015universal,andrychowicz2017hindsight}. Goal relabeling~\citep{andrychowicz2017hindsight,li2020generalized,eysenbach2020rewriting,yang2021mher} is an effective technique that handles the sparse reward problem in GCRL and augments the data for policy learning. To improve the generalization ability, several prior works mainly focus on learning generalizable representations, e.g., combining Successor Feature with UVFA~\cite{ma2018universal,borsa2018universal}, decomposing $Q$ value via Bilinear Value Networks \cite{hong2022bi}, and learning discretization bottleneck representation for goals~\cite{islamdiscrete}. \citet{han2021learning} propose to learn invariant representation via aligned sampling to tackle the spurious feature problem. Our work differs from previous works in that we consider the offline GCRL setting, where pessimism can inhibit OOD generalization.

\paragraph{Offline RL and Offline GCRL} Offline RL handles the distribution shift challenge and learns policies from static datasets \cite{levine2020offline}. Generally, offline RL methods can be divided into two main directions, i.e., policy regularization and value underestimation. The first direction includes methods that constrain the learned policy to be close to the behavior policy under certain distance measure \cite{wang2018exponentially,fujimoto2019off,nair2020awac,yang2021believe,fujimoto2021minimalist}. Another direction is to underestimate values for OOD actions \cite{kumar2020conservative,yu2021combo,an2021uncertainty,bai2022pessimistic,yangrorl2022,ghasemipour2022so}. As for offline GCRL, current methods can also be grouped into policy regularization \cite{yang2022rethinking,ma2022far} and value underestimation \cite{chebotar2021actionable} methods. 
Different from prior works, our work focuses on learning policies from offline data and improving the ability to generalize to out-of-distribution goals.

\paragraph{Domain Generalization (DG)} DG aims to learn a model from training domains that can generalize on unseen testing domains \cite{zhou2021domain,wang2022generalizing}. Solutions to DG include data augmentation \cite{zhou2020learning,zhou2021domain2}, meta learning \cite{li2018learning,balaji2018metareg}, representation learning \cite{li2018deep} and distributionally robust optimization \cite{sagawa2019distributionally}. In reinforcement learning, DG is handled with data augmentation \cite{wang2020improving}, environment generation \cite{jiang2021prioritized}, and representation learning \cite{mazoure2021improving,sonar2021invariant,han2021learning}. Unlike these works, we mainly consider the covariate shift and handle pessimism and generalization simultaneously for OOD generalization of offline GCRL.

\section{Conclusion}
Learning from purely offline datasets and generalizing to unseen goals is one of the pursuits of the RL community. In this paper, we investigate the problem of out-of-distribution (OOD) generalization of offline GCRL. Through theoretical analysis and empirical evaluation, we demonstrate that (1) the choice of offline RL methods, particularly weighted imitation learning, and (2) the techniques to minimize
the generalization bound, are crucial for this problem. With these insights, we propose GOAT, a new weighted imitation learning method that achieves strong OOD generalization performance across a variety of tasks. In the future, we believe our work will inspire more scalable and generalizable reinforcement learning research.

\section{Limitations}
\label{ap:limit}
The major limitation of this work is that we mainly consider algorithmic designs motivated by the OOD generalization theory. There are many interesting future directions not included in this paper, e.g.,  studying representation learning \cite{mazoure2021improving}, goal embeddings \cite{islamdiscrete}, world models \cite{anand2021procedural,ding2022generalizing}, and network designs \cite{lee2022multi,xu2022prompting,hong2022bi} to improve OOD generalization for offline RL and offline GCRL.

\section*{Acknowledgements}
This work is supported by GRF 16310222 and GRF 16201320, in part by Science and Technology Innovation 2030 - “New Generation Artificial Intelligence” Major Project (No. 2018AAA0100904) and the National Natural Science
Foundation of China (62176135). The authors would like to thank the anonymous reviewers for their comments to improve the paper.

% In the unusual situation where you want a paper to appear in the
% references without citing it in the main text, use \nocite
\nocite{langley00}

\bibliography{main}
\bibliographystyle{icml2023}

%%%%%%%%%%%%%%%%%%%%%%%%%%%%%%%%%%%%%%%%%%%%%%%%%%%%%%%%%%%%%%%%%%%%%%%%%%%%%%%
%%%%%%%%%%%%%%%%%%%%%%%%%%%%%%%%%%%%%%%%%%%%%%%%%%%%%%%%%%%%%%%%%%%%%%%%%%%%%%%
% APPENDIX
%%%%%%%%%%%%%%%%%%%%%%%%%%%%%%%%%%%%%%%%%%%%%%%%%%%%%%%%%%%%%%%%%%%%%%%%%%%%%%%
%%%%%%%%%%%%%%%%%%%%%%%%%%%%%%%%%%%%%%%%%%%%%%%%%%%%%%%%%%%%%%%%%%%%%%%%%%%%%%%
\newpage
\appendix
\onecolumn

\part{Appendix}

\section{Algorithm Pseudo Code}
\begin{algorithm}[htb]
   \caption{GOAT Algorithm}
   \label{alg:wgcsl+}
\begin{algorithmic}
   \STATE Initialize policy $\pi_{\theta}$ and $N$ value functions $Q_1, \dots, Q_N$, and two FIFO queues $B_a=\{\}$\ and $B_{std}=\{\}$ ;
   \FOR{training step $=1,2,\ldots$}
        \STATE Sample a mini-batch from the offline dataset: $\{(s_t, a_t,g, r_t, s_{t+1})\} \sim D$;
	    \STATE Relabel the mini-batch with a probability of $p_{relabel}$: $\{(s_t,a_t,g',r_t', s_{t+1})\} \sim D_{relabel}$ ;
	    \STATE Update value functions $Q_i, i\in[1,N]$ to minimize Eq \eqref{eq:td_loss} with the mini-batch ;
	    \STATE Estimate advantage values $A(s_t,a_t,g')$ using $Q_i, i\in[1,N]$ and store them into the queue $B_a$;
	    \STATE Get the $\alpha$ percentile advantage value from $B_a$ to calculate the DSW;
        \STATE Estimate the bootstrapped uncertainty $\Std(s_t,g)$ and store them into $B_{std}$;
        \STATE Compute the UW according to Eq \eqref{eq:final_UW};
	    \STATE Update policy $\pi_{\theta}$ to maximize the objective in Eq \eqref{eq:J_wgcsl} with the mini-batch:
   \ENDFOR
\end{algorithmic}
\end{algorithm}

\section{Theoretical Proofs}
\label{ap:theory}

\subsection{Useful Lemmas}
\begin{lemma}
\label{lemma:V_diff}
    Assume the maximum reward is $R_{max}$, 
    For any two goal-conditioned policies $\pi$ and $\pi_E$, we have that 
     \begin{equation*}
        \begin{split}
             V^{\pi_E}(s_0,g) - V^{\pi}(s_0,g) 
             \leq
            \frac{2 R_{\max}}{(1-\gamma)^2} \mathbb{E}_{s \sim d_{\pi_{E}}(s|s_0,g)}\left[ D_{\mathrm{TV}}\big(\pi(\cdot|s,g), \pi_E(\cdot|s,g) \big) \right]
        \end{split}
    \end{equation*}
\end{lemma}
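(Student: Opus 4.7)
The plan is to derive the bound by combining a performance difference identity with a Scheffé-type total variation inequality, adapted to the goal-conditioned setting.

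First, I would establish the goal-conditioned performance difference identity
\begin{equation*}
V^{\pi_E}(s_0,g) - V^{\pi}(s_0,g) = \frac{1}{1-\gamma}\,\mathbb{E}_{s\sim d_{\pi_E}(\cdot|s_0,g)}\left[\int_a \bigl(\pi_E(a|s,g) - \pi(a|s,g)\bigr)\,Q^{\pi}(s,a,g)\,da\right].
\end{equation*}
This follows from the standard telescoping identity $V^{\pi_E}(s_0,g) - V^{\pi}(s_0,g) = \mathbb{E}_{\pi_E}\bigl[\sum_{t\geq 0}\gamma^t(r_t + \gamma V^{\pi}(s_{t+1},g) - V^{\pi}(s_t,g))\bigr]$, then taking the inner conditional expectation over $a_t$ to rewrite each summand as $Q^{\pi}(s_t,a_t,g) - V^{\pi}(s_t,g) = Q^{\pi}(s_t,a_t,g) - \mathbb{E}_{a\sim\pi(\cdot|s_t,g)}[Q^{\pi}(s_t,a,g)]$, and finally absorbing the time sum into the discounted occupancy via $\sum_t \gamma^t \Pr(s_t = s \mid \pi_E, s_0, g) = d_{\pi_E}(s|s_0,g)/(1-\gamma)$. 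Since $g$ is fixed throughout an episode, the goal merely parametrizes every quantity and the derivation goes through unchanged.

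Next, I would bound the inner integral pointwise in $s$. The paper's set-supremum definition of $D_{\mathrm{TV}}$ coincides with $\tfrac{1}{2}\|\pi_E(\cdot|s,g) - \pi(\cdot|s,g)\|_{L^1}$: the supremum is attained on the set where $\pi_E>\pi$, and by mass conservation the positive and negative parts of the signed measure $\pi_E - \pi$ have equal magnitude. Combined with H\"older's inequality and the uniform bound $\|Q^{\pi}\|_\infty \leq R_{\max}/(1-\gamma)$, this gives
\begin{equation*}
\left|\int_a (\pi_E - \pi)\,Q^{\pi}\,da\right| \leq \|Q^{\pi}(s,\cdot,g)\|_\infty \cdot \|\pi_E - \pi\|_{L^1} \leq \frac{2R_{\max}}{1-\gamma}\,D_{\mathrm{TV}}\bigl(\pi(\cdot|s,g),\pi_E(\cdot|s,g)\bigr).
\end{equation*}
Taking expectation over $s\sim d_{\pi_E}(\cdot|s_0,g)$ and multiplying by the $1/(1-\gamma)$ prefactor from the performance difference identity yields exactly the claimed $2R_{\max}/(1-\gamma)^2$ constant.

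The main subtlety I anticipate is verifying that the paper's one-sided set-supremum $D_{\mathrm{TV}}$ matches the conventional total variation used in Scheffé's inequality, so that the factor of $2$ in the numerator is correct rather than $1$. Everything else is bookkeeping: the two $(1-\gamma)^{-1}$ factors arise transparently, one from normalizing the discounted occupancy measure and one from the uniform bound on $Q^{\pi}$, and there is no need to invoke concentration or any OOD considerations at this stage since the lemma is a purely deterministic policy-discrepancy statement.
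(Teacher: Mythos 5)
Your proof is correct, but it takes a different route from the paper's. You invoke the performance difference lemma: write $V^{\pi_E}(s_0,g)-V^{\pi}(s_0,g)=\frac{1}{1-\gamma}\mathbb{E}_{s\sim d_{\pi_E}(\cdot|s_0,g)}\bigl[\int_a(\pi_E-\pi)Q^{\pi}\,da\bigr]$ and then bound the inner integral by H\"older together with $\|Q^{\pi}\|_\infty\le R_{\max}/(1-\gamma)$ and the identity $\|\pi_E-\pi\|_{L^1}=2D_{\mathrm{TV}}$, so the two $(1-\gamma)^{-1}$ factors appear one at a time. The paper instead works at the level of discounted state--action occupancy measures: it writes $V^{\pi}=\frac{1}{1-\gamma}\mathbb{E}_{(s,a)\sim\rho_{\pi}}[r]$, bounds the value gap by $\frac{2R_{\max}}{1-\gamma}D_{\mathrm{TV}}\bigl(\rho_{\pi_E}(\cdot|s_0,g),\rho_{\pi}(\cdot|s_0,g)\bigr)$, and then cites an external result (Lemma 5 of Xu, Li and Yu, NeurIPS 2020) to convert the occupancy-measure TV distance into $\frac{1}{1-\gamma}\mathbb{E}_{s\sim d_{\pi_E}}[D_{\mathrm{TV}}(\pi,\pi_E)]$. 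Your argument is self-contained and arguably more transparent about where each $(1-\gamma)^{-1}$ comes from, while the paper's is shorter because it offloads the occupancy-to-policy conversion to a known lemma; both yield the identical constant $2R_{\max}/(1-\gamma)^2$. Your handling of the one-sided set-supremum definition of $D_{\mathrm{TV}}$ is right, and the only implicit assumption you share with the paper is that rewards are bounded in magnitude by $R_{\max}$ (so that $\|Q^{\pi}\|_\infty\le R_{\max}/(1-\gamma)$), which holds in the sparse binary-reward setting considered here.
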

\begin{proof}
For any policy $\pi$,
its value function can be formulated as $V^\pi = \frac{1}{1-\gamma} \mathbb{E}_{(s,a) \sim \rho_\pi}[r(s, a)]$ \cite{puterman2014markov}. In the goal-conditioned setting, we also need to include goals into consideration. Then, we can derive
\begin{equation*}
    \begin{split}
        \vert V^{\pi_E}(s_0,g) - V^{\pi}(s_0,g) \vert &= \left|\frac{1}{1-\gamma} \mathbb{E}_{(s,a) \sim \rho_{\pi_E}(\cdot|s_0,g)} [r(s,a,g)] - \frac{1}{1-\gamma} \mathbb{E}_{(s,a,g) \sim \rho_{\pi}(\cdot|s_0,g)} [r(s,a,g)]  \right|
        \\
        &\leq \frac{1}{1-\gamma} \sum_{(s,a) \in \mathcal{S} \times \mathcal{A}} \left| \bigl( \rho_{\pi_E}(s,a|s_0,g) - \rho_{\pi}(s,a|s_0,g) \bigr) r(s,a,g) \right|
        \\
        &\leq \frac{2 R_{\textnormal{max}}}{1-\gamma} D_{\textnormal{TV}} (\rho_{\pi_E}(\cdot|s_0,g), \rho_{\pi}(\cdot|s_0,g)).
    \end{split}
\end{equation*}
With Lemma 5 in \cite{DBLP:conf/nips/XuLY20}, we comlete the proof: 
\begin{equation*}
    \vert V^{\pi_E}(s_0,g) - V^{\pi}(s_0,g) \vert \leq \frac{2 R_{\textnormal{max}}}{1-\gamma} D_{\textnormal{TV}} (\rho_{\pi_E}(\cdot|s_0,g), \rho_{\pi}(\cdot|s_0,g)) \leq \frac{2 R_{\max}}{(1-\gamma)^2} \mathbb{E}_{s \sim d_{\pi_{E}}(s|s_0,g)}\left[ D_{\mathrm{TV}}\big(\pi(\cdot|s,g), \pi_E(\cdot|s,g) \big) \right]
\end{equation*}
\end{proof}

\begin{lemma}
\label{lemma:suboptimal}
    Assume the maximum reward is $R_{max}$, 
    For any two goal-conditioned policies $\pi$ and $\pi_E$, we have that 
     \begin{equation*}
        \begin{split}
        \mathrm{SubOpt}(\pi_E, \pi) =\mathbb{E}_{(s_0,g) \sim P_{X}^{\mathcal{T}}} [V^{\pi_E}(s_0,g) - V^{\pi}(s_0,g)] 
             \leq
            \frac{2 R_{\max}}{(1-\gamma)^2} \mathbb{E}_{(s_0,g) \sim P_{X}^{\mathcal{T}}\atop s \sim d_{\pi_{E}}(s|s_0,g)}\left[ D_{\mathrm{TV}}\big(\pi(\cdot|s,g), \pi_E(\cdot|s,g) \big) \right]
        \end{split}
    \end{equation*}
\end{lemma}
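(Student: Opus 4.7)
The plan is to obtain Lemma \ref{lemma:suboptimal} as an almost immediate consequence of Lemma \ref{lemma:V_diff}, which already supplies the needed pointwise bound on the value gap for each fixed pair $(s_0, g)$. Concretely, Lemma \ref{lemma:V_diff} gives, for every initial state $s_0$ and goal $g$,
\begin{equation*}
V^{\pi_E}(s_0,g) - V^{\pi}(s_0,g) \leq \frac{2R_{\max}}{(1-\gamma)^2} \mathbb{E}_{s \sim d_{\pi_E}(s|s_0,g)} \left[ D_{\mathrm{TV}}\bigl(\pi(\cdot|s,g), \pi_E(\cdot|s,g)\bigr) \right].
\end{equation*}
The target inequality is nothing but the expectation of this pointwise bound with respect to the testing distribution $(s_0, g) \sim P_X^{\mathcal{T}}$.

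The steps I would carry out are as follows. First, I would note that the definition of suboptimality in Eq.~\eqref{eq:suboptimal} is already an expectation over $(s_0,g) \sim P_X^{\mathcal{T}}$ of the value gap $V^{\pi_E}(s_0,g) - V^{\pi}(s_0,g)$. Second, I would invoke Lemma \ref{lemma:V_diff} pointwise inside this expectation. Third, I would apply monotonicity of expectation to transfer the inequality outside, and use linearity (or Fubini--Tonelli, since the integrand is nonnegative) to merge the outer expectation over $(s_0,g)$ with the inner conditional expectation over $s \sim d_{\pi_E}(s|s_0,g)$ into the single joint expectation appearing in the statement.

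Putting this together yields
\begin{equation*}
\mathrm{SubOpt}(\pi_E, \pi) = \mathbb{E}_{(s_0,g) \sim P_X^{\mathcal{T}}} \bigl[V^{\pi_E}(s_0,g) - V^{\pi}(s_0,g)\bigr] \leq \frac{2R_{\max}}{(1-\gamma)^2} \mathbb{E}_{\substack{(s_0,g)\sim P_X^{\mathcal{T}} \\ s \sim d_{\pi_E}(s|s_0,g)}} \left[ D_{\mathrm{TV}}\bigl(\pi(\cdot|s,g), \pi_E(\cdot|s,g)\bigr) \right],
\end{equation*}
which is exactly the claim. There is essentially no hard step here: the heavy lifting (the simulation-lemma-style argument that controls the value gap by the expected total variation distance along the expert's occupancy measure) is already done inside Lemma \ref{lemma:V_diff}. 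The only point worth checking is that the integrand is nonnegative and bounded (since $D_{\mathrm{TV}} \in [0,1]$ and $R_{\max}, \gamma$ are fixed constants), so all expectations are well-defined and the exchange of integration order required to write the right-hand side as a single joint expectation is justified without subtlety. Thus the proof of Lemma \ref{lemma:suboptimal} is effectively a one-line corollary of the preceding lemma.
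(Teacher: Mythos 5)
Your proposal is correct and matches the paper's argument: the paper also obtains this lemma directly by combining Lemma \ref{lemma:V_diff} with the definition of suboptimality in Eq.~\eqref{eq:suboptimal}, i.e., by taking the expectation of the pointwise bound over $(s_0,g)\sim P_X^{\mathcal{T}}$. Your additional remark that the integrand is nonnegative and bounded, so the expectations are well-defined, is a harmless and accurate elaboration.
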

Lemma \ref{lemma:suboptimal} is a direct result of combining Lemma \ref{lemma:V_diff} and the definition of the suboptimality in Eq \eqref{eq:suboptimal}.

\begin{definition}
    For any state-goal distribution $\rho(s,g)$, we define 
    \begin{equation*}
        \varepsilon^{\rho}(\pi_{E}, \pi)=\mathbb{E}_{(s,g) \sim \rho(s,g)} \left[ D_{\mathrm{TV}}\big(\pi(\cdot|s,g), \pi_E(\cdot|s,g) \big)\right].
    \end{equation*}
\end{definition}

\begin{lemma}
\label{lemma:triangle}
    For any three policy $\pi_1, \pi_2, \pi_3$ and any state-goal distribution $\rho$, we have:
    \begin{equation*}
        \varepsilon^{\rho}(\pi_{1}, \pi_2)\leq \varepsilon^{\rho}(\pi_{1}, \pi_3) + \varepsilon^{\rho}(\pi_{3}, \pi_2)
    \end{equation*}
\end{lemma}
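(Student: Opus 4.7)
The plan is to reduce the claim to the pointwise triangle inequality for the Total Variation distance and then integrate against $\rho$. The key observation is that, with the sup-over-sets definition of $D_{\mathrm{TV}}$ given in the preliminaries, TV is a genuine metric on the space of probability measures over $\mathcal{A}$ for each fixed conditioning pair $(s,g)$, so the bound will hold before any averaging and will survive taking expectation.

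Concretely, I would first fix an arbitrary $(s,g)$ and note that for every measurable $B\subset\mathcal{A}$, the usual triangle inequality on $\mathbb{R}$ inside the absolute value gives
\begin{equation*}
\left|\int_{a\in B}(\pi_1(a|s,g)-\pi_2(a|s,g))\right| \leq \left|\int_{a\in B}(\pi_1(a|s,g)-\pi_3(a|s,g))\right| + \left|\int_{a\in B}(\pi_3(a|s,g)-\pi_2(a|s,g))\right|.
\end{equation*}
Taking the supremum of both sides over measurable $B$ and using the elementary fact $\sup_B(f(B)+g(B))\leq \sup_B f(B) + \sup_B g(B)$ yields the pointwise inequality
\begin{equation*}
D_{\mathrm{TV}}(\pi_1(\cdot|s,g),\pi_2(\cdot|s,g)) \leq D_{\mathrm{TV}}(\pi_1(\cdot|s,g),\pi_3(\cdot|s,g)) + D_{\mathrm{TV}}(\pi_3(\cdot|s,g),\pi_2(\cdot|s,g)).
\end{equation*}

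Finally, I would take expectation of both sides under $(s,g)\sim\rho$. By monotonicity and linearity of the expectation, this immediately gives $\varepsilon^{\rho}(\pi_{1}, \pi_2)\leq \varepsilon^{\rho}(\pi_{1}, \pi_3) + \varepsilon^{\rho}(\pi_{3}, \pi_2)$, which is the claim. There is no real obstacle here; the only subtlety worth flagging is the supremum-splitting step, which is the standard reason TV inherits the triangle inequality from the absolute value, together with the routine measurability check needed for the expectation to be well defined.
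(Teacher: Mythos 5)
Your proof is correct and follows essentially the same route as the paper: establish the pointwise triangle inequality for $D_{\mathrm{TV}}$ at each $(s,g)$ and then take expectation under $\rho$. The only difference is that you derive the TV triangle inequality explicitly from the sup-over-sets definition, whereas the paper simply invokes the fact that $D_{\mathrm{TV}}$ is a distance metric, so your write-up is just a slightly more detailed version of the same argument.
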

\begin{proof}
    This can be proved by noticing that $D_{TV}$ is a distance metric.
    \begin{equation*}
        \begin{aligned}
            \varepsilon^{\rho}(\pi_{1}, \pi_2) &=\mathbb{E}_{(s,g) \sim \rho(s,g)} \left[ D_{\mathrm{TV}}\big(\pi_1(\cdot|s,g), \pi_2(\cdot|s,g) \big)\right] \\
            &\leq  \mathbb{E}_{(s,g) \sim \rho(s,g)} \left[ D_{\mathrm{TV}}\big(\pi_1(\cdot|s,g), \pi_3(\cdot|s,g) \big) + D_{\mathrm{TV}}\big(\pi_3(\cdot|s,g), \pi_2(\cdot|s,g) \big)\right] \\
            &\leq \varepsilon^{\rho}(\pi_{1}, \pi_3) + \varepsilon^{\rho}(\pi_{3}, \pi_2)
        \end{aligned}
    \end{equation*}
\end{proof}

\begin{lemma}
    \label{lemma:basic_policy_gap_bound}
    Assume the expert policy $\pi_E$ is invariant across training and testing domains. For a policy $\pi$, we have
    \begin{equation*}
    \begin{aligned}
        \varepsilon^{\mathcal{T}}(\pi_{E}, \pi) \leq 
        \varepsilon^{\mathcal{S}}(\pi_{E}, \pi) +  d_1(\mathcal{T}, \mathcal{S})
    \end{aligned}
    \end{equation*}
    where the variation divergence $d_1$ between two distribution $S_1$ and $S_2$ is defined as follows:
\begin{align*}
    d_1(S_1, S_2) = 2\sup_{\mathcal{J} \subset \mathcal{X}} \left|\sum_{x \in \mathcal{J}} \left( P_{S_1}(x) -  P_{S_2}(x) \right) \right|,
\end{align*}
\end{lemma}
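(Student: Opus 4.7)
The bound in Lemma~\ref{lemma:basic_policy_gap_bound} is a standard change-of-measure inequality in the style of Ben-David et al.\ (2010) and Mansour et al.\ (2009). My plan is to express the gap $\varepsilon^{\mathcal{T}}(\pi_E,\pi) - \varepsilon^{\mathcal{S}}(\pi_E,\pi)$ as a single integral against the signed measure $P_X^{\mathcal{T}} - P_X^{\mathcal{S}}$, and then apply H\"older's inequality, exploiting that the integrand is uniformly bounded by one. The invariance assumption on $\pi_E$ is used only to ensure that the integrand itself does not depend on the domain, so the two expectations differ exclusively through their sampling measures.

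\textbf{Key steps.} First, I would unfold the definition of $\varepsilon^{\rho}$ from the appendix, setting $x := (s,g) \in \mathcal{X}$ and $f(x) := D_{\mathrm{TV}}\!\bigl(\pi(\cdot|x),\pi_E(\cdot|x)\bigr)$. Because $\pi_E$ is assumed invariant across domains, $f$ is the \emph{same} function in both terms, and the appendix's definition yields
\begin{equation*}
\varepsilon^{\mathcal{T}}(\pi_E,\pi) - \varepsilon^{\mathcal{S}}(\pi_E,\pi) \;=\; \int_{\mathcal{X}} f(x)\bigl(P_X^{\mathcal{T}}(x) - P_X^{\mathcal{S}}(x)\bigr)\,dx .
\end{equation*}
Second, I would note that $f(x) \in [0,1]$ for every $x$, since the total variation distance as defined in Section 3.3 of the paper is at most one. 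Third, by H\"older's inequality,
\begin{equation*}
\left| \int f(x)\bigl(P_X^{\mathcal{T}}(x) - P_X^{\mathcal{S}}(x)\bigr)\,dx \right| \;\le\; \|f\|_\infty \, \int \bigl|P_X^{\mathcal{T}}(x) - P_X^{\mathcal{S}}(x)\bigr| \,dx \;\le\; \int \bigl|P_X^{\mathcal{T}}(x) - P_X^{\mathcal{S}}(x)\bigr| \,dx .
\end{equation*}
Fourth, I would identify the $L^1$ distance between densities with the paper's $d_1$. Partitioning $\mathcal{X}$ into $A = \{P_X^{\mathcal{T}} \ge P_X^{\mathcal{S}}\}$ and its complement and using that both densities integrate to one gives the familiar identity
\begin{equation*}
\int |P_X^{\mathcal{T}} - P_X^{\mathcal{S}}|\,dx \;=\; 2 \sup_{\mathcal{J} \subset \mathcal{X}} \bigl(P_X^{\mathcal{T}}(\mathcal{J}) - P_X^{\mathcal{S}}(\mathcal{J})\bigr) \;=\; d_1(\mathcal{T},\mathcal{S}),
\end{equation*}
matching the paper's normalization (which is twice the classical TV). Combining the three pieces and dropping the absolute value on one side yields exactly $\varepsilon^{\mathcal{T}}(\pi_E,\pi) \le \varepsilon^{\mathcal{S}}(\pi_E,\pi) + d_1(\mathcal{T},\mathcal{S})$.

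\textbf{Expected obstacle.} The argument is conceptually elementary, so the only thing I need to be careful about is bookkeeping the constants: the paper's $d_1$ carries an explicit factor of $2$ relative to classical TV, while the policy discrepancy $\varepsilon^{\rho}$ is bounded by $1$ (not $1/2$) because $D_{\mathrm{TV}}$ is defined as $\sup_B |\int_B (\pi_1-\pi_2)|$. With those conventions aligned, the bound falls out without extra factors. A minor cosmetic point is reconciling the appendix's definition of $\varepsilon^{\rho}$ (an expectation over $(s,g)\sim \rho$) with the main-paper definition that additionally integrates $s$ against $d_{\pi_E}(\cdot|s_0,g)$; since $d_{\pi_E}$ is a fixed Markov kernel independent of the domain, the same integral identity above applies verbatim after viewing $\rho$ as the pushforward of $P_X^{\rho}$ through that kernel, and the data-processing style bound $d_1(\text{pushforward}) \le d_1(\text{original})$ preserves the inequality if one prefers to state $d_1$ in terms of $(s_0,g)$.
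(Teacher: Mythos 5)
Your proposal is correct and follows essentially the same route as the paper's proof: both exploit that, under the invariance of $\pi_E$, the discrepancy $\varepsilon^{\rho}(\pi_E,\pi)$ is the expectation of one fixed $[0,1]$-valued function of $(s_0,g)$, so the gap between domains is a single integral against $P_X^{\mathcal{T}}-P_X^{\mathcal{S}}$ bounded by its $L^1$ norm, which equals $d_1(\mathcal{T},\mathcal{S})$. The paper phrases this as add-and-subtract plus bounding the inner total-variation term by one, while you phrase it as H\"older's inequality with $\|f\|_\infty\le 1$; these are the same argument, and your remark on folding the kernel $d_{\pi_E}$ into the integrand (or invoking data processing) correctly handles the only bookkeeping subtlety.
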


\begin{proof}
    With the definition of variation divergence, we have 
    \begin{equation*}
        \begin{aligned}
            \varepsilon^{\mathcal{T}}(\pi_{E}, \pi) &= \varepsilon^{\mathcal{T}}(\pi_{E}, \pi) + \varepsilon^{\mathcal{S}}(\pi_{E}, \pi) - \varepsilon^{\mathcal{S}}(\pi_{E}, \pi) 
             \leq \varepsilon^{\mathcal{S}}(\pi_{E}, \pi) + | \varepsilon^{\mathcal{T}}(\pi_{E}, \pi) - \varepsilon^{\mathcal{S}}(\pi_{E}, \pi)| \\
            &= \varepsilon^{\mathcal{S}}(\pi_{E}, \pi) + \frac{1}{2} \sum_{(s_0,g)}  | P_X^{\mathcal{T}}(s_0,g) - P_X^{\mathcal{S}}(s_0,g)| \sum d_{\pi_E}(s|s_0,g)\sum_a | \pi_E(a|s,g)-\pi(a|s,g)| \\
             & \leq \varepsilon^{\mathcal{S}}(\pi_{E}, \pi) + \sum_{(s_0,g)}  | P_X^{\mathcal{T}}(s_0,g) - P_X^{\mathcal{S}}(s_0,g)| \\
            &\leq \varepsilon^{\mathcal{S}}(\pi_{E}, \pi) +  d_1(\mathcal{T}, \mathcal{S})
        \end{aligned}
    \end{equation*}
\end{proof}

% However, expert policy is unknown to us. We need to estimate the expert policy with the given dataset. The following theorem can facilitate our analysis. 
% \begin{theorem}
%     \label{tm:all_policy_gap_bound}
%     Assume the expert policy $\pi_E$ is invariant across training and testing domains. For any two policy $\pi, \hat \pi_E$, we have
%     \begin{equation*}
%     \begin{aligned}
%         \varepsilon^{\mathcal{T}}(\pi_{E}, \pi) \leq 
%         \varepsilon^{\mathcal{S}}(\hat \pi_{E}, \pi) + \varepsilon^{\mathcal{S}}(\hat \pi_{E}, \pi_E)  +  d_1(\mathcal{T}, \mathcal{S})
%     \end{aligned}
%     \end{equation*}
% \end{theorem}
% The proof follows Lemma \ref{lemma:triangle} and Theorem \ref{tm:basic_policy_gap_bound}.

% \begin{theorem}
%     \label{tm:performance_gap_bound}
%     Assume the expert policy $\pi_E$ is invariant across training and testing domains. For any two policy $\pi, \hat \pi_E$, we have
%     \begin{equation*}
%     \begin{aligned}
%         \text{SobOpt}(\pi_E, \pi) \leq  \frac{2R_{max}}{(1-\gamma)^2} [ \varepsilon^{\mathcal{S}}(\hat \pi_{E}, \pi) + \varepsilon^{\mathcal{S}}(\hat \pi_{E}, \pi_E)  
%         +  d_1(\mathcal{T}, \mathcal{S}) ]
%     \end{aligned}
%     \end{equation*}
% \end{theorem}
% This can be proved by combining Lemma \ref{lemma:suboptimal} and theorem \ref{tm:all_policy_gap_bound}.

\begin{lemma}[Generalization Bound for Finite ERM]
\label{lemma:finite_erm}
Consider finite hypothesis space $\mathcal{F}$ and bounded loss function in $[a,b]$. When optimizing empirical loss function $\hat L(f)=\frac{1}{m}\sum_i^m \mathcal{L}(f(x_i),y_i)$ instead of the expected one $ L(f)=\mathbb{E}_{(x,y)} \mathcal{L}(f(x),y)$, with probability as least $1-\delta$, the true loss can be bounded as:
\begin{equation*}
\begin{aligned}
     L(f) \leq \hat L(f) + \sqrt{\frac{(b-a)^2 (\log 2|\mathcal{F}|+\log \frac{1}{\delta})}{2m}}
\end{aligned}
\end{equation*}
\end{lemma}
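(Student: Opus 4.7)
The plan is to combine Hoeffding's inequality with a union bound over the finite class $\mathcal{F}$, which is the textbook recipe for ERM generalization bounds when the hypothesis space is finite and the loss is bounded. This gives an additive penalty of order $\sqrt{\log|\mathcal{F}|/m}$, matching the bound in the statement.

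First, I would fix a single hypothesis $f \in \mathcal{F}$. Because the samples $(x_i, y_i)$ are i.i.d.\ and the per-sample losses $\mathcal{L}(f(x_i), y_i)$ lie in the bounded interval $[a, b]$, the empirical average $\hat L(f) = \frac{1}{m}\sum_{i=1}^m \mathcal{L}(f(x_i), y_i)$ is an unbiased estimator of $L(f)$ formed from $m$ i.i.d.\ bounded random variables. Hoeffding's inequality then yields, for any $t > 0$, the two-sided bound $\Pr[|L(f) - \hat L(f)| \geq t] \leq 2 \exp(-2 m t^2 / (b-a)^2)$.

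Second, I would apply a union bound over all $f \in \mathcal{F}$. Since $|\mathcal{F}|$ is finite, the probability that the uniform deviation $\sup_{f \in \mathcal{F}} |L(f) - \hat L(f)|$ exceeds $t$ is at most $2 |\mathcal{F}| \exp(-2 m t^2 / (b-a)^2)$. Setting this quantity equal to $\delta$ and solving for $t$ gives $t = \sqrt{(b-a)^2 (\log 2|\mathcal{F}| + \log(1/\delta)) / (2m)}$, which is precisely the extra term in the statement. Restricting to the high-probability event then yields $L(f) \leq \hat L(f) + t$ uniformly over $\mathcal{F}$, and in particular for any ERM solution.

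There is no substantive obstacle: all steps are classical. The only mildly delicate point is bookkeeping of constants, namely whether to invoke the one-sided or two-sided form of Hoeffding. The factor $\log 2|\mathcal{F}|$ in the stated bound reflects the two-sided version, where the additional $\log 2$ arises from union-bounding both upper and lower tails before union-bounding across $\mathcal{F}$; a strictly one-sided argument would shave this off but the stated version is safer and standard.
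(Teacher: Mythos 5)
Your argument is correct and is precisely the standard Hoeffding-plus-union-bound proof of this classical result, which the paper itself does not re-derive but simply cites from Mohri et al.'s textbook. The constant bookkeeping also checks out: solving $2|\mathcal{F}|\exp(-2mt^2/(b-a)^2)=\delta$ gives exactly $t=\sqrt{(b-a)^2(\log 2|\mathcal{F}|+\log\frac{1}{\delta})/(2m)}$, matching the stated bound.
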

Lemma \ref{lemma:finite_erm} is a well-known result from \cite{mohri2018foundations}.

\subsection{Proof of Theorem \ref{tm:performance_gap_bound_finite}}
\label{ap:proof_main}
\begin{proof}
    With Lemma \ref{lemma:suboptimal} and the definition of policy discrepancy on training and testing distributions:
    \begin{equation*}
    \begin{aligned}
    &\varepsilon^{\mathcal{T}}(\pi_{E}, \pi)=\mathbb{E}_{(s_0,g) \sim P_{X}^{\mathcal{T}} \atop s \sim d_{\pi_{E}}(s|s_0,g)}\left[ D_{\mathrm{TV}}\big(\pi(\cdot|s,g), \pi_E(\cdot|s,g) \big)\right], \\
&\varepsilon^{\mathcal{S}}(\pi_{E}, \pi)=\mathbb{E}_{(s_0,g) \sim P_{X}^{\mathcal{S}}\atop s \sim d_{\pi_{E}}(s|s_0,g)}\left[ D_{\mathrm{TV}}\big(\pi(\cdot|s,g), \pi_E(\cdot|s,g) \big)\right].
    \end{aligned}
\end{equation*}
    
    we have 
    \begin{equation*}
         \mathrm{SubOpt}(\pi_E, \pi) 
             \leq
            \frac{2 R_{\max}}{(1-\gamma)^2} \mathbb{E}_{(s_0,g) \sim P_{X}^{\mathcal{T}}\atop s \sim d_{\pi_{E}}(s|s_0,g)}\left[ D_{\mathrm{TV}}\big(\pi(\cdot|s,g), \pi_E(\cdot|s,g) \big) \right] = \frac{2 R_{\max}}{(1-\gamma)^2} \varepsilon^{\mathcal{T}}(\pi_{E}, \pi)
    \end{equation*}
Regarding $\varepsilon^{\mathcal{T}}(\pi_{E}, \pi)$, we use Lemma \ref{lemma:triangle} and Lemma \ref{lemma:basic_policy_gap_bound} to obtain an upper bound:
\begin{equation*}
    \varepsilon^{\mathcal{T}}(\pi_{E}, \pi) \leq \varepsilon^{\mathcal{S}}(\pi_{E}, \pi) + d_1(\mathcal{T}, \mathcal{S}) \leq \varepsilon^{\mathcal{S}}(\hat \pi_{E}, \pi) + \varepsilon^{\mathcal{S}}(\pi_{E}, \hat \pi_E) + d_1(\mathcal{T}, \mathcal{S}) 
\end{equation*}
When we use finite sample to estimate $\varepsilon^{\mathcal{S}}(\hat \pi_{E},  \pi)$, the true loss can be bounded with Lemma \ref{lemma:finite_erm}. Note that the $D_{\mathrm{TV}}$ can be bounded in $[0,1]$.
Therefore, we can complete the proof. With probability at least $1-\delta$,
\begin{equation*}
    \begin{aligned}
        \mathrm{SubOpt}(\pi_E, \pi) \leq  \frac{2R_{max}}{(1-\gamma)^2} \bigg[  \hat \varepsilon^{\mathcal{S}}(\hat \pi_{E}, \pi)
        + \varepsilon^{\mathcal{S}}(\hat \pi_{E}, \pi_E) 
        + d_1(\mathcal{T}, \mathcal{S}) + \sqrt{\frac{\log 2|\Pi|+\log \frac{1}{\delta}}{2m}}  \bigg]
    \end{aligned}
\end{equation*}
\end{proof}

\subsection{Proof of Theorem \ref{tm:distribution_shift}}
\label{ap:proof_distribution}
\begin{proof} 
\textbf{Step 1}. First we explicitly show 
\begin{align}
    \label{eqn:uniform_value}
    \sup_{Z \in \mathcal{Z}} d_1 (Z, \bar S) = 2(1 - \frac{1}{C|\mathcal{X}|}).
\end{align} 
On one side, simple algebra shows that the following distribution $S' \in \mathcal{S}$ will induce the distance shown in Eq~\eqref{eqn:uniform_value}:
$$P_{S'}(x) = 
\begin{cases}
C, \mbox{ if } x \in \mathcal{J},\\
0, \mbox{ otherwise }.
\end{cases}$$
where $|\mathcal{J}| = 1/C < |\mathcal{X}|$. 
On the other hand, we are going to show there is no distribution that can elicit a distance larger than that in Eq~\eqref{eqn:uniform_value}. 
We prove it by contradiction by assuming a distribution $S''$ which has 
\begin{align}
    \label{eqn:contradict_assumption}
     d_1 (S'', \bar S) > 2(1 - \frac{1}{C|\mathcal{X}|}).
\end{align}
Then there exists $\mathcal{J}'' \subset \mathcal{X}$ such that
\begin{align}
    \left|\int_{x \in \mathcal{J}''} \left( P_{S''}(x) -  P_{\bar S}(x) \right) dx \right | > (1 - \frac{1}{C|\mathcal{X}|}).
\end{align}
With out loss of generality, we assume 
\begin{align}
\label{eqn:contradict_larger_0}
    \int_{x \in \mathcal{J}''} \left( P_{S''}(x) -  P_{\bar S}(x) \right) dx > (1 - \frac{1}{C|\mathcal{X}|})
\end{align}
Denote 
$\bar x_{\mathcal{J}''} = \frac{1}{|\mathcal{J}''|} \int_{x \in \mathcal{J}''} P_{S''}(x) dx$. It is clear that $\bar x_{\mathcal{J}''} \leq C$. Then the RHS of Eq~\eqref{eqn:contradict_larger_0} is 
\begin{align}
    |\mathcal{J}''|(\bar x_{\mathcal{J}''} - \frac{1}{|\mathcal{X}|}) = & |\mathcal{J}''|\bar x_{\mathcal{J}''}(1 - \frac{1}{|\mathcal{X}|\bar x_{\mathcal{J}''}} ) \\
    & \leq (1 - \frac{1}{|\mathcal{X}|\bar x_{\mathcal{J}''}} ) | \\
    & \leq (1 - \frac{1}{C|\mathcal{X}|} ),
\end{align}
where the first inequality is due to $|\mathcal{J}''|\bar x_{\mathcal{J}''} \leq 1$ and the second inequality is due to $\bar x_{\mathcal{J}''}\leq C$. Thus we arrive at a contradiction. So Eq~\eqref{eqn:contradict_assumption} does not hold. Putting these together, we show that Eq~\eqref{eqn:uniform_value} holds.

\textbf{Step 2}. We now proceed to show $\forall S \in \mathcal{S}^{-}$,
\begin{align}
    \label{eqn:step_2_target}
    \sup_{Z \in \mathcal{Z}} d_1 (Z, S) > 2(1 - \frac{1}{C|\mathcal{X}|}).
\end{align}
We know that for any $S$ in $\mathcal{S}^{-}$, there exists a subset $\mathcal{J} \subset \mathcal{X}$ such that 
\begin{align}
    \label{eqn:J_condition}
    \int_{x \in \mathcal{J}}P_{S}(x) dx <  |\mathcal{J}| / |\mathcal{X}|
\end{align}
Let $J_M$ be the subset of $\mathcal{X} \slash \mathcal{J}$ which contains the smallest $1/C - |\mathcal{J}|$ points:
\begin{align}
    \mathcal{J}_M := \min_{\mathcal{M} \subset \mathcal{X} \slash \mathcal{J}, |\mathcal{M}| = 1/C - |\mathcal{J}|} \int_{\mathcal{M}} P_S(x) dx.
\end{align}
By the definition of $\mathcal{J}_M$, it is easy to see that  the mean density ratio $\mathcal{X} ~\slash~(\mathcal{J}~\cup ~\mathcal{J}_M)$ is larger than that of $\mathcal{J}_M$, 
\begin{align}
    \label{eqn:small_condition}
    \frac{1}{|\mathcal{X}|-1/C} & \int_{x \in  \mathcal{X} \slash (\mathcal{J} \cup \mathcal{J}_M)}P_S(x) dx \geq \nonumber \\
    & \frac{1}{1/C - |\mathcal{J}|}\int_{x \in   \mathcal{J}_M}P_S(x) dx,
\end{align}
We now proceed to prove (by contradiction) that 
\begin{align}
    \label{eqn:must_C}
    \frac{1}{1/C}\int_{x \in   \mathcal{J}_M \cup \mathcal{J} }P_S(x) dx < 1/|\mathcal{X}|.
\end{align}
We first assume 
\begin{align}
    \label{eqn:J_M_J_contri_ass}
    \frac{1}{1/C}\int_{x \in   \mathcal{J}_M \cup \mathcal{J} }P_S(x) dx \geq  1/|\mathcal{X}|
\end{align}
By Eq~\eqref{eqn:J_condition} and ~\eqref{eqn:J_M_J_contri_ass}, we have
\begin{align}
    \int_{x \in   \mathcal{J}_M}P_S(x) dx > \frac{{1/C - |\mathcal{J}|}}{|\mathcal{X}|},
\end{align}
and further with Eq \eqref{eqn:small_condition} we have
\begin{align}
    \label{eqn:2_condition}
     \int_{x \in  \mathcal{X} \slash (\mathcal{J} \cup \mathcal{J}_M)}P_S(x) dx > \frac{{|\mathcal{X}| - 1/C}}{|\mathcal{X}|}.
\end{align}
Putting Eq \eqref{eqn:2_condition} and Eq \eqref{eqn:J_M_J_contri_ass} together, we have
\begin{align}
    \int_{x \in  \mathcal{X} }P_S(x) dx > 1,
\end{align}
which arrives at a contradiction. So Eq \eqref{eqn:must_C} holds.  
We then construct $Z$ as 
\begin{align}
\label{eqn:construct_Q}
    P_{Z}(x) = 
\begin{cases}
C, \mbox{ if } x \in \mathcal{J} \cup \mathcal{J_M},\\
0, \mbox{ otherwise }.
\end{cases}
\end{align}
With Eq~\eqref{eqn:construct_Q} and Eq~\eqref{eqn:must_C}, we have
\begin{align}
    \int_{x \in \mathcal{J}} \left( P_{Z}(x) -  P_{S}(x) \right) dx > 1 - \frac{1}{C|\mathcal{X}|}.
\end{align}
So we prove Eq \eqref{eqn:step_2_target}. 

Putting Step 1 and 2 together, we finish the proof.
\end{proof}

\section{Offline Datasets and Implementation Details}
\label{ap:data_implementation}

\subsection{Offline Datasets}
For the benchmark tasks, offline datasets are collected by the final online policy trained with HER \cite{andrychowicz2017hindsight}. Additional Gaussian noise with zero mean and $0.2$ standard deviation and random actions with probability 0.3 is used for data collection to increase the diversity, following previous work \citep{yang2022rethinking}. For the FetchSlide task, we only use noise with a standard deviation of 0.1 because the behavior policy is already suboptimal. After data collection, different from \cite{yang2022rethinking}, we need additional data processing to select trajectories whose achieved goals are all in the IID region. The IID region is defined by each task group, which is shown in Table \ref{tab:dataset_info}. A special case is the HandReach task, where we do not divide the dataset due to its high dimensional space and we use different scales of evaluation goals instead. Compared with prior offline GCRL works \cite{yang2022rethinking,ma2022far}, we use relatively smaller datasets to study the OOD generalization problem. Our datasets encompass trajectories of different length, ranging from 200 to 20000, with each trajectory comprising 50 transitions. A comprehensive summary of this information is presented in Table \ref{tab:dataset_info}. The dataset division standard refers to the location requirements of initial states and desired goals for IID tasks (e.g., Right2Right). For OOD tasks, the initial state or the the desired goal are designed to deviate from the IID requirement (e.g., Right2Left, Left2Right, Left2Left).

\begin{table}[ht]
    \centering
    \caption{Information about 9 Task Groups and Datasets.}
     \begin{adjustbox}{max width=\linewidth}
    \begin{tabular}{l|cccccccc}
    \toprule
           Datasets (Task Group) & IID task & OOD task  & Trajectory number &  Size ($M$) & Dataset Division Standard\\
    \midrule
       Reach Left-Right  & Right & Left & 200 & 1.6 & the gripper's y coordinate value $>$ the initial position  \\
       Reach Near-Far  &  Near & Far & 200 & 1.6 & the $l_2$ distance between gripper and the initial position $\leq$ 0.15 \\
       Push Left-Right & Right2Right & Right2Left, Left2Right, Left2Left & 5000 & 67 &  the object's y coordinate value $>$ the initial position\\
       Push Near-Far  & Near2Near & Near2Far, Far2Near, Far2Far &  5000 & 67 & the $l_2$ distance between the object and the initial position $\leq$ 0.15  \\
       Pick Left-Right & Right2Right & Right2Left, Left2Right, Left2Left & 5000 & 67 & the object's y coordinate value $>$ the initial position \\
       Pick Low-High &  Low & High & 5000 &  67 & the object's z coordinate value $<$ 0.6 \\
       Slide Left-Right & Right2Right & Right2Left, Left2Right, Left2Left & 20000 & 266 & the object's y coordinate value $>$ the initial position \\
       Slide Near-Far & Near & Far & 20000 & 266 & the object's x coordinate value $\leq$ 0.14 \\
       HandReach Near-Far  & Near & Far & 10000 & 429 & the range of meeting position for two fingers \\
        \bottomrule
    \end{tabular}
    \end{adjustbox}
    \label{tab:dataset_info}
\end{table}

\begin{table}[ht]
    \centering
    \caption{$w$ and $\tau$ used for GOAT and GOAT($\tau$).}
     \begin{adjustbox}{max width=\linewidth}
    \begin{tabular}{l|cccccccc}
    \toprule
           Task Group & GOAT & GOTA($\tau$)  \\
    \midrule
       Reach Left-Right  & $w=1.5$ & $w=2.5, \tau=0.3$   \\
       Reach Near-Far  &  $w=2.0$ & $w=1.5, \tau=0.1$ \\
       Push Left-Right & $w=2.5$ & $w=1.5, \tau=0.1$ \\
       Push Near-Far  & $w=1.5$ & $w=2.5, \tau=0.1$  \\
       Pick Left-Right & $w=1.0$ & $w=2.5, \tau=0.3$ \\
       Pick Low-High &  $w=2.0$ & $w=1.5, \tau=0.3$  \\
       Slide Left-Right & $w=1.5$ & $w=2.5, \tau=0.1$  \\
       Slide Near-Far & $w=2.0$ & $w=1.5, \tau=0.3$ \\
       HandReach Near-Far  & $w=2.5$ & $w=2.0, \tau=0.1$ \\
        \bottomrule
    \end{tabular}
    \end{adjustbox}
    \label{tab:hyperpara_table}
\end{table}

\subsection{Implementation Details}
\label{ap:implementation_details}
\paragraph{Implementations} Following \cite{yang2022rethinking,ma2022far}, value functions and policy networks (along with their target networks) are all 3-layer MLPs with 256-unit layers and relu activations. We use a batch size of 512, a discount factor of $\gamma=0.98$, and an Adam optimizer with learning rate $5\times 10^{-4}$ for all algorithms. We also normalize the observations and goals with estimated mean and standard deviation. The relabel probability $p_{relabel}=1$ for most environments except for Slide Left-Right and Slide Near-Far, where $p_{relabel}=$ 0.2 and 0.5, respectively. In \textbf{EAW}, the ratio $\beta$ is set to $2$ and EAW is clipped into range $(0, M]$ for numerical stability, where $M$ is set to $10$ in our experiments. For \textbf{DSW}, we utilize a First-In-First-Out (FIFO) queue $B_a$ of size $5\times 10^4$ to store recent calculated advantage values, and the percentile threshold $\alpha$ gradually increases from $0$ to $\alpha_{max}$. We use $\alpha_{max}=80$ for all tasks except HandReach and Slide Left-Right, and $\alpha_{max}=50$ for HandReach, $\alpha_{max}=0$ for Slide Left-Right. When $A(s,a,g') < c$ and $c$ is the $\alpha$ quantile value of $B_a$, we set $\epsilon(A(s, a, g'))=0.05$ instead of $0$ following \cite{yang2022rethinking}. For the uncertainty weight (\textbf{UW}), we use $N=5$ ensemble $Q$ networks to calculate the standard deviation $\Std(s,g)$ and maintain another FIFO queue $B_{std}$ to store recent $\Std(s,g)$ values. The $\Std(s,g)$ values are then normalized to $[0,1]$ with the maximum and minimum values in $B_{std}$. Besides, $w_{min}$ is set to 0.5 and $w$ is searched from $\{1, 1.5, 2, 2.5\}$. In our experiments, we still find $w$ is unstable for different tasks. It is therefore necessary to develop a more stable uncertainty weight estimation method with less hyperparameter tuning in the future. Regarding the expectile regression (\textbf{ER}), we search $\tau\in\{0.1,0.3\}$ because empirical results in Appendix \ref{ap:expectile_regression} shows that $\tau\in\{0.1,0.3\}$ performs the best. The hyperparameters $w$ and $\tau$ for GOAT and GOAT($\tau$) are listed in Table \ref{tab:hyperpara_table}.

\paragraph{Baseline Descriptions}
In our experiments, all the baselines share the same policy and value network structures, as well as hyperparameters. As regards to WGCSL \cite{yang2022rethinking} and GoFAR \cite{ma2022far}, we use their official implementations. Denote the original dataset as $D$ and the relabeled dataset as $D_{relabel}$. In the following part, we introduce several baselines used in our paper. 
\begin{itemize}
    \item \textbf{WGCSL}: using relabeled offline samples to maximize
    $$J_{WGCSL}(\pi_{\theta})=\mathbb{E}_{(s_t,a_t,\phi(s_i))\sim D_{relabel}}[w_{t,i} \cdot \log\pi_{\theta}(a_t|s_t,\phi(s_i))]$$
    where $w_{t,i}= \gamma^{i-t} \exp_{clip}(\beta A(s_t,a_t,\phi(s_i))) \cdot \epsilon( A(s_t,a_t,\phi(s_i)))$. In our paper, since we find DRW (i.e., $\gamma^{i-t}$) is not useful for OOD generalization (see Appendix \ref{ap:DRW}), we just set DRW=1. Besides, $\beta$ is set to 2 as GOAT.
    \item \textbf{GoFAR}:  $$J_{\text{GoFAR}}(\pi_{\theta})=\mathbb{E}_{(s_t,a_t,g)\sim D}[\log\pi_{\theta}(a_t|s_t,g) \max(A(s,a,g)+1,0)],$$
    where the advantage function is estimated by discriminator-based rewards. The discriminator $c$ is learned to minimize $\mathbb{E}_{g\sim p(g)}[\mathbb{E}_{p(s;g)}\big[ \log c(s,g)]+\mathbb{E}_{(s,g)\sim D}[\log (1-c(s,g))] \big]$. The value function $V$ is learned to minimize $(1-\gamma) \mathbb{E}_{(s,g)\sim \mu_0,p(g)}[V(s,g)] + \frac{1}{2} \mathbb{E}_{(s,a,g,s')\sim D} [(r(s;g)+\gamma  V(s';g)-V(s;g)+1)^2]$ for $V \geq 0$.
    \item \textbf{BC}: behavior cloning on original offline samples to maximize
    $J_{BC}(\pi)=\mathbb{E}_{(s_t,a_t,g)\sim D}[ \log\pi(a_t|s_t,g)]$.
    \item \textbf{GCSL}: using relabeled offline dataset to maximize
    $J_{GCSL}(\pi_{\theta})=\mathbb{E}_{(s_t,a_t,g')\sim D_{relabel}}[ \log\pi_{\theta}(a_t|s_t,g')]$.
    \item \textbf{MARWIL+HER}: using relabeled offline dataset to maximize
    $$J_{\text{MARWIL+HER}}(\pi_{\theta})=\mathbb{E}_{(s_t,a_t,g')\sim D_{relabel}}[\log\pi_{\theta}(a_t|s_t,g') \exp(\beta A(s_t,a_t,g'))].$$
    We also clip the exponential weight to $(0,10]$ for numerical stability. $\beta$ is set to 2 similar to WGCSL and GOAT.

     \item \textbf{IQL+HER}:
     using expectile regression $L_2^{\tau}$, an additional $V$ network and weighted imitation learning to remove OOD actions from value estimation. Hyperparameters are set the same as WGCSL, MARVIL and GOAT.
     \begin{equation*}
     L_V(\psi)=\mathbb{E}_{(s_t,a_t,g')\sim D_{relabel}}[L_2^{\tau}(Q_{\phi}(s_t,a_t,g')-V_{\psi}(s_t,g'))]
     \end{equation*}
     \begin{equation*}
L_Q(\phi)=\mathbb{E}_{(s_t,a_t, s_{t+1}, g')\sim D_{relabel}}[(r(s_t,a,g')+\gamma V_{\psi}(s_{t+1},g') - Q_{\phi}(s_t,a_t,g'))^2]
     \end{equation*}
     \begin{equation*}
J(\pi_{\theta})=\mathbb{E}_{(s_t,a_t, g')\sim D_{relabel}}[\exp{(\beta Q_{\phi}(s_t,a_t,g')-V_{\psi}(s,g'))}\log \pi_{\theta}(a|s_t,g') ]
     \end{equation*}

    \item \textbf{CQL+HER}: For a fair comparison, we implement CQL on top of DDPG+HER. The objective of CQL+HER is:
    $$
    J_{\text{CQL+HER}}(\pi_{\theta})=\mathbb{E}_{(s_t,g')\sim D_{relabel}}[Q(s_t, \pi_{\theta}(s_t,g'),g')]$$
    The $Q$ function of CQL+HER is learned by minimizing the following loss:
    \begin{equation*}
    \begin{aligned}
        L_{\text{CQL+HER}} = \mathbb{E}_{(s_t,a_t, s_{t+1},g')\sim D_{relabel}} \big[(Q(s_t,a_t,g') - \mathcal{B}^{\pi}Q(s_t,a_t,g'))^2 \big] + \alpha \mathbb{E}_{(s_t,g')\sim D_{relabel}, a\sim \exp{(Q)}} \big[Q(s_t,a,g'] \big] 
    \end{aligned}
    \end{equation*}
    where $\mathcal{B}^{\pi}$ is the Bellman operator. $\alpha$ is the ratio to balance the CQL loss and the TD loss. Another baseline DDPG+HER is exactly $\alpha=0$.

    \item \textbf{MSG+HER}: We implement MSG based on ensemble DDPG with $N=5$ independent $Q$ networks and an LCB objective. Each $Q$ network learns to minimize the TD loss in Eq \eqref{eq:td_loss}. The policy learns to maximize $$J_{MSG+HER}(\pi_{\theta})=\mathbb{E}_{(s_t,g')\sim D_{relabel}, a\sim \pi_{\theta}(s_t,g')}[\frac{1}{N} \sum_{i=1}^N  Q_i(s_t,a,g') - c \cdot \sqrt{ \mathrm{Var}(Q_1(s_t,a,g'),\ldots, Q_N(s_t,a,g'))} ]$$
\end{itemize}

\begin{figure}[t]
    \centering
\includegraphics[width=1\linewidth]{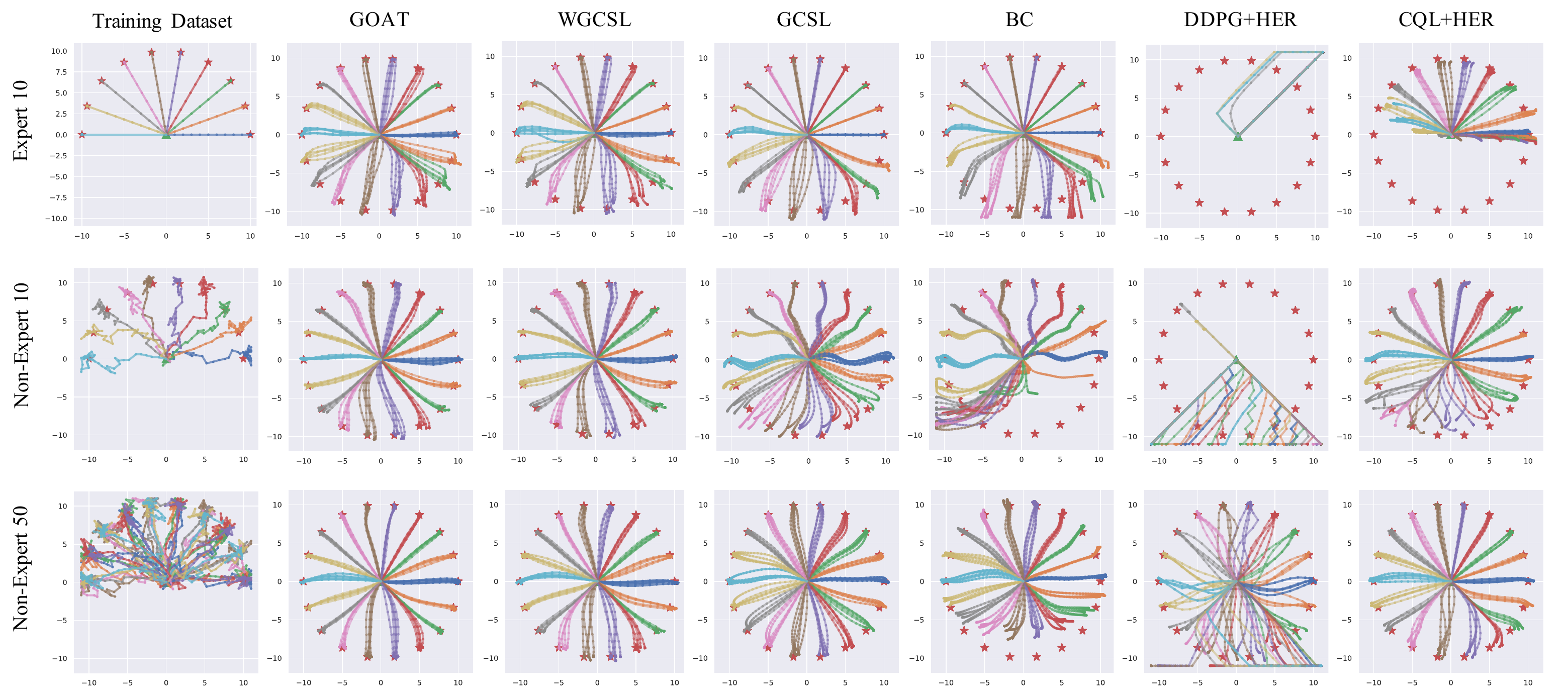}
    \caption{Evaluation of different agents on 2D PointReach task over 5 random seeds. ``Expert 10" refers to the clean expert dataset, ``Non-Expert N" refers to the noisy dataset with $N = 10, 50$ trajectories, respectively. Training trajectories are mainly on the upper semicircle, and the evaluation goals are on the full circle of radius 10.}
    \label{fig:point_all}
\end{figure}

\begin{table}[htb]
    \centering
    \caption{Final average success rates of different agents trained on three types of PointReach datasets. The results are averaged over 5 random seeds.}
    \label{tab:point_success}
    \begin{adjustbox}{max width=\linewidth}
    \begin{tabular}{llccccccc}
    \toprule
   &   &  GOAT(ours) &  WGCSL (tuned) &  WGCSL &  GCSL &  Goal BC &  HER &  CQL+HER \\
    \midrule
  \multirow{2}{*}{Expert 10} &  R10& \textbf{0.86 $\pm$ 0.08} & \textbf{0.94 $\pm$ 0.06} & 0.82 $\pm$ 0.07 & 0.72 $\pm$ 0.09 & 0.67 $\pm$ 0.02 & 0.0 $\pm$ 0.0 & 0.39 $\pm$ 0.05\\
   & R20& \textbf{0.48 $\pm$ 0.21} & 0.30 $\pm$ 0.14 & 0.40 $\pm$ 0.19 & 0.03 $\pm$ 0.06 & \textbf{0.45 $\pm$ 0.10} & 0.0 $\pm$ 0.0 & 0.01 $\pm$ 0.02\\

\midrule
\multirow{2}{*}{Non-Expert 10} & R10& \textbf{0.94 $\pm$ 0.08} & 0.89 $\pm$ 0.12 & \textbf{0.94 $\pm$ 0.1} & 0.66 $\pm$ 0.18 & 0.29 $\pm$ 0.06 & 0.0 $\pm$ 0.0 & 0.52 $\pm$ 0.05\\
& R20& \textbf{0.69 $\pm$ 0.18} & \textbf{0.63 $\pm$ 0.19} & 0.53 $\pm$ 0.1 & 0.12 $\pm$ 0.09 & 0.06 $\pm$ 0.04 & 0.0 $\pm$ 0.0 & 0.16 $\pm$ 0.06\\

\midrule
\multirow{2}{*}{Non-Expert 50} & R10 & \textbf{1.00 $\pm$ 0.00} & \textbf{0.98 $\pm$ 0.04} & 0.96 $\pm$ 0.08 & \textbf{0.98 $\pm$ 0.04} & 0.57 $\pm$ 0.02 & 0.60 $\pm$ 0.30 & 0.84 $\pm$ 0.12\\
 & R20& \textbf{0.92 $\pm$ 0.04} & \textbf{0.91 $\pm$ 0.14} & 0.81 $\pm$ 0.14 & 0.33 $\pm$ 0.12 & 0.10 $\pm$ 0.04 & 0.16 $\pm$ 0.16 & 0.75 $\pm$ 0.08\\
    \bottomrule
    \end{tabular}
    \end{adjustbox}
\end{table}

\section{Additional Experiments}
\label{ap:addtional_exp}
In this section, we include the following experiments:
\begin{itemize}
    \item[1.] 2D PointReach Task;
    \item[2.] Uncertainty and Density;
    \item[3.] Ablation on the Ensemble Size;
    \item[4.] Additional Ablations of GOAT;
    \item[5.] The Effectiveness of Expectile Regression;
    \item[6.] Ablations of Ensemble and HER for Other GCRL Algorithms;
    \item[7.] Ablations of GoFAR;
    \item[8.] Combining Weighted Imitation Learning with Value Underestimation;
    \item[9.] Discounted Relabeling Weight (DRW);
    \item[10.] Adaptive Data Selection Weight;
    \item[11.] MSG+HER with Varying Hyper-parameter;
    \item[12.] Online Fine-tuning;
    \item[13.] Training Time;
    \item[14.] Random Network Distillation as the Uncertainty Measurement;
    \item[15.] Full Benchmark Experiments.
\end{itemize}

\subsection{2D PointReach Task}
\label{ap:2d_reach}
\paragraph{Average Success Rates and Cumulative Returns} In Table \ref{tab:point_success} and Table \ref{tab:point_return}, we provide average success rates and average cumulative returns of different algorithms on ``Expert 10", ``Non-Expert 10", and ``Non-Expert 50" datasets. We also include the performance of our method GOAT for comparison. As demonstrated in the two tables, GOAT achieves the highest average success rates and average returns on all three types of training data, surpassing the strong baseline WGCSL. In addition, we visualize trajectories collected by these agents in Figure \ref{fig:point_all}. The results also show that GOAT performs better than WGCSL with a smaller trajectorie variance on the lower semicircle. Other conclusions keep the same as the didactic example in Section \ref{sec:didactic_example}.

\begin{table}[h]
    \centering
    \caption{Final average returns of different agents trained on three types of PointReach datasets. The results are averaged over 5 random seeds.}
    \label{tab:point_return}
    \begin{adjustbox}{max width=\linewidth}
    \begin{tabular}{llccccccc}
    \toprule
   &   &  GOAT(ours) &  WGCSL (tuned) &  WGCSL &  GCSL &  Goal BC &  HER &  CQL+HER \\
    \midrule
  \multirow{2}{*}{Expert 10} & R10& \textbf{34.69 $\pm$ 2.81} & \textbf{38.37 $\pm$ 2.36} & 33.76 $\pm$ 3.07 & 30.23 $\pm$ 3.22 & 28.48 $\pm$ 0.75 & 0.05 $\pm$ 0.0 & 15.22 $\pm$ 1.16\\
 & R20 & \textbf{15.34 $\pm$ 5.35} & 10.35 $\pm$ 3.96 & 12.99 $\pm$ 5.49 & 1.75 $\pm$ 2.24 & \textbf{14.62 $\pm$ 2.77} & 0.00 $\pm$ 0.0 & 0.14 $\pm$ 0.28\\

\midrule
\multirow{2}{*}{Non-Expert 10} & R10& \textbf{37.57 $\pm$ 3.21} & 35.50 $\pm$ 4.88 & \textbf{37.34 $\pm$ 3.56} & 22.42 $\pm$ 4.75 & 12.62 $\pm$ 1.89 & 0.21 $\pm$ 0.32 & 20.90 $\pm$ 1.06\\
 & R20& \textbf{20.88 $\pm$ 5.84} & \textbf{19.27 $\pm$ 5.52} & 16.19 $\pm$ 2.89 & 2.42 $\pm$ 2.09 & 1.58 $\pm$ 0.75 & 0.00 $\pm$ 0.00 & 3.93 $\pm$ 1.20\\

\midrule
\multirow{2}{*}{Non-Expert 50} & R10& \textbf{39.99 $\pm$ 0.1} & \textbf{39.22 $\pm$ 1.46} & 38.23 $\pm$ 3.29 & 32.84 $\pm$ 1.75 & 18.72 $\pm$ 0.90 & 24.70 $\pm$ 12.41 & 32.50 $\pm$ 4.33\\
 & R20& \textbf{27.89 $\pm$ 1.1} & \textbf{27.34 $\pm$ 4.19} & 23.93 $\pm$ 4.37 & 6.53 $\pm$ 2.41 & 3.65 $\pm$ 0.78 & 5.21 $\pm$ 4.96 & 21.09 $\pm$ 2.53\\
    \bottomrule
    \end{tabular}
    \end{adjustbox}
\end{table}

\begin{table}[h]
    \centering
    \caption{Final average success rates of CQL+HER agents with different hyperparameter $\alpha$. The results are averaged over 5 random seeds.}
    \label{tab:point_CQL_success}
    \begin{adjustbox}{max width=\linewidth}
    \begin{tabular}{llccccc}
    \toprule
  CQL+HER &   &  $\alpha=5$ &  $\alpha=2$ & $\alpha=1$ &   $\alpha=0.1$ &  $\alpha=0.01$  \\
    \midrule
  \multirow{2}{*}{Expert 10} &  R10 & 0.34 $\pm$ 0.07 & \textbf{0.45 $\pm$ 0.03} & \textbf{0.39 $\pm$ 0.05} & 0.36 $\pm$ 0.07 & 0.21 $\pm$ 0.08 \\
   & R20& \textbf{0.09 $\pm$ 0.08} & \textbf{0.08 $\pm$ 0.04} & 0.01 $\pm$ 0.02 & 0.06 $\pm$ 0.06 & 0.06 $\pm$ 0.07 \\

\midrule
\multirow{2}{*}{Non-Expert 10} & R10& 0.28 $\pm$ 0.17 & \textbf{0.53 $\pm$ 0.07} & 0.52 $\pm$ 0.05 & \textbf{0.60 $\pm$ 0.10} & 0.10 $\pm$ 0.15 \\
& R20& 0.07 $\pm$ 0.04 & \textbf{0.19 $\pm$ 0.07} & 0.16 $\pm$ 0.06 & \textbf{0.32 $\pm$ 0.04} & 0.02 $\pm$ 0.04 \\

\midrule
\multirow{2}{*}{Non-Expert 50} & R10 & 0.57 $\pm$ 0.04 & 0.76 $\pm$ 0.05 & \textbf{0.84 $\pm$ 0.12} & \textbf{0.95 $\pm$ 0.10} & 0.77 $\pm$ 0.21 \\
 & R20& 0.21 $\pm$ 0.12 & 0.52 $\pm$ 0.12 & \textbf{0.75 $\pm$ 0.08} & \textbf{0.66 $\pm$ 0.16} & 0.46 $\pm$ 0.24 \\
    \bottomrule
    \end{tabular}
    \end{adjustbox}
\end{table}

\paragraph{Hyper-parameter Tuning for CQL+HER} It is also interesting to check whether tuning the ratio $\alpha$ of the CQL loss can enable better OOD generalization. Specifically, we tune $\alpha$ in $\{0.01, 0.1, 1, 2, 5\}$. The results are shown in Table \ref{tab:point_CQL_success}. The generalization performance of CQL+HER drops as $\alpha$ becomes large, e.g., $\alpha=5$, and as $\alpha$ becomes small, e.g., $\alpha=0.01$. When the data coverage is insufficient (i.e., the ``Expert 10" setting), CQL+HER cannot generalize on the full circle of radius 20 (``R20"), no matter how $\alpha$ is adjusted. The tuned results of CQL+HER are still incomparable to weighted imitation learning methods such as GOAT and WGCSL.

\begin{figure}[h!]
    \centering
    \includegraphics[width=1\linewidth]{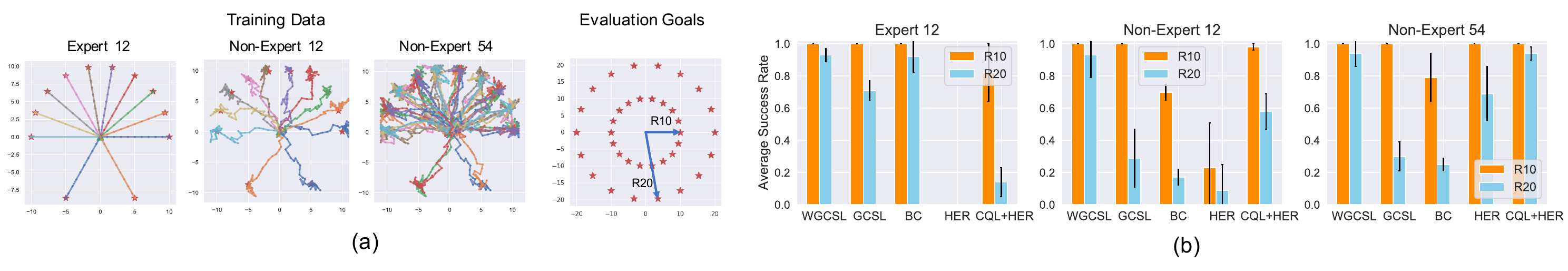}
    \vspace{-10pt}
    \caption{(a) Visualization of three 2D goal-reaching datasets, and two groups of evaluation goals, ``R10" and ``R20", with a  radius of 10 and 20, respectively. (b) Average success rates of different agents over 5 random seeds. }
    \label{fig:imbalance_data_result}
\end{figure}

\paragraph{Additional Tasks} In addition to tasks introduced before, we include another three datasets in Figure \ref{fig:imbalance_data_result} (a), where we have few trajectories on the lower semicircle. As shown in Figure \ref{fig:imbalance_data_result} (b), these tasks are relatively easy compared with those used in our didactic example, achieving higher average success rates. The conclusions are also consistent with Section \ref{sec:didactic_example}.

\begin{figure}[h!]
    \centering
\includegraphics[width=0.55\linewidth]{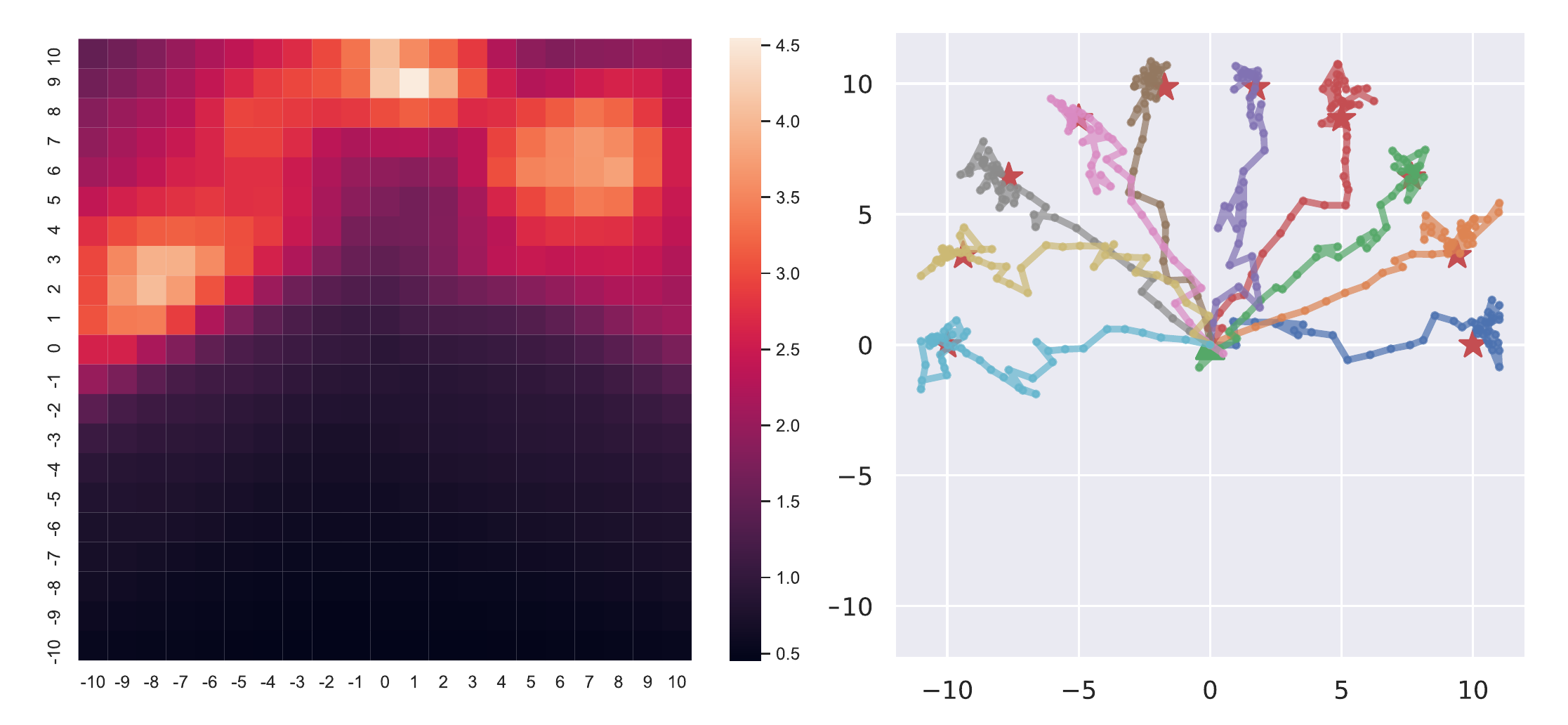}
    \caption{The reciprocal of uncertainty is an estimation of density. The uncertainty is measured by the variance of ensemble value functions for initial state $(0,0)$ and goals on $[-10,10]\times [-10,10]$.}
    \label{fig:density}
\end{figure}

\subsection{Uncertainty and Density}
To verify if the estimated uncertainty can approximate density, we visualize the value of $\frac{1}{\Std(s_0,g)}$ in Figure \ref{fig:density}, where $\Std$ is the standard deviation of a group of 5 value networks. To calculate values on the figure, we set the state $s_0$ as $(0,0)$ and use the value functions of GOAT to calculate each $\Std(s_0,g)$, where $g$ is set on a $[-10,10]\times [-10,10]$ grid with equal intervals of 1. We can observe that positions with more achieved goals (i.e., near the red stars) have larger values $\frac{1}{\Std(s_0,g)}$, which validates the relationship between uncertainty and density.

\begin{table}[ht]
    \centering
    \caption{Ablation on ensemble size for GOAT over all 26 tasks.}
    \begin{tabular}{l|ccccc}
    \toprule
        Success Rate ($\%$) & $N=2$  & $N=3$ & $N=5$ (default) & $N=7$       \\
        \midrule 
        Average IID & 90.4 &  90.4 & 90.3 & 90.6 \\
        Average OOD &   64.0 & 66.7 & 67.9 & 65.3\\
    \bottomrule
    \end{tabular}
    \label{tab:ensemble_size}
\end{table}

\subsection{Ablation on the Ensemble Size}
The ensemble size $N$ of GOAT can affect the value estimation and the uncertainty estimation. As shown in Table \ref{tab:ensemble_size}, though the IID performance is similar, $N=5$ works better than $N\in \{2,3,7\}$ on the average OOD success rate.
Note that with different $N$, the average OOD success rate is still better than that of WGCSL (i.e., 62.1 for OOD tasks).

\begin{table*}[h]
\caption{Additional ablations of GOAT. Average success rates ($\%$) with standard deviation over 5 random seeds.}
\label{tab:success_rate_addition_ablation}
% \vskip 0.15in
\begin{center}
\begin{small}
 \begin{adjustbox}{max width=0.52\linewidth}
\begin{tabular}{llccccccccccc}
\toprule
Task Group & Task &  GOAT & GOAT(V) & GOAT(V+$\chi^2$)  \\
\midrule
\rowcolor{c1!10} \cellcolor{white}  &  Right  &  100.0$\pm$0.0  &   99.9 $\pm$ 0.2  &  100.0 $\pm$ 0.0   \\
\rowcolor{blue!10} \cellcolor{white} Reach Left-Right &  Left &  99.0$\pm$2.0  &  94.5 $\pm$ 7.5  &   96.6 $\pm$ 2.5   \\
  &  Average  & \textbf{99.5}  &  97.2  &  98.3  \\

\midrule
\rowcolor{c1!10} \cellcolor{white}  &  Near &  100.0$\pm$0.0  &  99.6 $\pm$ 0.8  &  100.0 $\pm$ 0.0 \\
\rowcolor{blue!10} \cellcolor{white} Reach Near-Far  & Far  &  97.6$\pm$1.1  &  89.8 $\pm$ 4.5  &  90.6 $\pm$ 1.0  \\
  &  Average  &  \textbf{98.8}  &  94.7 & 95.3  \\

\midrule
\rowcolor{c1!10} \cellcolor{white}  & Right2Right &    95.9$\pm$1.2  &  92.6 $\pm$ 3.4  &  93.9 $\pm$ 1.6   \\
\rowcolor{blue!10} \cellcolor{white} &  Right2Left  & 69.3$\pm$6.6  &  48.9 $\pm$ 5.8  &   52.0 $\pm$ 5.8  \\
\rowcolor{blue!10} \cellcolor{white} Push Left-Right &  Left2Right   &  76.0$\pm$7.4  &  56.2 $\pm$ 5.1  &  56.6 $\pm$ 11.2  \\
\rowcolor{blue!10} \cellcolor{white} &  Left2Left   &   61.1$\pm$7.6  &  39.5 $\pm$ 4.0  &  34.1 $\pm$ 5.1  \\
& Average  &    \textbf{75.6}  &  59.3  &  59.2   \\

\midrule
\rowcolor{c1!10} \cellcolor{white} & Near2Near  &  92.0$\pm$2.6  &   92.2 $\pm$ 2.2  &  87.5 $\pm$ 1.7  \\
\rowcolor{blue!10} \cellcolor{white} & Near2Far   &  70.3$\pm$5.7   &  59.4 $\pm$ 5.1  &  63.8 $\pm$ 3.9   \\
\rowcolor{blue!10} \cellcolor{white} Push Near-Far & Far2Near  &   69.5$\pm$3.6  &  65.0 $\pm$ 3.9  &  63.4 $\pm$ 3.7   \\
\rowcolor{blue!10} \cellcolor{white} & Far2Far  &  50.8$\pm$1.8   &  41.3 $\pm$ 2.2  &  43.5 $\pm$ 4.3   \\
& Average  &  \textbf{70.6}  &  64.5  &  64.6   \\

\midrule
\rowcolor{c1!10} \cellcolor{white} & Right2Right  &  97.3$\pm$1.2  &  82.1 $\pm$ 3.2 &    82.5 $\pm$ 8.3  \\
\rowcolor{blue!10} \cellcolor{white} & Right2Left  &   88.6$\pm$1.1  &  52.7 $\pm$ 9.2  &  51.4 $\pm$ 8.2    \\
\rowcolor{blue!10} \cellcolor{white} Pick Left-Right & Left2Right   &  93.9$\pm$1.9  &  62.4 $\pm$ 5.2  &  64.1 $\pm$ 15.7   \\
\rowcolor{blue!10} \cellcolor{white} & Left2Left   &  88.3$\pm$3.7  &  43.8 $\pm$ 13.4  &  42.6 $\pm$ 9.8  \\
& Average   &  \textbf{92.0}  &  60.3  &  60.2    \\

\midrule
\rowcolor{c1!10} \cellcolor{white}  & Low     &  99.8$\pm$0.2  &  99.1 $\pm$ 0.9  &  98.9 $\pm$ 1.0  \\
\rowcolor{blue!10} \cellcolor{white}  Pick Low-High & High  &  71.9$\pm$6.4  &  17.5 $\pm$ 2.5  &  24.8 $\pm$ 4.9   \\
& Average & \textbf{85.8}  &  58.3  &  61.8  \\

\midrule
\rowcolor{c1!10} \cellcolor{white}  & Right2Right & 79.0$\pm$5.8  &  77.5 $\pm$ 6.7  &  71.8 $\pm$ 5.0  \\
\rowcolor{blue!10} \cellcolor{white}  & Right2Left   &  41.3$\pm$7.1  &  45.7 $\pm$ 6.3  &  40.5 $\pm$ 7.8   \\
\rowcolor{blue!10} \cellcolor{white}  Slide Left-Right & Left2Right  &  59.0$\pm$7.6  &  52.2 $\pm$ 10.9  &  31.1 $\pm$ 7.8   \\
\rowcolor{blue!10} \cellcolor{white}  & Left2Left    &  50.1$\pm$9.5  &  41.9 $\pm$ 8.7  &  30.9 $\pm$ 4.9   \\
& Average  &  \textbf{57.4}  &  54.3  &  43.6 \\

\midrule
\rowcolor{c1!10} \cellcolor{white}  & Near    & 76.9$\pm$3.3  &  73.2 $\pm$ 4.8  &  72.7 $\pm$ 9.0   \\
\rowcolor{blue!10} \cellcolor{white}  Slide Near-Far & Far     & 29.0$\pm$4.5  &  26.4 $\pm$ 4.1  &  32.0 $\pm$ 4.4   \\
& Average  & \textbf{53.0}  &  49.8  &  52.4 \\

\midrule
\rowcolor{c1!10} \cellcolor{white}  & Near  &  71.9$\pm$3.2  &  74.4 $\pm$ 2.9  &  76.8 $\pm$ 2.2    \\
\rowcolor{blue!10} \cellcolor{white} HandReach Near-Far & Far &   38.4$\pm$4.1  &  37.3 $\pm$ 6.5  &  43.1 $\pm$ 4.2  \\
& Average   &  55.2  &  55.9 &  \textbf{60.0}  \\

\midrule
\multirow{2}{*}{Average} &  IID Tasks  & \textbf{90.3} &  87.8 & 87.1  \\
  & OOD Tasks  &  \textbf{67.9}  & 51.4  &  50.7  \\
\bottomrule
\end{tabular}
\end{adjustbox}
\end{small}
\end{center}
% \vskip -0.1in
\end{table*}

\subsection{Additional Ablations of GOAT}
\label{ap:additional_ablations}
Taking into account additional design considerations, we also revisit two design choices, namely the value function and the exponential weight. Specifically, in GOAT, we adopt the approach proposed by WGCSL \cite{yang2022rethinking} to learn Q functions for estimating the advantage value, which is given by $A(s_t,a_t,g)=r(s_t,a_t,g)+\gamma Q(s_{t+1}, \pi(s_{t+1},g) ,g) - Q(s_{t}, \pi(s_{t},g) ,g)$. The difference is that Q values are averaged by an ensemble of Q functions. Alternatively, we can estimate the advantage values by learning V functions $V(s,g)$, which can be expressed as $A(s_t,a_t,g)=r(s_t,a_t,g)+\gamma V(s_{t+1},g) - V(s_t,g)$. Note that the learned action $\pi$ is not needed in the value function learning for $V(s,g)$. We use the notation ``GOAT(V)" to describe this variant of GOAT with V functions. Furthermore, the exponential advantage weight is a special case of maximizing expected value and minimizing the $f$-divergence. Following \cite{ma2022far}, we also consider replacing the exponential weight with $\max (A(s,a,g)+c_{\chi^2}, 0)$, which is derived by considering the $\chi^2$-divergence. We denote GOAT with both V function and $\chi^2$-divergence as ``GOAT(V+$\chi^2$)". After the parameter search, we find $c_{\chi^2}=0$ performs well for GOAT(V+$\chi^2$). 

The final results are reported in Table \ref{tab:success_rate_addition_ablation}. Our results indicate that, on average across the 17 out-of-distribution tasks, the use of a V function in GOAT significantly reduces the OOD generalization performance, with negligible impact from the $\chi^2$-divergence. It is worth noting, however, that for the high-dimensional HandReach task, the best performance is achieved by incorporating both a V function and $\chi^2$-divergence into GOAT. The rationale behind this finding lies in the high-dimensional nature of the state-goal and action spaces in the HandReach task. As a result of this high dimensionality, the multimodal problem is more pronounced, rendering the learning of a V function useful for achieving better stability than the Q function. Moreover, the integration of a weighting function induced by the $\chi^2$-divergence serves to eliminate inferior data and also alleviate the multimodal problems arising in such high-dimensional spaces. Therefore, the choice of value function and weighting function also depends on the task characteristics.

\begin{figure}[ht] 
 \begin{minipage}[c]{0.48\textwidth} 
    \centering
    \caption{Ablation of Expectile Regression}
    \begin{adjustbox}{max width=\linewidth}
    \begin{tabular}{c|cccc}
    \toprule
      Success Rate (\%)   & WGCSL & WGCSL+ER & GOAT& GOAT+ ER \\
      \midrule
      Average  & 71.1 & 73.6  & 75.7  &  77.9 \\
      Average OOD & 62.1 &  65.1   & 67.9 &   70.9 \\
    \bottomrule
    \end{tabular}
    \end{adjustbox}
    \label{tab:iql}
  \end{minipage} 
  \hspace{0.7cm}
  \begin{minipage}[c]{0.43\textwidth} 
    \centering 
\includegraphics[width=1\textwidth]{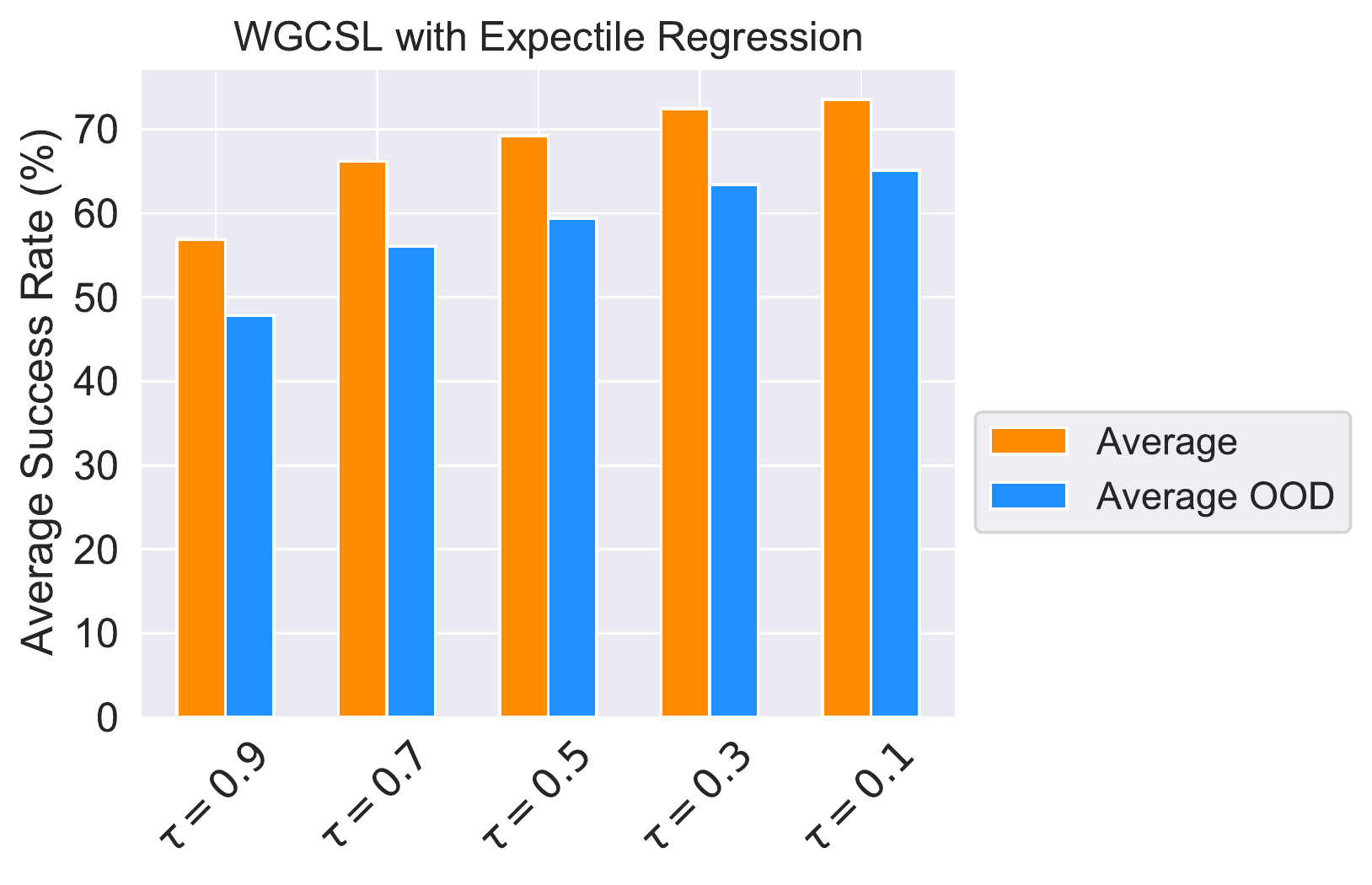} 
    \caption{Comparing differnt $\tau$ in Expectile Regression for Weighted Imitation Learning.} 
    \label{fig:IQL_tau} 
  \end{minipage}% 
\end{figure}

\subsection{The Effectiveness of Expectile Regression}
\label{ap:expectile_regression}
\citet{kostrikov2021offline} proposed IQL to combine expectile regression with weighted imitation learning in offline RL. The difference is that we do not learn additional $V$ function to avoid OOD actions when learning value functions and we validate its effectiveness for improving advantage value estimation of offline GCRL. As shown in Figure \ref{tab:iql}, expectile regression (ER) improves both WGCSL and GOAT by around 3 points on average OOD success rates. Besides, in Figure \ref{fig:IQL_tau} we show that the performance of WGCSL+ER improves with the decrease of $\tau$. We conjecture that the smaller $\tau$ is, the more accurate the relative relationship of the advantage values are, and thus the better the estimation of the expert policy for imitation.

\begin{figure*}[h!]
    \centering
    \subfigure[]{\includegraphics[width=0.3\linewidth,height=120pt]{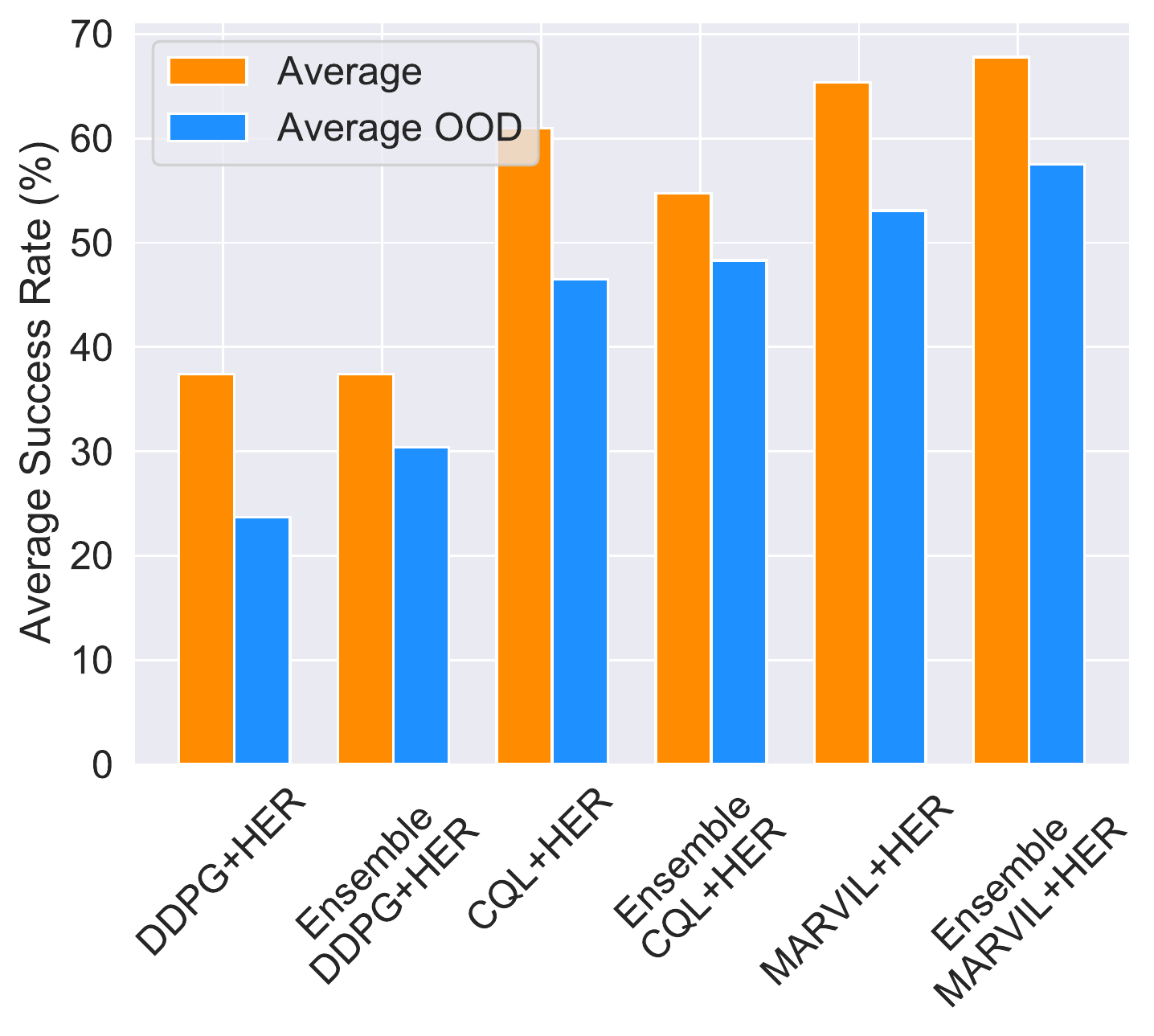}\label{fig:ensemble_compare}}
    \subfigure[]{\includegraphics[width=0.3\linewidth, height=120pt]{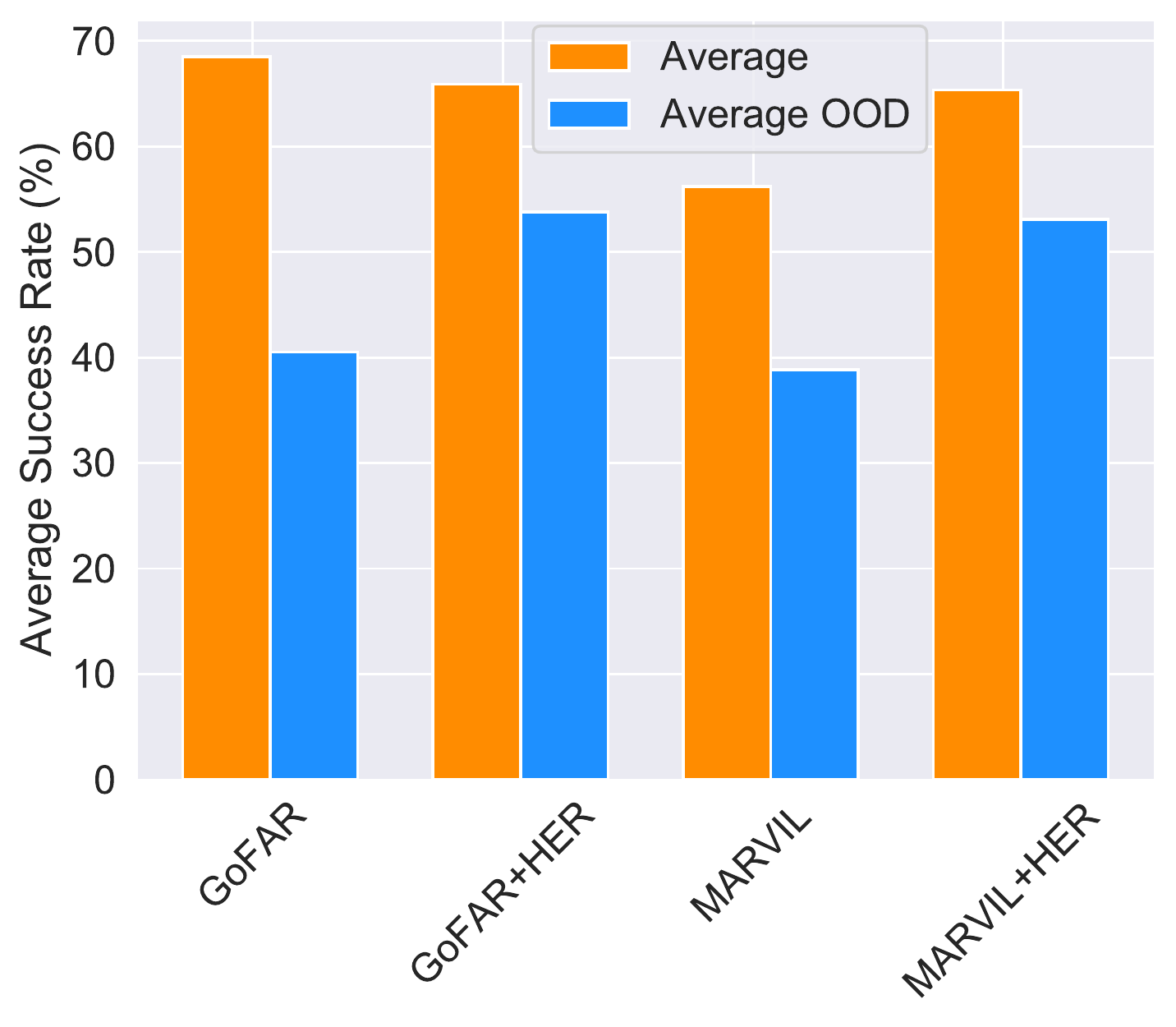}\label{fig:gofar_her}}
    \subfigure[]{\includegraphics[height=120pt]{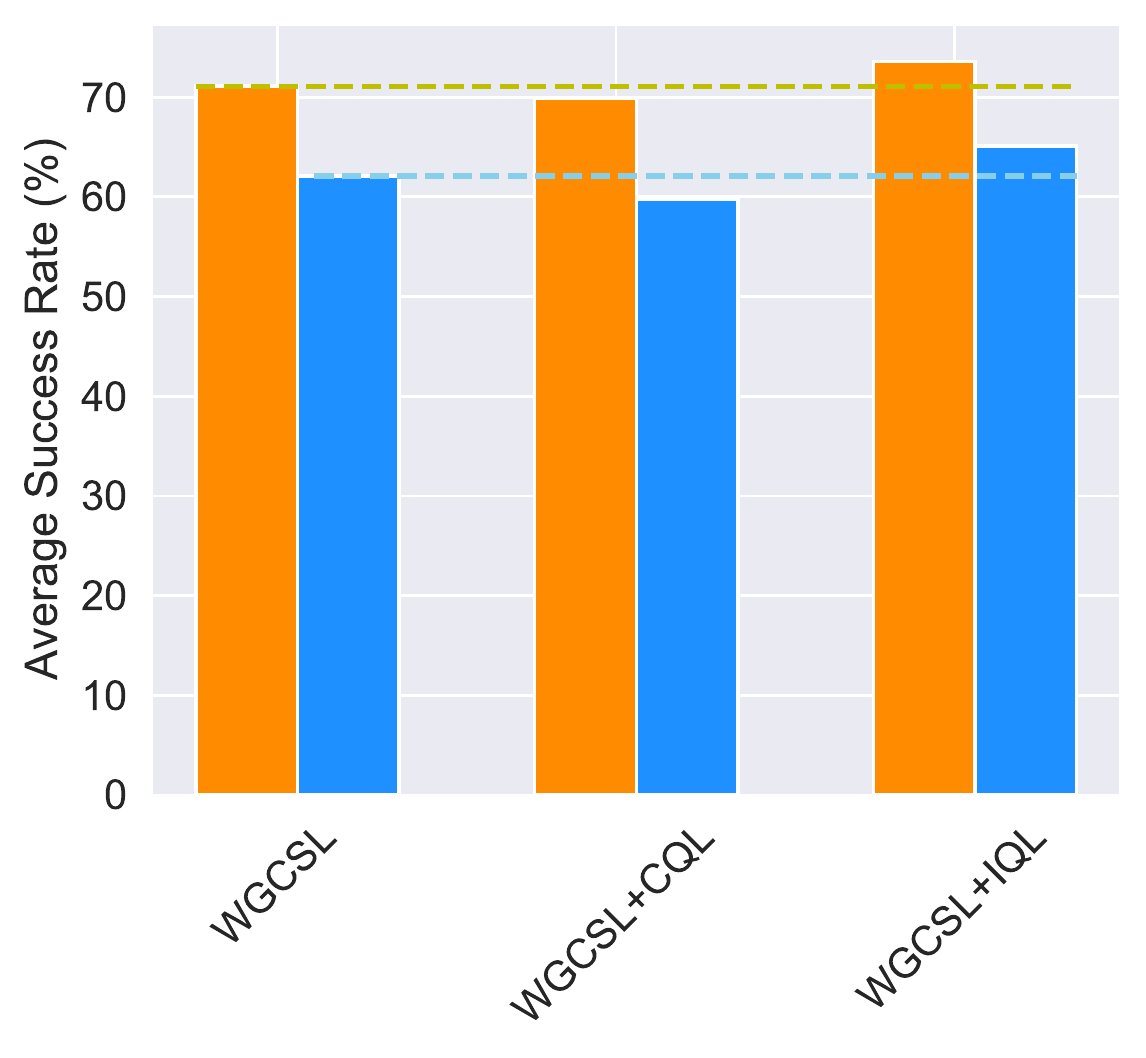}\label{fig:wgcsl+cql}}
    \vspace{-0.3cm}
    \caption{(a) Comparison between methods with and without value function ensemble. (b) Comparison of GoFAR and MARVIL with and without HER. (c) Comparison between WGCSL and WGCSL with value underestimation.}
    \label{fig:ablations_more}
    \vspace{-0.3cm}
\end{figure*}

\subsection{Ablations of Ensemble and HER for Other GCRL Algorithms}
In Figure \ref{fig:ensemble_compare} and Figure \ref{fig:gofar_her}, we demonstrate that ensemble value functions and HER can improve the performance of different RL algorithms, indicating that they are generally useful techniques for OOD generalization of offline GCRL.

\begin{table*}[h!]
\caption{Ablations of GoFAR on the HandReach task. Average success rates ($\%$) with standard deviation over 5 random seeds.}
\label{tab:success_rate_handreach_gofar}
% \vskip 0.15in
\begin{center}
\begin{small}
 \begin{adjustbox}{max width=0.95\linewidth}
\begin{tabular}{llccccccccccc}
\toprule
Task Group & Task &  GoFAR  & GoFAR(binary) & GoFAR(binary+Q) &  GoFAR(binary+exp) & GoFAR(binary+relabel) &  GoFAR(binary+exp+relabel) \\
\midrule
\rowcolor{c1!10} \cellcolor{white}  & Near   & 77.4$\pm$1.7 &  78.9$\pm$3.3  &  4.9$\pm$5.0   &  10.8$\pm$2.7  &  62.0$\pm$7.9 & 57.1$\pm$7.4  \\
\rowcolor{blue!10} \cellcolor{white} HandReach Near-Far & Far &  36.9$\pm$3.1 &  39.2$\pm$4.3   & 3.1$\pm$1.8  &  2.6$\pm$1.4  &  29.3$\pm$8.2  & 24.0$\pm$4.2  \\
& Average  &  57.1 &  59.0  &  4.0  & 6.7  &  45.7   & 40.5   \\
\bottomrule
\end{tabular}
\end{adjustbox}
\end{small}
\end{center}
% \vskip -0.1in
\end{table*}

\subsection{Ablations of GoFAR}
GoFAR \cite{ma2022far} is a recent offline GCRL method which is also based on weighted imitation learning, but we observe its performance drops significantly on OOD tasks. We conjecture the primary reason is that GoFAR does not use goal relabeling. To validate our conjecture, we compare GoFAR and GoFAR+HER in Figure \ref{fig:gofar_her}. The results show that HER does not improve the overall performance of GoFAR but increases the average OOD success rates by a large margin. In Figure \ref{fig:gofar_her}, the performance increment of GoFAR+HER over GoFAR also matches that of MARVIL+HER over MARVIL.

In our benchmark experiments, we observe that GoFAR has an advantage on the high-dimensional HandReach task over WGCSL and GOAT. Given this observation, we investigate and analyze the roles played by the key techniques of GoFAR's design, namely the discriminator-based rewards, the advantage estimation, and the weighting function. We subsequently compare the following variants:
\begin{itemize}
    \item GoFAR: it employs discriminator-based rewards, learns $V(s,g)$ function to estimate the advantage value $A(s_t,a_t,g)=r(s_t,a_t,g)+\gamma V(s_{t+1},g) - V(s_t,g)$, weights the imitation loss with $\max (A(s_t,a_t,g)+1, 0)$, and does not use goal relabeling;
    \item GoFAR(binary): it replaces the discriminator-based rewards with binary rewards $r(s_t,a_t,g)=1[\|\phi(s_t)-g\|_2^2 \leq \epsilon]$;
    \item GoFAR(binary+Q): it uses binary rewards to learn $Q(s,a,g)$ functions instead of $V$ functions, then the advantage value are estimated by $A(s_t,a_t,g)=r(s_t,a_t,g)+\gamma Q(s_{t+1}, \pi(s_{t+1},g) ,g) - Q(s_{t}, \pi(s_{t},g) ,g)$;
    \item GoFAR(binary+exp): it also employs binary rewards to learn $V$ functions, but includes a weighting term of $\exp(A(s,a,g))$ based on the KL divergence;
    \item GoFAR(binary+relabel): it uses binary rewards and goal relabeling for GoFAR;
    \item GoFAR(binary+exp+relabel): it utilizes binary rewards and goal relabeling to learn $V$ functions, and the exponential weighting function $\exp(A(s,a,g))$ for weighted imitation learning.
\end{itemize}
Table \ref{tab:success_rate_handreach_gofar} presents the findings of this study, which suggest that the value function and weighting function are the most crucial components of GoFAR for the high-dimensional HandReach task. ``GoFAR(binary+Q)" and ``GoFAR(binary+exp)" both fail on this task. We posit that this maybe due to the following reasons: (1) with high-dimensional state-goal and action spaces, the Q value function has more dimensions as input than the V function and is therefore less stable and harder to train. Additionally, the learned policy $\pi$ via weighted imitation learning is prone to interpolation into out-of-distribution actions, which causes imprecise advantage value estimation using $Q(s, \pi(s,g),g)$. (2) The weighting function $\max (A(s,a,g)+1, 0)$ also serves the purpose of clearing poor quality data from our weighted imitation learning, whereas the exponential weighting function is more sensitive to the multimodal problem. Furthermore, our results indicate that the discriminator-based rewards play no significant role and may even decrease performance when compared to binary rewards. It is also worth noting that the effectiveness of goal relabeling varies with the type of weighting function. While it diminishes performance for $\max (A(s,a,g)+1, 0)$, it enhances the performance of weighting with $\exp(A(s,a,g))$.

\subsection{Combining Weighted Imitation Learning with Value Underestimation} 
The value function learning has an impact on the estimation of the expert policy for imitation. It is interesting to see whether simply underestimating values for weighted imitation learning is also helpful for OOD generalization. In Figure \ref{fig:wgcsl+cql}, we consider two methods to be on top of WGCSL, CQL \cite{kumar2020conservative}, and IQL \cite{kostrikov2021offline} (specifically, the expectile regression technique). We demonstrate that while CQL is not helpful for WGCSL, while the expectile regression in IQL is a good choice for better value function estimation of offline GCRL.

\subsection{Discounted Relabeling Weight (DRW)} 
\label{ap:DRW}
\citet{yang2022rethinking} introduced the Discounted Relabeling Weight (DRW) for offline GCRL. For a relabeled transition $(s_t,a_t,\phi(s_{i})), i\geq t$, DRW is defined as $\gamma^{i-t}$, which has an effect of optimizing a tighter lower bound for offline GCRL. Intuitively, DRW assigns relatively larger weights on closer relabeling goals with smaller $i$. In Table \ref{tab:gamma}, we find DRW can slightly improve the IID performance but reduce the average OOD performance of WGCSL. It is reasonable because DRW assigns larger weights for closer goals, which contradicts the Uncertainty Weight as discussed in Section \ref{sec:understand_UW} and has the risk of overfitting simpler goals.

\begin{table}[h]
    \centering
    \caption{Comparison of WGCSL and WGCSL+DRW on 26 tasks.}
    \begin{adjustbox}{max width=\linewidth}
    \begin{tabular}{c|cccc}
    \toprule
      Success Rate (\%)   & WGCSL & WGCSL+DRW\\
      \midrule
      Average IID  & 88.1 & 88.4 \\
      Average OOD & 62.1 &  59.5  \\
    \bottomrule
    \end{tabular}
    \end{adjustbox}
    \label{tab:gamma}
\end{table}

\subsection{Adaptive Data Selection Weight} 
\label{ap:adap_DSW}
\citet{yang2022rethinking} introduced the Data Selection Weight (DSW) to tackle the multi-modal problem in multi-goal datasets, which also improves the OOD generalization performance through narrowing the expert estimation gap. However, the introduced approach utilizes a global threshold for all $(s,g)$ pairs. \textbf{Is it helpful to include an adaptive threshold function for different $(s,g)$ pairs?} This can be done using expectile regression for advantage values, i.e., learning a function $f(s,g)$ to estimate the $\beta$ expectile value of the distribution of $A(s,g,a)$.
\begin{equation*}
    \mathcal{L}_{f} = \mathbb{E}_{(s,a,g) \sim D_{relabel}} [L_2^{\beta}(A(s,a,g) - f(s,g))]
\end{equation*}
$\beta$ is the hyper-parameter similar to $\alpha$ in the original DSW controlling the quality of data used for weighted imitation learning. Finally, the adaptive data selection weight is $\epsilon(A(s,a,g))=1[A(s,a,g)\geq f(s,g)]$. We compare WGCSL and WGCSL with adaptive data selection weight (ADSW) on Push and Pick task groups. As shown in Table \ref{tab:success_rate_adaptive}, WGCSL with ADSW only brings slight improvement over global threshold. For most tasks, WGCSL also obtains top two scores with a simpler data selection method. Considering the extra computation of learning the threshold function $f$, we do not include it in GOAT. But we believe it is a good start for future research on how to achieve more efficient adaptive data selection for offline goal-conditioned RL.

\begin{table*}[h]
\caption{Comparison with Adaptive Data Selection Weight (ADSW). Average success rates ($\%$) with standard deviation over 5 random seeds. Top two scores for each task are highlighted.}
\label{tab:success_rate_adaptive}
% \vskip 0.15in
\begin{center}
\begin{small}
 \begin{adjustbox}{max width=0.7\linewidth}
\begin{tabular}{llccccc}
\toprule
Task Group & Task &  WGCSL  & ADSW, $\beta=0.8$ & ADSW, $\beta=0.9$ & ADSW, $\beta=0.95$ \\

\midrule
\rowcolor{c1!10} \cellcolor{white}  & Right2Right &  93.2$\pm$0.9   &    95.1$\pm$1.4  & 95.5$\pm$0.8   &  92.8$\pm$1.9   \\
\rowcolor{blue!10} \cellcolor{white} &  Right2Left  & 63.3$\pm$8.9 &    65.3$\pm$4.7 &  69.5$\pm$6.4   &   62.5$\pm$7.4  \\
\rowcolor{blue!10} \cellcolor{white} Push Left-Right &  Left2Right & 67.6$\pm$7.1  &  74.9$\pm$8.6   &  74.6$\pm$6.0  &  64.3$\pm$13.1  \\
\rowcolor{blue!10} \cellcolor{white} &  Left2Left   & 47.7$\pm$7.4 &    59.8$\pm$3.7 &   66.4$\pm$5.9 &  52.1$\pm$6.9  \\
& Average  &   68.0 &  \textbf{73.8}   &  \textbf{76.5}  &   67.9 \\

\midrule
\rowcolor{c1!10} \cellcolor{white} & Near2Near  &  93.5$\pm$1.0    &    90.7$\pm$2.8 &   93.4$\pm$1.0   &  91.9$\pm$1.2  \\
\rowcolor{blue!10} \cellcolor{white} & Near2Far & 67.0$\pm$5.4    &   68.1$\pm$4.2  &   69.4$\pm$3.8  & 63.4$\pm$5.0 \\
\rowcolor{blue!10} \cellcolor{white} Push Near-Far & Far2Near  &  68.0$\pm$2.4  &  66.1$\pm$2.6   &  67.6$\pm$2.3   & 62.1$\pm$4.0  \\
\rowcolor{blue!10} \cellcolor{white} & Far2Far  & 51.1$\pm$4.7  &  47.4$\pm$3.4   &   53.1$\pm$1.9 & 40.2$\pm$6.1    \\
& Average  & \textbf{69.9}  &  68.1   & \textbf{70.9}   &  64.4   \\

\midrule
\rowcolor{c1!10} \cellcolor{white} & Right2Right  & 93.8$\pm$5.3   & 96.6$\pm$2.6   &  94.4$\pm$4.1   &   96.0$\pm$2.0    \\
\rowcolor{blue!10} \cellcolor{white} & Right2Left  &  89.4$\pm$3.9   & 83.5$\pm$6.4  &  71.1$\pm$9.4    &  78.9$\pm$11.2  \\
\rowcolor{blue!10} \cellcolor{white} Pick Left-Right & Left2Right  & 90.0$\pm$4.1    &  89.8$\pm$4.8  &  91.6$\pm$4.7  & 89.3$\pm$2.8   \\
\rowcolor{blue!10} \cellcolor{white} & Left2Left & 87.0$\pm$5.1   &    83.0$\pm$5.7 & 73.3$\pm$8.6  &  78.6$\pm$8.4   \\
& Average  & \textbf{90.0}   &  \textbf{88.2}  &  82.6  &   85.7   \\

\midrule
\rowcolor{c1!10} \cellcolor{white}  & Low  & 98.6$\pm$1.3     &   99.8$\pm$0.2  & 99.1$\pm$0.6  & 99.4$\pm$0.6  \\
\rowcolor{blue!10} \cellcolor{white}  Pick Low-High & High  & 66.6$\pm$6.6    & 63.4$\pm$5.1  & 66.8$\pm$13.5   &  63.5$\pm$8.6 \\
& Average  & \textbf{82.6} &  81.6  &   \textbf{83.0}  &    81.4  \\

\midrule
\multirow{2}{*}{Average} &  IID Tasks & 94.8 &  \textbf{95.6} &  \textbf{95.6} & 95.0 \\
  & OOD Tasks  & 69.8 & \textbf{70.1}  & \textbf{70.3}  &  65.5  \\
\bottomrule
\end{tabular}
\end{adjustbox}
\end{small}
\end{center}
% \vskip -0.1in
\end{table*}

\begin{table}[htb]
    \centering
    \caption{Varying hyper-parameter of MSG over all 26 tasks.}
    \begin{tabular}{l|ccccc}
    \toprule
        Success Rate ($\%$) & $c=1$  & $c=3$ & $c=5$  & $c=7$       \\
        \midrule 
        Average IID &  60.1 &  65.4 &  68.5 & 69.9 \\
        Average OOD &  41.0 & 42.4  &   43.1 & 39.4 \\
    \bottomrule
    \end{tabular}
    \label{tab:msg_parameter}
\end{table}

\subsection{MSG+HER with Varying Hyper-parameter}
MSG \cite{ghasemipour2022so} is a recent SOTA ensemble-based offline RL method, which learns a group of independent $Q$ networks and estimates a Lower Confidence Bound (LCB) objective with the standard deviation (std) of $Q$ networks. One important hyper-parameter for MSG+HER is the weight parameter $c$ for the std (see Appendix \ref{ap:implementation_details}). We compare the performance of MSG+HER with varying $c\in \{1,3,5,7\}$ in Table \ref{tab:msg_parameter}. The results demonstrate that $c=5$ achieves the best OOD generalization results. When $c$ is larger than $5$ (i.e., $c=7$), the agent is too conservative to generalize, leading to improvement on IID tasks but decrease on OOD tasks. GOAT still outperforms MSG+HER by a large margin, which also supports that pessimism-based offline RL method can inhibit generalization.

\begin{figure*}[htb]
    \centering
    \includegraphics[width=0.75\linewidth]{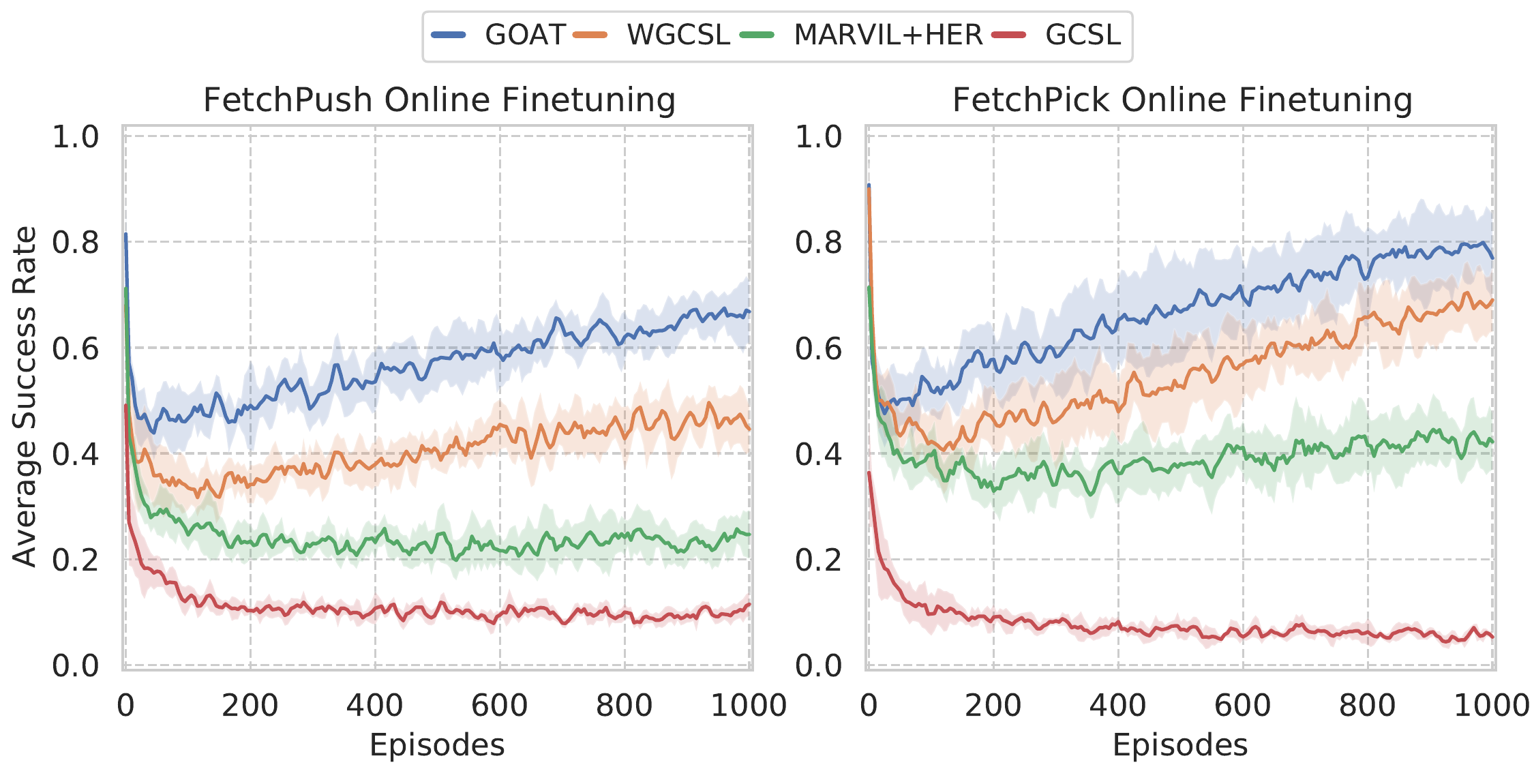}
    \caption{Online fine-tuning of different supervised learning methods in FetchPush Left-Right and FetchPick Left-Right tasks.}
    \label{fig:online_finetune_supervised}
    \vspace{-10pt}
\end{figure*}

\subsection{Online Fine-tuning}
\label{ap:online_finetune}
To understand the effect of the generalization ability of pre-trained agents for online learning, we design an experiment to fine-tune pre-trained agents with online samples. The pre-trained agents are trained on offline datasets with partial coverage (e.g., Right2Right) and evaluated with full coverage goals (Right2Right, Right2Left, Left2Right, Left2Left). In the fine-tuning period, agents explore with additional Gaussian noise (zero mean and 0.2 standard deviation) and random actions (with a probability of 0.3). For all pre-trained agents, we fine-tune the policy and value function for 10 (FetchPick) or 20 (FetchPush) batches after every a trajectory collected. The training batch size is 512, the learning rate is $5\times 10^{-4}$, and the optimizer is Adam. Note that we do not use offline datasets during online fine-tuning, and we fine-tune agents to goals not seen in the pre-training phase, which is different from prior offline-to-online setting \cite{nair2020awac}.

For the first experiment in Figure \ref{fig:online_finetune}, all pre-trained agents are fine-tuned with DDPG+HER \cite{andrychowicz2017hindsight}, which is a general baseline in the online setting. In addition to DDPG+HER, we apply different supervised learning methods, i.e, GOAT, WGCSL, MARVIL+HER, GCSL for online fine-tuning in Figure \ref{fig:online_finetune_supervised}. The fine-tuning algorithms are the same as their pre-training algorithms. Other settings are kept the same as the above fine-tuning experiments. Comparing Figure \ref{fig:online_finetune_supervised} with Figure \ref{fig:online_finetune}, we can also conclude that (1) off-policy method (i.e., DDPG+HER) is more efficient than supervised methods for online fine-tuning, (2) GOAT substantially outperforms other supervised methods such as WGCSL and MARVIL+HER when fine-tuned using their respective pre-training algorithms.

\subsection{Training Time}
We consider the training time as a measure of computational cost in Figure \ref{fig:comparison_time}. For our experiments, we use one single GPU (NVIDIA GeForce RTX 2080 Ti 11 GB) and one cpu core (Intel Xeon W-2245 CPU @ 3.90GHz). Among all the algorithms, BC and GCSL require the least training time due to their simplicity, but they suffer to generalize given non-expert datasets. WGCSL, MARVIL+HER and DDPG+HER need more training time because they are equipped with additional $Q$ networks. Besides, MSG leverages an ensemble of $Q$ networks and averages their gradients to the policy, leading to the longest training time. CQL+HER requires the second longest training time because of the OOD action sampling and the logsumexp approximation procedures. 
Though GoFAR is also a weighted imitation method similar to WGCSL, it is the second slowest method because it uses additional discriminator for reward estimation and it is implemented based on Torch. GOAT introduces ensemble networks, expectile regression and uncertainty estimation on top of WGCSL, thus increasing the computational cost. Thanks to the efficient implementation based on tensorflow, GOAT is still more efficient than CQL+HER, and GoFAR, and requires affordable computational cost.

\begin{figure}
    \centering
    \includegraphics[width=0.45\linewidth]{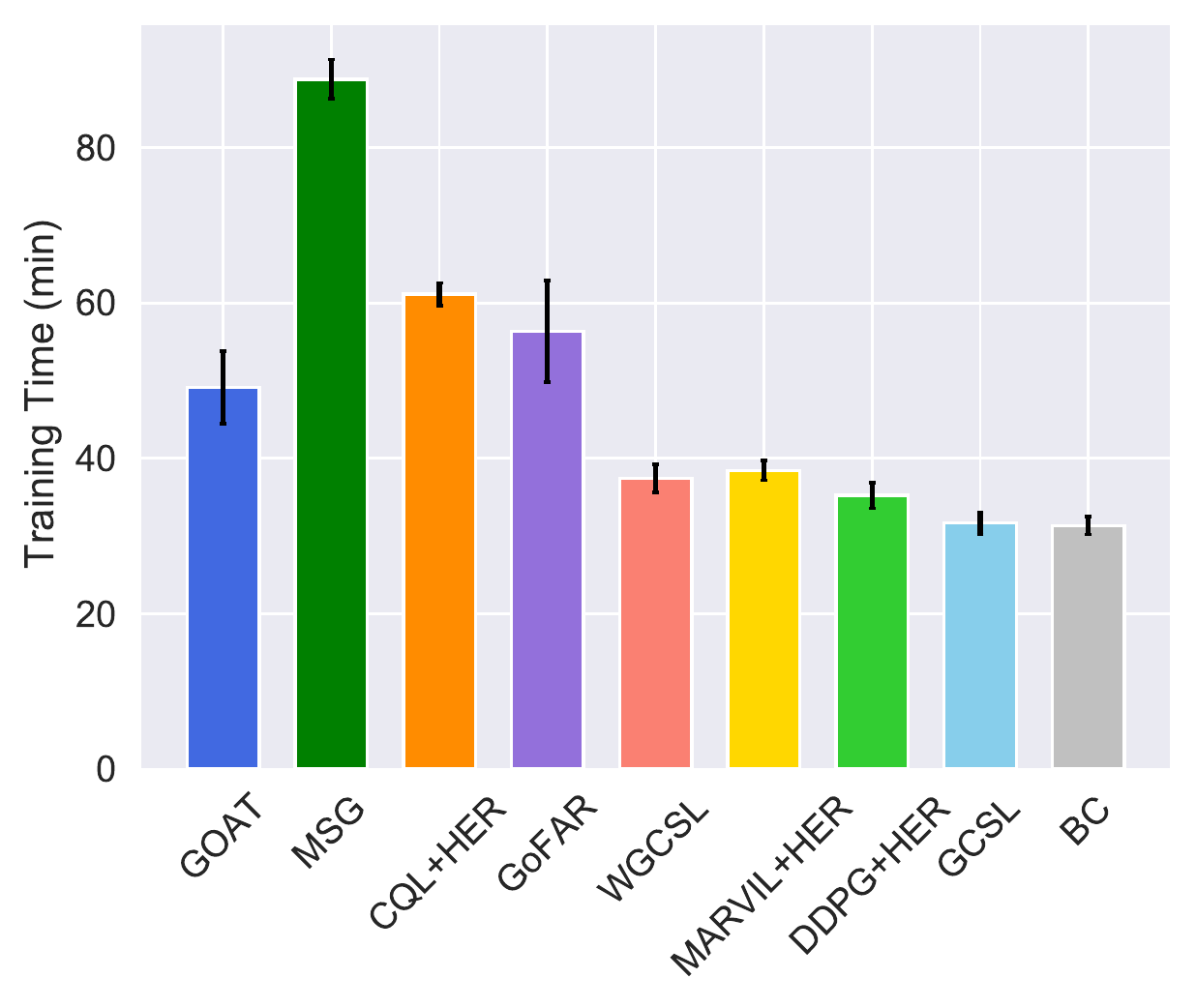}
    \caption{Comparison of training time on FetchPush task.}
    \label{fig:comparison_time}
\end{figure}

\subsection{Random Network Distillation as the Uncertainty Measurement}
The uncertainty weight is an empirical instance to estimate the density under our framework. Random Network Distillation (RND) \cite{burda2018exploration} is a potential alternative for density estimation in the continuous state space. We implement a RND version of GOAT for a comparison and use the same hyperparameter search range. In Table \ref{tab:res_rnd_goat}, GOAT(RND) outperforms WGCSL, but is worse than the version with ensemble Q functions, which may be because naive applications of RND do not yield good results for vector-input tasks \cite{rezaeifar2022offline,nikulin2023anti}.

\begin{table}[h]
\caption{Average success rates ($\%$) with standard deviation over 5 random seeds.}
\label{tab:res_rnd_goat}
% \vskip 0.15in
\begin{center}
\begin{small}
 \begin{adjustbox}{max width=0.95\linewidth}
\begin{tabular}{llccccccccccc}
\toprule
 Tasks &   GOAT & GOAT(RND) &  WGCSL  \\
\midrule
Reach Left-Right    & 99.5  &  \textbf{99.7}   &  98.9\\

Reach Near-Far  & \textbf{98.8} &  95.1  &  94.5  \\

Push Left-Right &  75.6  & \textbf{76.3}    & 68.0    \\

Push Near-Far & \textbf{70.6} &  69.2  &  69.9  \\

Pick Left-Right & \textbf{92.0}  &  91.6  &  90.0   \\

 Pick Low-High & \textbf{85.8}  &  85.1  & 82.6  \\

 Slide Left-Right & \textbf{57.4}  &  53.1  & 48.3  \\

 Slide Near-Far &  \textbf{53.0}  &  49.3    &  45.2  \\

HandReach Near-Far &  \textbf{55.2} &   53.9  &  50.9   \\

\midrule
 Average IID Tasks &  \textbf{90.3} & 89.4 &  88.1\\
 Average OOD Tasks  & \textbf{67.9} & 66.0  & 62.1  \\
\bottomrule
\end{tabular}
\end{adjustbox}
\end{small}
\end{center}
\end{table}

\subsection{Full Benchmark Experiments}
As demonstrated in Table \ref{tab:success_rate_all} and Table \ref{tab:return_all}, we include more baselines (i.e., IQL+HER and MARVIL+HER) and additional measure (i.e., average cumulative return) for the benchmark experiments. GOAT achieves the highest average success rate and average return on the benchmark. Other conclusions are consistent with Section \ref{sec:benchmark_results}.

\begin{table*}[h]
\caption{Average success rates ($\%$) with standard deviation over 5 random seeds. Blue lines and purple lines refer to IID and OOD tasks, respectively. Top two scores for each task are highlighted.}
\label{tab:success_rate_all}
% \vskip 0.15in
\begin{center}
\begin{small}
 \begin{adjustbox}{max width=1\linewidth}
\begin{tabular}{llccccccccccc}
\toprule
Task Group & Task &  GOAT($\tau$)  & GOAT & WGCSL &  GCSL & BC & GoFAR & MARVIL+HER & IQL+HER & DDPG+HER & CQL+HER & MSG+HER  \\
\midrule
\rowcolor{c1!10} \cellcolor{white}  &  Right  & 100.0$\pm$0.0  &  100.0$\pm$0.0  &  100.0$\pm$0.0  &  93.6$\pm$4.3  &  92.0$\pm$3.0  &   100.0$\pm$0.0  &  99.9$\pm$0.2  & 100.0$\pm$0.0  &  99.6$\pm$0.6  &  100.0$\pm$0.0  &  99.4$\pm$0.6  \\
\rowcolor{blue!10} \cellcolor{white} Reach Left-Right &  Left &  99.9$\pm$0.2  & 99.0$\pm$2.0  &  97.8$\pm$4.4  &  36.3$\pm$10.9  &  30.4$\pm$15.2  &  54.2$\pm$9.3   &  75.9$\pm$18.6  &  89.2$\pm$5.1  &  73.8$\pm$27.6  &  94.5$\pm$6.3 &  85.6$\pm$15.7 \\
  &  Average  & \textbf{99.9} & \textbf{99.5}  &  98.9  &  65.0  &  61.2  &  77.1  &  87.9 &  94.6  &  86.7  &  97.2 &  92.5 \\

\midrule
\rowcolor{c1!10} \cellcolor{white}  &  Near & 100.0$\pm$0.0    &  100.0$\pm$0.0  &  100.0$\pm$0.0  &  79.7$\pm$3.0  &  85.3$\pm$4.3  &  100.0$\pm$0.0  &  99.8$\pm$0.4  &  100.0$\pm$0.0  & 95.9$\pm$2.0  &  100.0$\pm$0.0 &  98.6$\pm$2.8  \\
\rowcolor{blue!10} \cellcolor{white} Reach Near-Far  & Far  & 90.9$\pm$1.5   & 97.6$\pm$1.1  &  89.0$\pm$2.1  &  33.5$\pm$5.5  &  37.9$\pm$9.7  &  85.0$\pm$1.9  &  75.4$\pm$3.9  &  84.2$\pm$4.0 &  66.8$\pm$6.9  &  88.0$\pm$2.1 & 77.8$\pm$9.7  \\
  &  Average  &  \textbf{95.4}  & \textbf{98.8}  &  94.5  &  56.6  &  61.6  &  92.5  &  87.6 & 92.1   &  81.4  &  94.0 &  88.2  \\

\midrule
\rowcolor{c1!10} \cellcolor{white}  & Right2Right &  96.2$\pm$1.2   &  95.9$\pm$1.2  &  93.2$\pm$0.9  &  82.1$\pm$3.7  &  78.9$\pm$3.8  &   95.9$\pm$1.4 &  95.0$\pm$1.8  & 97.2$\pm$1.5 &  60.1$\pm$6.0  &  83.3$\pm$2.7  &  92.8$\pm$0.9 \\
\rowcolor{blue!10} \cellcolor{white} &  Right2Left  & 75.6$\pm$3.6 & 69.3$\pm$6.6  &  63.3$\pm$8.9  &  40.1$\pm$6.0  &  25.6$\pm$2.7  &  43.8$\pm$4.7 &  64.9$\pm$8.5  &  67.7$\pm$8.8 & 28.5$\pm$4.3  &  46.2$\pm$7.1 &  52.9$\pm$6.5  \\
\rowcolor{blue!10} \cellcolor{white} Push Left-Right &  Left2Right & 78.8$\pm$6.8   &  76.0$\pm$7.4  &  67.6$\pm$7.1  &  38.8$\pm$6.8  &  33.5$\pm$8.1  &  59.7$\pm$4.3   &  67.1$\pm$2.9  &  71.4$\pm$9.1 & 20.6$\pm$11.5  &  40.4$\pm$12.1 &  59.3$\pm$7.7  \\
\rowcolor{blue!10} \cellcolor{white} &  Left2Left   & 75.6$\pm$12.1 &  61.1$\pm$7.6  &  47.7$\pm$7.4  &  35.4$\pm$6.6  &  20.9$\pm$3.2  &  32.5$\pm$5.8  &  57.9$\pm$4.9  &  58.7$\pm$4.2 & 27.0$\pm$3.8  &  34.9$\pm$5.9  &  38.8$\pm$7.9 \\
& Average  &   \textbf{81.5} &  \textbf{75.6}  &  68.0  &  49.1  &  39.7  &  58.0  &  71.2  & 73.8 &  34.1  &  51.2 & 61.0  \\

\midrule
\rowcolor{c1!10} \cellcolor{white} & Near2Near  &  97.2$\pm$0.7 &  92.0$\pm$2.6  &   93.5$\pm$1.0  &  77.6$\pm$4.7  &  67.5$\pm$3.6  & 92.6$\pm$2.2 &  89.4$\pm$3.0  &  96.0$\pm$0.9  &  39.3$\pm$22.4  &  77.7$\pm$3.9 &  84.7$\pm$6.1 \\
\rowcolor{blue!10} \cellcolor{white} & Near2Far & 78.4$\pm$3.5  &  70.3$\pm$5.7   &  67.0$\pm$5.4  &  43.1$\pm$7.2  &  24.9$\pm$5.9  & 60.9$\pm$3.8  &  42.4$\pm$6.4  &  74.0$\pm$2.6 &  30.5$\pm$12.1  &  60.0$\pm$6.2  &  58.4$\pm$2.1 \\
\rowcolor{blue!10} \cellcolor{white} Push Near-Far & Far2Near  &  70.5$\pm$2.4 & 69.5$\pm$3.6  &  68.0$\pm$2.4  &  47.4$\pm$3.5  &  40.2$\pm$7.5  & 65.0$\pm$4.8 &  57.9$\pm$2.6  &  68.8$\pm$4.1  & 25.0$\pm$12.8  &  61.1$\pm$4.3  & 56.5$\pm$6.0  \\
\rowcolor{blue!10} \cellcolor{white} & Far2Far  & 55.1$\pm$2.4 &  50.8$\pm$1.8   &  51.1$\pm$4.7  &  27.9$\pm$4.1  &  15.3$\pm$2.7  & 41.3$\pm$3.1  & 25.4$\pm$4.9  &  48.8$\pm$4.1 & 18.0$\pm$7.0  &  47.1$\pm$2.4 & 41.7$\pm$5.4 \\
& Average  & \textbf{75.3} & 70.6  &  69.9  &  49.0  &  37.0  &   65.0  &  53.8  & \textbf{71.9} &  28.2  &  61.5 &  60.3 \\

\midrule
\rowcolor{c1!10} \cellcolor{white} & Right2Right  & 96.5$\pm$1.1   &  97.3$\pm$1.2  &  93.8$\pm$5.3  &  53.4$\pm$14.1  &  52.9$\pm$7.5  &   56.9$\pm$4.3  &  91.5$\pm$2.6  & 88.8$\pm$5.4 &  40.4$\pm$13.1  &  91.9$\pm$6.8 &  94.9$\pm$2.2 \\
\rowcolor{blue!10} \cellcolor{white} & Right2Left  &  87.9$\pm$5.1   &  88.6$\pm$1.1  &  89.4$\pm$3.9  &  20.7$\pm$6.9  &  5.6$\pm$2.1  &  9.3$\pm$1.8  &  58.4$\pm$8.0  & 65.2$\pm$11.9 & 52.7$\pm$14.9  &  82.4$\pm$12.6  & 89.3$\pm$6.8  \\
\rowcolor{blue!10} \cellcolor{white} Pick Left-Right & Left2Right  & 91.4$\pm$2.3    &  93.9$\pm$1.9  &  90.0$\pm$4.1  &  47.0$\pm$10.9  &  37.2$\pm$6.4  &  51.1$\pm$6.5  &  85.2$\pm$4.2  &  80.2$\pm$4.0 &  9.8$\pm$5.7  &  86.4$\pm$8.6 &  60.8$\pm$16.5 \\
\rowcolor{blue!10} \cellcolor{white} & Left2Left & 87.6$\pm$5.7   &  88.3$\pm$3.7  &  87.0$\pm$5.1  &  24.7$\pm$7.8  &  3.3$\pm$1.4  &  6.0$\pm$2.0  &  50.7$\pm$8.8  & 60.7$\pm$9.7 & 26.4$\pm$10.9  &  83.5$\pm$9.1 & 66.9$\pm$7.0 \\
& Average  & \textbf{90.8}   &  \textbf{92.0}  &  90.0  &  36.4  &  24.8  &  30.8  &  71.4 & 73.7  &  32.3  &  86.1 & 78.0  \\

\midrule
\rowcolor{c1!10} \cellcolor{white}  & Low  & 99.3$\pm$0.5     &  99.8$\pm$0.2  &  98.6$\pm$1.3  &  84.4$\pm$3.6  &  72.4$\pm$5.4  &  95.2$\pm$1.6  &  98.9$\pm$0.6  &  98.0$\pm$0.5 & 50.4$\pm$23.9  &  100.0$\pm$0.0 &  97.3$\pm$2.2 \\
\rowcolor{blue!10} \cellcolor{white}  Pick Low-High & High  & 78.3$\pm$6.3     &  71.9$\pm$6.4  &  66.6$\pm$6.6  &  28.4$\pm$6.9  &  3.0$\pm$1.6  &  7.6$\pm$3.1  &  64.5$\pm$9.2  & 58.0$\pm$12.0  & 17.0$\pm$10.2  &  44.6$\pm$9.2 & 23.3$\pm$7.8 \\
& Average  & \textbf{88.8} & \textbf{85.8}  &  82.6  &  56.4  &  37.7  & 51.4  &  81.7  & 78.0 &  33.7  &  72.3 & 60.3  \\

\midrule
\rowcolor{c1!10} \cellcolor{white}  & Right2Right & 82.0$\pm$3.2  &  79.0$\pm$5.8  &  70.8$\pm$13.5  &  62.2$\pm$7.0  &  60.3$\pm$4.7  &   62.6$\pm$8.7  &  76.5$\pm$3.1  &  76.7$\pm$3.6 &  4.7$\pm$1.5  &  20.3$\pm$2.5  & 20.8$\pm$5.0 \\
\rowcolor{blue!10} \cellcolor{white}  & Right2Left &  45.1$\pm$8.8     &  41.3$\pm$7.1  &  36.2$\pm$8.6  &  11.5$\pm$2.0  &  15.7$\pm$6.0  &  31.6$\pm$3.9  &  43.5$\pm$6.2  &  43.8$\pm$4.6 & 0.3$\pm$0.4  &  8.6$\pm$3.0 & 7.3$\pm$4.9 \\
\rowcolor{blue!10} \cellcolor{white}  Slide Left-Right & Left2Right &   79.6$\pm$2.7    &  59.0$\pm$7.6  &  50.7$\pm$12.7  &  29.1$\pm$4.8  &  41.8$\pm$7.2  &  51.0$\pm$10.5  &  55.5$\pm$5.7  & 71.6$\pm$3.1  &  0.2$\pm$0.2  &  1.7$\pm$0.7 & 3.6$\pm$4.3 \\
\rowcolor{blue!10} \cellcolor{white}  & Left2Left  & 52.5$\pm$8.3    &  50.1$\pm$9.5  &  35.3$\pm$11.3  &  25.5$\pm$5.4  &  33.7$\pm$10.6  &  28.2$\pm$2.6   &  39.3$\pm$7.5  & 43.9$\pm$3.8 &  2.1$\pm$1.1  &  4.3$\pm$2.5  &  7.1$\pm$3.3 \\
& Average  &  \textbf{64.8}  & 57.4  &  48.3  &  32.1  &  37.9  &  43.4  &  53.7  &  \textbf{59.0} & 1.8  &  8.7 &  9.7 \\

\midrule
\rowcolor{c1!10} \cellcolor{white}  & Near   &  77.4$\pm$4.5 & 76.9$\pm$3.3  &  73.1$\pm$5.8  &  28.0$\pm$7.1  &  26.6$\pm$8.3  &  69.3$\pm$2.8  &  73.7$\pm$6.2  & 80.2$\pm$3.2 &  11.3$\pm$4.5  &  43.5$\pm$3.3 &  28.3$\pm$9.5  \\
\rowcolor{blue!10} \cellcolor{white}  Slide Near-Far & Far     &  25.1$\pm$3.9  & 29.0$\pm$4.5  &  17.4$\pm$3.2  &  0.0$\pm$0.0  &  0.0$\pm$0.0  & 24.1$\pm$2.9  &  10.8$\pm$3.6  &  6.8$\pm$1.3 & 4.4$\pm$3.7  &  7.4$\pm$3.8 & 2.6$\pm$1.4  \\
& Average  &  \textbf{51.2} & \textbf{53.0}  &  45.2  &  14.0  &  13.3  &  46.7  &  42.2  & 43.5  & 7.8  &  25.5 &  15.4 \\

\midrule
\rowcolor{c1!10} \cellcolor{white}  & Near   & 72.6$\pm$5.3&  71.9$\pm$3.2  &  70.0$\pm$3.6  &  0.0$\pm$0.0  &  0.0$\pm$0.0  &  77.4$\pm$1.7  &  72.2$\pm$4.0  & 67.2$\pm$9.8 & 0.0$\pm$0.0  &  1.8$\pm$3.6  & 0.0$\pm$0.0  \\
\rowcolor{blue!10} \cellcolor{white} HandReach Near-Far & Far &  33.1$\pm$4.5&  38.4$\pm$4.1  &  31.8$\pm$3.8  &  0.1$\pm$0.2  &  0.0$\pm$0.0  &  36.9$\pm$3.1   &  28.3$\pm$6.1  & 27.5$\pm$3.8 & 0.0$\pm$0.0  &  0.0$\pm$0.0 & 0.0$\pm$0.0  \\
& Average  &  52.8 &  \textbf{55.2}  &  50.9  &  0.0  &  0.0  &    \textbf{57.1}   & 50.3  & 47.4 &  0.0  &  0.9 & 0.0  \\

\midrule
\multirow{2}{*}{Average} &  IID Tasks & \textbf{91.2} & \textbf{90.3} &  88.1 & 62.3 & 59.5  & 83.3 & 88.5 & 89.3 & 44.6 & 68.7 & 68.5 \\
  & OOD Tasks  & \textbf{70.9} & \textbf{67.9}  & 62.1  &  28.8  & 21.7  & 40.5  &  53.1  & 60.0 &  23.7 & 46.5 & 43.1 \\
\bottomrule
\end{tabular}
\end{adjustbox}
\end{small}
\end{center}
% \vskip -0.1in
\end{table*}

\begin{table*}[ht]
\caption{Average cumulative return with standard deviation over 5 random seeds. Blue lines refer to IID tasks and purple lines indicate OOD tasks. Top two scores for each task are highlighted. }
\label{tab:return_all}
% \vskip 0.15in
\begin{center}
\begin{small}
 \begin{adjustbox}{max width=\linewidth}
\begin{tabular}{llccccccccccc}
\toprule
Task Group & Task & GOAT($\tau$) & GOAT& WGCSL &  GCSL & BC & GoFAR & MARVIL+HER & IQL+HER & DDPG+HER & CQL+HER & MSG+HER \\
\midrule
\rowcolor{c1!10} \cellcolor{white}  &  Right  & 46.5$\pm$0.1  &  46.4$\pm$0.0  &  46.5$\pm$0.0  &  39.8$\pm$1.9  &  40.4$\pm$2.0  &  46.7$\pm$0.2  &  45.1$\pm$0.3  & 45.8$\pm$0.1  & 46.7$\pm$0.3  &  46.1$\pm$0.2 & 46.6$\pm$0.3 \\
\rowcolor{blue!10} \cellcolor{white} Reach Left-Right &  Left  & 46.2$\pm$0.2  &  45.8$\pm$0.9  &  45.1$\pm$1.9  &  16.1$\pm$4.3  &  14.0$\pm$6.5  &  25.5$\pm$4.3  &  33.8$\pm$8.4  & 40.7$\pm$2.5 &  34.8$\pm$12.3  &  42.9$\pm$3.0  & 40.1$\pm$7.3 \\
  &  Average   & \textbf{46.4} &  \textbf{46.1}  &   45.8  &  28.0  &  27.2  &  36.1   &  39.5 & 43.3  &  40.8  &  44.5  &  43.3  \\

\midrule
\rowcolor{c1!10} \cellcolor{white}  &  Near  &  46.7$\pm$0.1  & 46.7$\pm$0.0  &  46.7$\pm$0.1  &  32.6$\pm$1.1  &  35.9$\pm$1.4  &  47.0$\pm$0.2   &  45.4$\pm$0.2  & 45.9$\pm$0.1 & 45.6$\pm$1.2  &  46.4$\pm$0.1 &   46.7$\pm$1.1 \\
\rowcolor{blue!10} \cellcolor{white} Reach Near-Far   &  Far  & 39.4$\pm$0.8   &  43.1$\pm$0.8  &  38.6$\pm$1.0  &  10.5$\pm$1.6  &  14.5$\pm$2.8  &  37.1$\pm$1.0   &  31.2$\pm$1.6  &  35.4$\pm$1.9 & 30.6$\pm$2.7  &  37.8$\pm$1.0 & 35.1$\pm$4.2 \\
  &  Average  & \textbf{43.0} & \textbf{44.9}  &  42.6  &  21.5  &  25.2  &  42.0  &  38.3  &  40.6 & 38.1  &  42.1  & 40.9  \\

\midrule
\rowcolor{c1!10} \cellcolor{white}  & Right2Right  & 39.4$\pm$0.5    &  38.9$\pm$1.0  &  38.0$\pm$0.3  &  28.5$\pm$1.8  &  26.8$\pm$2.2  &  39.1$\pm$0.9   &  37.7$\pm$0.5  &   39.2$\pm$0.5&  25.0$\pm$2.1  &  33.9$\pm$1.5 &  37.8$\pm$0.6 \\
\rowcolor{blue!10} \cellcolor{white} &  Right2Left  & 27.2$\pm$1.4     &  24.9$\pm$2.4  &  22.6$\pm$3.0  &  11.5$\pm$1.8  &  7.5$\pm$1.0  &  14.8$\pm$1.4  &  21.2$\pm$2.5  & 23.0$\pm$3.4 &  10.4$\pm$2.1  &  16.9$\pm$2.6  & 19.6$\pm$2.7 \\
\rowcolor{blue!10} \cellcolor{white} Push Left-Right & Left2Right  & 25.9$\pm$3.0   &  24.3$\pm$2.7  &  21.3$\pm$2.1  &  10.4$\pm$2.0  &  8.9$\pm$2.6  &  18.2$\pm$2.5  &  20.0$\pm$1.6  & 21.7$\pm$3.2 &  6.8$\pm$3.5  &  12.1$\pm$3.8  & 18.8$\pm$3.2  \\
\rowcolor{blue!10} \cellcolor{white} &  Left2Left  & 29.3$\pm$5.0   &  23.0$\pm$2.8  &  18.8$\pm$3.2  &  12.6$\pm$1.7  &  8.5$\pm$1.4  &  12.7$\pm$1.8 &  21.5$\pm$1.8  & 21.4$\pm$1.8 &  11.4$\pm$1.4  &  14.2$\pm$2.3 & 15.8$\pm$2.6  \\
& Average  & \textbf{30.5}  &  \textbf{27.8}  &   25.2  &  15.7  &  12.9  &  21.2 &  25.1  & 26.3 &  13.4  &  19.2 & 23.0  \\

\midrule
\rowcolor{c1!10} \cellcolor{white} & Near2Near  & 37.8$\pm$0.6     &  34.9$\pm$1.3   &  35.9$\pm$0.2  &  25.6$\pm$2.1  &  21.6$\pm$1.4  &  36.1$\pm$0.9   &  35.6$\pm$0.8  & 36.6$\pm$0.7 & 13.9$\pm$8.0  &  29.2$\pm$1.9  & 31.4$\pm$3.0 \\
\rowcolor{blue!10} \cellcolor{white} & Near2Far  & 27.9$\pm$1.2    &  25.3$\pm$2.1  &  24.1$\pm$2.2  &  12.8$\pm$2.4  &  7.2$\pm$1.3  &  20.9$\pm$1.6  &  23.1$\pm$1.3  &  25.5$\pm$1.2 & 11.0$\pm$3.8  &  21.4$\pm$2.4 &  21.0$\pm$0.9 \\
\rowcolor{blue!10} \cellcolor{white} Push Near-Far & Far2Near  &  22.9$\pm$0.9   &  23.0$\pm$1.4 &  22.4$\pm$1.1  &  12.7$\pm$1.5  &  10.2$\pm$1.9  &  21.1$\pm$1.7 &  20.5$\pm$1.1  & 22.1$\pm$1.3  & 7.8$\pm$4.0  &  20.3$\pm$1.5  & 18.1$\pm$1.9 \\
\rowcolor{blue!10} \cellcolor{white} & Far2Far  & 17.3$\pm$0.5  & 16.2$\pm$0.6  &  16.4$\pm$1.4  &  7.5$\pm$1.5  &  4.3$\pm$0.7  &  12.8$\pm$1.3   &  14.2$\pm$0.9  & 14.9$\pm$1.0 & 6.3$\pm$2.2  &  15.3$\pm$1.0  & 13.5$\pm$2.3 \\
& Average  & \textbf{26.5} & \textbf{24.9}  &   24.7  &  14.7  &  10.8  & 22.7  &  23.4  & 24.8 &  9.8  &  21.6 & 21.0  \\

\midrule
\rowcolor{c1!10} \cellcolor{white} & Right2Right  & 36.8$\pm$0.2    &  36.7$\pm$0.6  &  36.1$\pm$1.0  &  18.5$\pm$4.6  &  16.6$\pm$2.6  &  24.0$\pm$2.1  &  34.0$\pm$0.4  & 32.3$\pm$1.3 & 15.2$\pm$4.4  &  36.1$\pm$1.7  & 35.9$\pm$0.8 \\
\rowcolor{blue!10} \cellcolor{white} & Right2Left  & 32.6$\pm$1.8    &  32.3$\pm$0.7  &  32.8$\pm$1.2  &  6.6$\pm$2.5  &  1.3$\pm$0.5  &   3.2$\pm$0.7    &  22.8$\pm$2.0  & 23.4$\pm$2.9  & 19.6$\pm$5.2  &  32.0$\pm$4.0  & 32.7$\pm$2.3 \\
\rowcolor{blue!10} \cellcolor{white} Pick Left-Right & Left2Right   & 32.5$\pm$1.6   &  33.4$\pm$0.6  &  32.6$\pm$0.8  &  14.7$\pm$3.1  &  10.7$\pm$1.3  &   18.7$\pm$2.4  &  28.2$\pm$1.3  &  27.1$\pm$1.4  &  2.9$\pm$1.8  &  30.8$\pm$2.7 & 19.2$\pm$5.3 \\
\rowcolor{blue!10} \cellcolor{white} & Left2Left  & 32.5$\pm$2.3   &  32.3$\pm$1.5  &  31.7$\pm$2.1  &  8.6$\pm$2.6  &  1.2$\pm$0.4  &  2.2$\pm$0.9   &  20.3$\pm$2.4  &  21.9$\pm$3.3 &  8.6$\pm$3.3  &  31.9$\pm$2.4 & 23.1$\pm$2.1 \\
& Average  &  \textbf{33.6} & \textbf{33.7}  &  33.3  &  12.1  &  7.5  &  12.0  &  26.3  &  26.2 & 11.6  &  32.7 &  27.7 \\

\midrule
\rowcolor{c1!10} \cellcolor{white}  & Low    & 40.0$\pm$0.1 & 40.2$\pm$0.2  &  39.8$\pm$0.3  &  24.9$\pm$1.3  &  24.1$\pm$1.6  &  38.8$\pm$0.6   &  38.0$\pm$0.3  &  37.4$\pm$0.3  & 19.4$\pm$9.1  &  40.9$\pm$0.2 & 39.5$\pm$1.0 \\
\rowcolor{blue!10} \cellcolor{white}  Pick Low-High & High  & 28.2$\pm$2.3   &  26.2$\pm$2.4  &  23.9$\pm$2.2  &  8.4$\pm$1.3  &  0.7$\pm$0.3  &  2.5$\pm$0.8  &  21.9$\pm$2.9  &  20.0$\pm$4.4 & 6.0$\pm$3.7  &  16.1$\pm$2.9 & 8.7$\pm$2.8 \\
&Average  & \textbf{34.1}  & \textbf{33.2}  &  31.9  &  16.7  &  12.4  &  20.7  &  30.0  &  28.7  & 12.7  &  28.5 & 24.1  \\

\midrule
\rowcolor{c1!10} \cellcolor{white}  & Right2Right  & 23.6$\pm$2.2   &  23.8$\pm$1.5  &  21.8$\pm$3.5  &  13.8$\pm$1.3  &  13.6$\pm$1.4  &   25.5$\pm$0.8   &  22.2$\pm$2.4  & 26.7$\pm$0.8 & 2.4$\pm$0.9  &  8.6$\pm$1.2  & 9.2$\pm$1.9 \\
\rowcolor{blue!10} \cellcolor{white}  & Right2Left & 12.0$\pm$2.2 & 10.1$\pm$1.8  &  11.0$\pm$3.8  &  2.0$\pm$0.3  &  2.9$\pm$1.2  &  10.1$\pm$1.2   &  10.5$\pm$2.2  &  10.8$\pm$0.8  & 0.1$\pm$0.2  &  3.0$\pm$1.1 & 2.5$\pm$1.2 \\
\rowcolor{blue!10} \cellcolor{white}  Slide Left-Right & Left2Right  &  17.8$\pm$0.6   &  14.4$\pm$1.4  &  12.2$\pm$2.1  &  6.5$\pm$1.5  &  8.6$\pm$1.6  &  14.9$\pm$2.8   &  13.1$\pm$2.0  &  19.3$\pm$1.1 & 0.2$\pm$0.2  &  0.6$\pm$0.2 &  1.1$\pm$0.8 \\
\rowcolor{blue!10} \cellcolor{white}  & Left2Left  & 13.3$\pm$1.4   &  13.3$\pm$2.2  &  10.9$\pm$2.9  &  5.5$\pm$1.0  &  7.8$\pm$2.6  &  9.1$\pm$2.5  &  10.7$\pm$1.1  & 12.8$\pm$0.8 & 1.2$\pm$0.5  &  2.4$\pm$0.8  &  3.3$\pm$1.0  \\
& Average   & \textbf{16.7} &  15.4  &  14.0  &  7.0  &  8.2  &  14.9  &  14.1  & \textbf{17.4} & 1.0  &  3.7 &  4.0  \\

\midrule
\rowcolor{c1!10} \cellcolor{white}  & Near   &  21.3$\pm$1.2  &  22.2$\pm$1.7  &  21.5$\pm$1.6  &  5.2$\pm$1.3  &  5.4$\pm$1.6  &  22.4$\pm$1.3  &  17.9$\pm$2.4  &   21.6$\pm$1.1 & 5.5$\pm$2.0  &  14.8$\pm$2.3 &  9.4$\pm$2.5  \\
\rowcolor{blue!10} \cellcolor{white}  Slide Near-Far & Far    & 5.0$\pm$0.9 &    5.4$\pm$0.8  &  3.5$\pm$1.2  &  0.0$\pm$0.0  &  0.0$\pm$0.0  &  4.4$\pm$0.9&  1.6$\pm$0.5  & 1.0$\pm$0.1 & 1.3$\pm$0.9  &  2.5$\pm$1.4  & 0.7$\pm$0.4 \\
& Average   & 13.2 &  \textbf{13.8}  &  12.5  &  2.6  &  2.7  &  \textbf{13.4}  &  9.8  & 11.3 & 3.4  &  8.6 & 5.1   \\

\midrule
\rowcolor{c1!10} \cellcolor{white}  & Near   & 33.6$\pm$2.0  & 32.9$\pm$1.0  &  32.7$\pm$1.6  &  0.1$\pm$0.0  &  0.1$\pm$0.1  &  36.0$\pm$0.8  &  32.4$\pm$1.8  &  30.4$\pm$4.0  & 0.0$\pm$0.0  &  0.8$\pm$1.6 &  0.2$\pm$0.1 \\
\rowcolor{blue!10} \cellcolor{white} HandReach Near-Far & Far  & 15.0$\pm$1.8 & 17.0$\pm$1.7  &  14.2$\pm$1.4  &  0.0$\pm$0.1  &  0.0$\pm$0.0  &  16.7$\pm$1.3   &  12.1$\pm$2.5  & 11.8$\pm$1.5 & 0.0$\pm$0.0  &  0.0$\pm$0.0  & 0.0$\pm$0.0 \\
& Average   & 24.3 & \textbf{25.0}  &  23.5  &  0.0  &  0.0  &  \textbf{26.3}  &  22.2  & 21.1 &  0.0  &  0.4 & 0.1  \\

\midrule
\multirow{2}{*}{Average}  & IID Tasks & \textbf{36.2} & \textbf{35.9} & 35.4 & 21.0 & 20.5 & 35.1 & 34.3 & 35.1 & 19.3 &  28.5  & 28.5 \\
  & OOD Tasks &  \textbf{25.0} &\textbf{24.1}  & 22.5  &  8.6 & 6.4 & 14.4  & 19.2 & 20.8 & 9.4 & 17.7 & 16.1 \\
\bottomrule
\end{tabular}
\end{adjustbox}
\end{small}
\end{center}
% \vskip -0.1in
\end{table*}

\section{Additional Related Works}

In ML community, there are different types of OOD studied by prior works \cite{nair2020contextual,ma2022vip,han2021learning,hansen2022bisimulation,pitismocoda}, e.g., handling spurious feature, assuming Factored MDPs, or learning generalizable representations for different objects or scenes. Different from these works, our work focus on the OOD goal generalization problem, which is essentially a type of covariate shift. In practical applications, more than one type of OOD is generally involved. The work \cite{hong2022bi} studies a similar goal generalization setting of our work, but it is in the online setting with exploration. Instead, we consider the offline setting, where online interaction is prohibited and commonly used pessimism-based method can inhibit OOD generalization.

%%%%%%%%%%%%%%%%%%%%%%%%%%%%%%%%%%%%%%%%%%%%%%%%%%%%%%%%%%%%%%%%%%%%%%%%%%%%%%%
%%%%%%%%%%%%%%%%%%%%%%%%%%%%%%%%%%%%%%%%%%%%%%%%%%%%%%%%%%%%%%%%%%%%%%%%%%%%%%%

\end{document}